\documentclass[a4paper,12pt]{article}
\usepackage{latexsym,amssymb,enumerate,amsmath,epsfig,amsthm,authblk,dsfont,bm}
\usepackage[margin=1in]{geometry}
\usepackage[dvipsnames]{xcolor}
\usepackage{setspace,color}
\usepackage{tikz}
\usepackage{floatrow}
\usepackage{graphicx,subfig}
\usepackage[ruled]{algorithm2e}
\usepackage{algorithmic}
\usepackage{epstopdf}
\usepackage{grffile}
\usepackage{hyperref}
\hypersetup{
	colorlinks=true,
	linkcolor=blue,
	citecolor=blue,
	filecolor=magenta,
	urlcolor=cyan,
}

\epstopdfsetup{
	suffix=,
}

\newcommand{\bp}{\mathbf{p}}
\newcommand{\bff}{\mathbf{f}}
\newcommand{\RR}{\mathds{R}}

\newcommand{\bu}{\mathbf{u}}
\newcommand{\bv}{\mathbf{v}}
\newcommand{\bx}{\mathbf{x}}
\newcommand{\bq}{\mathbf{q}}
\newcommand{\bn}{\mathbf{n}}
\newcommand{\bD}{\mathbf{D}}
\newcommand{\bmu}{\bm{\mu}}
\newcommand{\bnu}{\bm{\nu}}
\newcommand{\blambda}{\bm{\lambda}}
\newcommand{\bbf}{\mathbf{f}}
\newcommand{\bG}{\mathbf{G}}
\newcommand{\bM}{\mathbf{M}}
\newcommand{\TV}{\mathrm{TV}}
\newcommand{\cF}{\mathcal{F}}
\newcommand{\cS}{\mathcal{S}}
\newcommand{\cI}{\mathcal{I}}
\newcommand{\cH}{\mathcal{H}}
\newcommand{\cL}{\mathcal{L}}

\newcommand{\cA}{\mathcal{A}}
\newcommand{\cE}{\mathcal{E}}
\newcommand{\cR}{\mathcal{R}}

\newcommand{\cof}{\mathrm{cof}}

\newcommand{\diver}{\mathrm{div}}
\newcommand{\Real}{\mathrm{Real}}

\DeclareMathOperator*{\argmin}{arg\,min}
\newtheorem{thm}{Theorem}[section]
\newtheorem{remark}{Remark}[section]

\makeatletter
\pdfstringdefDisableCommands{\let\bm\@firstofone}
\makeatother

\begin{document}
	\title{Elastica Models for Color Image Regularization}
	\date{	\normalsize{\emph{ In memory of Roland Glowinski--a dear friend, mentor, colleague and great leader. }}}
	\author{Hao Liu \thanks{Department of Mathematics, Hong Kong Baptist University, Kowloon Tong, Hong Kong. Email: haoliu@hkbu.edu.hk.}
		, Xue-Cheng Tai \thanks{Hong Kong Center for Cerebro-Cardiovascular Health Engineering (COCHE), Hong Kong Science Park, Hong Kong. Email: xtai@hkcoche.org, xuechengtai@gmail.com.}
		, Ron Kimmel \thanks{Computer Science Department, and Electrical and Computer Engineering Department, Technion, Haifa, Israel. Email: ron@cs.technion.ac.il.}
		, Roland Glowinski \thanks{The author is deceased.  Former address: Department of Mathematics, University of Houston, Honston, TX 77204, USA, and Department of Mathematics, Hong Kong Baptist University, Kowloon Tong, Hong Kong. }
	}
	\maketitle

	\begin{abstract} 
			The choice of a proper regularization measure plays an important role in the field of image processing. 
		One classical approach treats color images as two dimensional surfaces embedded in a five dimensional spatial-chromatic space. 
		In this case, a natural regularization term arises as the image surface area.
		Choosing the chromatic coordinates as dominating over  the spatial ones, the image spatial coordinates could be thought of as a paramterization of the image surface manifold in a three dimensional color space. 
		Minimizing the area of the image manifold leads to the Beltrami flow or mean curvature flow of the image surface in the 3D color space, while minimizing the elastica of the image surface yields an additional interesting regularization. 
		Recently, the authors proposed a color elastica model, which minimizes both the surface area and elastica of the image manifold. 
		In this paper, we propose to modify the color elastica and introduce two new  models for color image regularization. 
		The revised measures are motivated by the relations between the color elastica model, Euler's elastica model and the total variation model for gray level images. 
		Compared to our previous color elastica model, the new models are direct extensions of Euler's elastica model to color images. 
		The proposed models are nonlinear and challenging to minimize. 
		To overcome this difficulty, two operator-splitting methods are suggested. 
		Specifically, nonlinearities are decoupled by introducing new vector- and matrix-valued variables. 
		Then, the minimization problems are converted to solving initial value problems which are time-discretized by operator splitting. 
		Each subproblem, after splitting, either has a closed-form solution or can be solved efficiently. 
		The effectiveness and advantages of the proposed models are demonstrated by comprehensive experiments. 
		The benefits of incorporating the elastica of the image surface as regularization terms compared to common alternatives are empirically validated.
	\end{abstract}

\section{Introduction}
Image regularization is a fundamental topic in image processing, and appears in almost every tasks in this field. 
In past decades, tremendous efforts have been devoted to looking for good image regularizers, while most of which focus on gray-scale images. As one important way to describe the magnificence of the world is through colors, good regularizers for color images is of high demand. 

In the literature, image regularization for gray-scale images has been extensively studied. 
Given a gray-scale image $v$, a well-known regularizer is the total variation (TV) \cite{rudin1992nonlinear}, given by
$
\int_{\Omega} |\nabla v| d\bx,
$
where $\Omega\subset \RR^2$ is a bounded domain and $d\bx=dx_1dx_2$ with $x_1,x_2$ being the coordinates of a point $\bx$ in $\Omega$. 
TV is a first order regularizer since it only depends on the first order partial derivatives of $v$. 
It is known in preserving sharp changes in the gradient of the image. Fast algorithms for TV based image regularization models can be found in \cite{chambolle2011first,chambolle2004algorithm,goldstein2009split}. 
One drawback of the TV model is that it suffers from the staircase effects. 
To overcome this obstacle, high order regularizers were explored. 
One of the most popular regularizers is realized by minimizing Euler's elastica energy
\begin{align}
	\int_{\Omega} \left(a+b\left(\nabla\cdot \frac{\nabla v}{|\nabla v|}\right)^2\right) |\nabla v| d\bx ,
	\label{eq.EL}
\end{align}
where $a,b\geq 0$ are weight parameters. 
For gray-level images, Euler's elastica treats the image as a function and minimizes the length and curvature of each of its level curves. 
Another perspective of (\ref{eq.EL}) is that it penalizes both the TV and the variation of the TV. This observation provides a new perspective on defining the `Sobolev space' for images. From this point of view, TV is a first order model and Euler's elastica is a second order model, since the former and the latter one penalize the first and second variation of gray images, respectively.
Due to the superior performance of (\ref{eq.EL}) in various image processing models, designing efficient algorithms for Euler's elastica based models has been a popular topic. 
Efforts in that direction include augmented Lagrangian multipliers based methods \cite{duan2013fast,tai2011fast,zhang2017fast,liu2018proximal,yashtini2015alternating,duan2014two}, and split Bregmann method \cite{yashtini2016fast}. 
Recently, an operator splitting method was proposed in \cite{deng2019new}. 
Unlike previous methods, this method is insensitive to the choice of parameters and is almost parameters free. 
We refer the readers to \cite{kang2019survey} for a survey on fast algorithms for Euler's elastica based models in image inpainting.

Color images can be thought of as vector-valued signals with $m$ {\it chromatic} channels. 
One simple way to process a color image is to apply gray-scale image regularizers channel by channel. 
However, this way, the interactions between channels are ignored. 
In literature, regularizer models and fast algorithms that treat to color images are limited. 
As a generalization of the scalar TV, \cite{blomgren1998color} proposed the color TV which is the square root of the sum of squared TV of each channel. 
The authors of \cite{tan2018color} proposed a total curvature model in which the color TV in \cite{blomgren1998color} was replaced by the sum of squared level set curvature of each channel. 
In the geometric point of view, inspired by the discussion of tensor gradient for vector-valued images in \cite{di1986note} in which a color image is considered as a two dimensional manifold in $\RR^m$, \cite{sapiro1996anisotropic} proposed an anisotropic diffusion framework and \cite{weickert1999coherence} suggested an edge-enhancing diffusion method. 
Based on the framework of \cite{di1986note}, another generalization of the scalar TV, known as the vectorial TV (VTV), was proposed by \cite{goldluecke2012natural}. 
Relating to the framework proposed in \cite{sapiro1996vector}, VTV penalizes the largest singular value of the Jacobian of the color image (a $m\times 2$ matrix) on each pixel in its domain.
Efficient algorithms for this family of regularizers are studied in \cite{bresson2008fast,duval2009projected}. Algorithms dedicated to color image enchancement are studied in \cite{batard2018geometric,batard2020variational,nikolova2014fast,naik2003hue,pierre2017variational,trahanias1992color}.

Another geometry based regularizer for color images is the Beltrami framework, which was proposed in  \cite{kimmel1998natural,sochen1998general} and further investigated in \cite{kimmel2000images,spira2007short,wang2013fidelity,roussos2010tensor,wetzler2011efficient}. In this framework, a color image is considered as a two-dimensional manifold embedded in the $m+2$ dimensional space-feature space. 
The Beltrami framework minimizes the Polyakov action \cite{polyakov1981quantum}, which is a functional that measures the surface area of the surface. 
Its first variation gradient flow leads to a Beltrami flow. 
It was shown that at its limit, the Polyakov action reduces to TV model for gray-scale images. 
Fast algorithms for the Beltrami framework have been developed in \cite{bar2007deblurring, rosman2009efficient,rosman2011semi, rosman2010polyakov,zosso2014primal}.

Most of the aforementioned color image regularizers are first order, which may not be rich enough to capture image properties. 
Recently, based on the Beltrami framework, the authors have proposed a second order regularizer, the color elastica model \cite{liu2021color}. 
The color elastica model is an extension of the Beltrami framework and penalizes both the Polyakov action and the Beltrami flow, the second term being the norm of the Laplace-Beltrami operator acting on the image coordinates. 
In the limit, it would be nothing but the square of mean curvature of the color image surface embedded in the $\mathbb{R}^3$ chromatic space.
In our setting, the color elastica model involves a parameter $\alpha$, which controls the weight between spatial coordinates and feature (chromatic or color) coordinates. 
The color elastica model generalizes (\ref{eq.EL}) to color images, as for gray-scale images it reduces to a weighted Euler's elastica model by letting $\alpha$ go to zero. 
However, the weight in the reduced model is $1/|\nabla v|$, which may be challenging to handle when $\nabla v$ vanishes. 
The second drawback is that the algorithm proposed in \cite{liu2021color} converges very slowly when $\alpha$ is small.

In this article, we propose two modified color elastica models, which are more natural extensions of (\ref{eq.EL}) for color images. In the first model, we add a weight to the Laplace-Beltrami term so that the model exactly reduces to (\ref{eq.EL}) as $\alpha\rightarrow0$. 
The second model is based on the first one and the relation between TV and surface area of gray-scale images. 
The second model boils down directly to (\ref{eq.EL}) for gray-scale images. 
The proposed models contain nonlinear functionals that are difficult to minimize. 
Two operator-splitting methods are designed which solve the proposed models efficiently. 
Operator-splitting methods are known for decomposing complicated problems into several easy-to-solve sub-problems and have been applied in numerical PDEs \cite{glowinski2019finite,buttazzo2020numerical,liu2019finite}, inverse problems \cite{glowinski2015penalization}, obstacle problem \cite{liu2022fast}, fluid-structure interactions \cite{bukavc2013fluid} and recently in image processing \cite{deng2019new,liu2021color,he2020curvature,duan2022fast}. 
In our proposed algorithms, we decouple the nonlinearity by introducing new vector- and matrix-valued variables. 
Then, minimizing the functionals is converted to solving initial-value problems until a steady state is reached. 
The initial-value problems are time-discretized by operator splitting methods such that each sub-problem either has a closed form solution or can be solved efficiently.

This article is structured as follows: We provide motivation and formulations of the proposed models in Section \ref{sec.formulation}. 
In Section \ref{sec.splitting}, we present our operator-splitting schemes and discuss the solution to each subproblem. 
The proposed operator-splitting methods are space discretized in Section \ref{sec.dis}. 
We empirically justify the proposed models in Section \ref{sec.justification}.
We demonstrate the efficiency and performance of the proposed algorithms and models in Section \ref{sec.experiments} by comprehensive numerical experiments. 
This article is concluded in Section \ref{sec.conclusion}.

\section{Problem formulation}
\label{sec.formulation}
\subsection{Motivation towards the proposed models}
In image processing, the image surface area \cite{sochen1998general,yezzi1998modified,liu2019surface} and total variation \cite{rudin1992nonlinear} are two popular regularizers for image restoration. 
We motivate our construction by reviewing the links between these models.
Let $\Omega$ be a rectangular domain with coordinates $x_1,x_2$. 
Any gray-scale image $f$ can be considered as a two-dimensional surface embedded in the three-dimensional space-feature space, $F(x_1,x_2)=(\sqrt{\alpha}x_1,\sqrt{\alpha}x_2,f(x_1,x_2))$, where $\alpha$ is a parameter controlling the weight of spatial coordinates. 
Under such a parameterization, the metric on the image manifold is $g=\det(\bG)$, where
\begin{align}
	\bG\,=\,\begin{pmatrix}
		\alpha +(\partial_1f)^2 & \partial_1f\partial_2f\cr
		\partial_1f\partial_2f & \alpha+ (\partial_2f)^2
	\end{pmatrix}
	\label{eq.G}
\end{align}
with  $\partial_1 f=\partial f/\partial{x_1},\,  \partial_2 f=\partial f/\partial{x_2}$.
From this metric, the surface area of $f$ can be computed as
\begin{align}
	S(f)\,=\,\int_{\Omega} \sqrt{g}d\bx=\int_{\Omega} \sqrt{\alpha^2+\alpha[(\partial_1f)^2+(\partial_2f)^2]}d\bx.
	\label{eq.Polykav.gray}
\end{align}
At the other end, the total variation of $f$ is given by
\begin{align}
	\TV(f)=	\int_{\Omega} \sqrt{(\partial_1f)^2+(\partial_2f)^2}d\bx.
	\label{eq.TV}
\end{align}
Comparing the right-hand side of  (\ref{eq.Polykav.gray}) and (\ref{eq.TV}), we observe that $\TV(f)$ can be recovered by replacing $g$ in $S(f)$ by $g-\alpha^2$,
\begin{align}
	\int_{\Omega} \sqrt{g-\alpha^2}d\bx=\sqrt{\alpha} \TV(f).
	\label{eq.TVRelation}
\end{align}
This observation would guide us in our exploration of a modified color elastica model. 

Recall that the color ealstica model proposed in \cite{liu2021color} takes an RGB image as a two dimensional surface embedded in the five dimensional space-feature space. 
The model is given as
\begin{equation}
	\min_{\bv\in (\cH^2(\Omega))^3}\int_{\Omega}\left[ 1+\beta\sum_{k=1}^3 |\Delta_g v_k|^2\right] \sqrt{g} d\bx +\frac{1}{2\eta} \sum_{k=1}^3 \int_{\Omega} |v_k-f_k|^2d\bx,
	\label{eq.model.old}
\end{equation}
where $\beta>0, \eta>0$ are weight parameters, and $\cH^2$ is the Sobolev space defined as
$$
\cH^2(\Omega)=\left\{ v| v\in \cL^2(\Omega), \nabla v\in (\cL^2(\Omega))^{ 2}, \bD v\in  (\cL^2(\Omega))^{2\times 2} \right\},
$$
where $\bD$ denotes the Hessian and the derivatives being in the weak sense.
In (\ref{eq.model.old}), $g$ is the determinant of the manifold metric defined by, $g=\det \bG$, where $\bG=(g_{ij})_{1\leq i,j \leq 2}$ and
$$
g_{11}\,=\,\alpha+\sum_{k=1}^3 \left| \frac{\partial v_k}{\partial x_1}\right|^2,\ \,\, g_{12}\,=\,g_{21}\,=\,\sum_{k=1}^3 \frac{\partial v_k}{\partial x_1} \frac{\partial v_k}{\partial x_2},\ \,\,g_{22}\,=\,\alpha+\sum_{k=1}^3 \left| \frac{\partial v_k}{\partial x_2}\right|^2,
$$
with $\alpha>0$. 
Whenever $\alpha>0$, $\bG$ is positive definite. 
In (\ref{eq.model.old}), $\Delta_g$ is the Laplace-Beltrami operator associated with matrix $\bG$,
\begin{equation}
	\Delta_g \phi=\frac{1}{\sqrt{g}} \nabla \cdot (\sqrt{g}\bG^{-1}\nabla\phi),\,\,\,\, \forall \phi\in V.
\end{equation}
The second term in (\ref{eq.model.old}) is called the color elastica term, which captures the elastica of images in the chromatic space. The color elastica term is the variation of the Polyakov action \cite{kimmel1998natural,sochen1998general} given by
$$
\int_{\Omega}\sqrt{g}d\bx.
$$
From the perspective of `Sobolev space' of images as discussed in the introduction, the Polyakov action is a first order model and the color elastica model (\ref{eq.model.old}) is a second order model for color images.

We have shown in \cite[Remark 3.2]{liu2021color} that for the single-channel case, when $\alpha\rightarrow 0$, the color elastica model (\ref{eq.model.old}) reduces to a variant of Euler's elastica model: the term $\left(\nabla\cdot \frac{\nabla u}{|\nabla u|}\right)^2$ is weighted by $1/|\nabla v|^2$, see Appendix \ref{sec.colorElastica.Euler} for details.
Since Euler's elastica model have demonstrated impressive performance in processing single-channel images, we would like to derive multi-channel image models that are more direct extensions of Euler's elastica model. 
As mentioned above, there are two drawbacks of (\ref{eq.model.old}) when connecting it with Euler's elastica model:
(i) we need $\alpha\rightarrow 0$, and (ii) the resulting model differs from Euler's elastica model by a factor. 
Next, we introduce our new models by making two modifications that provide a remedy to each drawback.

The first modification targets drawback (ii). 
We weight the second term, the color elastica term, in (\ref{eq.model.old}) by $g$ to get,
\begin{equation}
	\int_{\Omega}\left[ 1+\beta\sum_{k=1}^3 g|\Delta_g v_k|^2\right] \sqrt{g} d\bx +\frac{1}{2\eta} \sum_{k=1}^3 \int_{\Omega} |v_k-f_k|^2d\bx.
	\label{eq.model1}
\end{equation}
This modification modulates the color elastica term by the metric $g$.
Due to this additional factor, the new regularizer maybe not coordinate invariant as $g$ depends on the parametrization of the manifold. Compared to (\ref{eq.model.old}), the gradient flow of (\ref{eq.model1}) has a preconditioning that amplifies the action along {\em edges} in the image.
With this modification, model (\ref{eq.model1}) reduces to Euler's elastica model as $\alpha\rightarrow 0$ for one-channel images.

The second modification targets drawback (i) and utilizes relation (\ref{eq.TVRelation}).
First note that an alternative expression of $\Delta_g \phi$ is
\begin{equation}
	\Delta_g \phi=\frac{1}{\sqrt{g}} \nabla \cdot \left(\frac{1}{\sqrt{g}}\cof(\bG)\nabla\phi\right), \forall \phi\in V,
	\label{eq.Beltrami.2}
\end{equation}
where $\cof(\bG)$ is the cofactor matrix of $\bG$.
Taking advantage of (\ref{eq.TVRelation}), we replace $g$ in (\ref{eq.model1}) and (\ref{eq.Beltrami.2}) by $g-\alpha^2$ to get the modified model:
\begin{equation}
	\int_{\Omega}\left[ 1+\beta\sum_{k=1}^3 (g-\alpha^2)|\widetilde{\Delta}_g v_k|^2\right] \sqrt{g-\alpha^2} d\bx +\frac{1}{2\eta} \sum_{k=1}^3 \int_{\Omega} |v_k-f_k|^2d\bx,
	\label{eq.model2}
\end{equation}
where
\begin{equation}
	\widetilde{\Delta}_g \phi=\frac{1}{\sqrt{g-\alpha^2}} \nabla \cdot \left(\frac{1}{\sqrt{g-\alpha^2}}\cof(\bG)\nabla\phi\right)\quad  \forall \phi\in V.
	\label{eq.Beltrami2.2}
\end{equation}
In the single-channel case, the functional (\ref{eq.model2}) reduces to
\begin{equation}
	\int_{\Omega}\sqrt{\alpha}\left[ 1+\beta\left|\nabla \cdot \frac{\nabla v}{|\nabla v|}\right|\right] |\nabla v| d\bx +\frac{1}{2\eta}  \int_{\Omega} |v-f|^2d\bx
\end{equation}
which exactly is the Euler's elastica model without any condition.

In Figure \ref{fig.RelativeEnergyPen}, we use a simple example to demonstrate the improvement of the regualrizers in model (\ref{eq.model1}) and (\ref{eq.model2}) over that in the original color elastica model (\ref{eq.model.old}). We use $\cF_0,\cF_1$ and $\cF_2$ to denote the regularizers in model (\ref{eq.model.old}), (\ref{eq.model1}) and (\ref{eq.model2}), respectively. For any noisy image $\bbf$ and its clean version $\bbf_0$, we define the relative energy of regularizer $\cF$ by $\cR\cF(\bbf)=\cF(\bbf)/\cF(\bbf_0)$. As the noise level of $\bbf$ increases, the regularizer is more effective if its relative energy inscreases faster. In Figure \ref{fig.RelativeEnergyPen}, we take the clean image 'Pens' in Figure \ref{fig.benchmark} as an example and compare the plots of relative energies of all three regualrizers versus noise level. The new regularizers in model (\ref{eq.model1}) and (\ref{eq.model2}) have faster increasing rates than that in model (\ref{eq.model.old}), demonstrating that they are more effective and can better characterize natural images than the regularizer in model (\ref{eq.model.old}). We refer readers to Section \ref{sec.justification} for more details and comprehensive comparisons with other models.

In the rest of this article, we focus on efficient algorithms to minimize (\ref{eq.model1}) and (\ref{eq.model2}).



\begin{figure}[t!]
	\centering
	\includegraphics[width=0.4\textwidth]{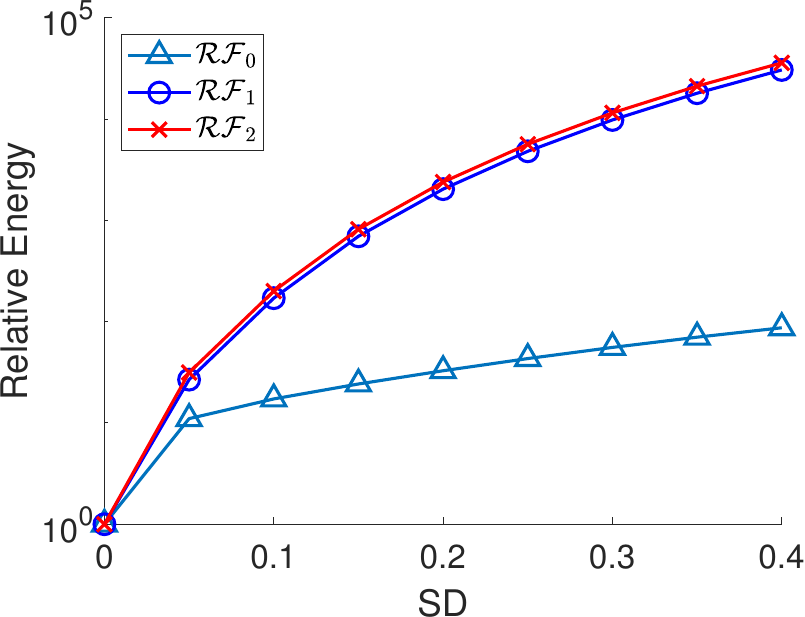}
	\caption{Comparison of the relative energy of the regularizers in the modified color elastica models (\ref{eq.model1}) and (\ref{eq.model2}), and the original color elastica model (\ref{eq.model.old}). The noisy images are generated by adding Gaussian noise with SD varying from 0.01 to 0.1. This test takes the image 'Pens' in Figure \ref{fig.justify} as an example. The considered regularizers are $\cF_0$ (in model (\ref{eq.model.old})), $\cF_1$ (in model (\ref{eq.model1})) and $\cF_2$ (in model (\ref{eq.model2})). Each relative energy is averaged over 10 experiments.}
	\label{fig.RelativeEnergyPen}
\end{figure}


\begin{remark}
	Given a color image $\bv$, one can show that the corresponding $g$ has the following expression
	
	\begin{align}
		g=&\alpha^2+\alpha\left( |\nabla v_1|^2+ |\nabla v_2|^2 + |\nabla v_3|^2\right) \nonumber\\
		&+ (\det(\nabla v_1; \nabla v_2))^2+(\det(\nabla v_1; \nabla v_3))^2+(\det(\nabla v_2; \nabla v_3))^2.
		\label{eq.g}
	\end{align}
	In (\ref{eq.g}), the first two terms have $\alpha$ as a factor and the first term is a lower bound of $g$. The second term is the sum of squared total variation of each channel, which is called color TV in \cite{blomgren1998color}. The third term is independent of $\alpha$ and describes the interactions between channels. Therefore, larger $\alpha$ gives more weights to the color TV term and less weight to the inter-channel interaction term. Compared to (\ref{eq.model1}), the measure used in (\ref{eq.model2}) shifts $g$ so that its lower bound is zero.
\end{remark}

\subsection{Reformulation of (\ref{eq.model1}) and (\ref{eq.model2})}
The modified models (\ref{eq.model1}) and (\ref{eq.model2}) are nonlinear functionals and are difficult to minimize. To develop efficient algorithms to find their minimizers, we decouple the nonlinearities in the Laplace-Beltrami operator by introducing several vector-valued and matrix-valued variables.
For $k=1,2,3$ and $r=1,2$, let us denote by $q_{kr}$ the real valued function $\frac{\partial v_k}{\partial x_r}$ and by $\bq$ the $3\times 2$ matrix
\begin{align*}
	\bq=\begin{pmatrix}
		q_{11} & q_{12}\\
		q_{21} & q_{22}\\
		q_{31} & q_{32}
	\end{pmatrix} =\nabla v,
	\mbox{ with } v=
	\begin{pmatrix}
		v_1\\ v_2\\ v_3
	\end{pmatrix}.
\end{align*}
Denote $\bq_k=\begin{pmatrix}
	q_{k1} & q_{k2}
\end{pmatrix}, k=1,2,3$, we introduce the $3\times 2$ matrix $\bmu=\sqrt{g}\bq\bG^{-1}$ with
\begin{equation}
	\bmu_k=\sqrt{g}\bq_k\bG^{-1}
	\label{eq.mu}
\end{equation}
and
$\bnu=\frac{1}{\sqrt{g-\alpha^2}}\bq\cof(\bG)$
with
\begin{equation}
	\bnu_k=\frac{1}{\sqrt{g-\alpha^2}}\bq_k\cof(\bG).
	\label{eq.tmu}
\end{equation}
Here $\bmu$ and $\bnu$ are proxies of the parts inside the divergence in (\ref{eq.Beltrami.2}) and (\ref{eq.Beltrami2.2}), respectively.
Expressions (\ref{eq.mu}) and (\ref{eq.tmu}) imply 
$$
\bq_k=\frac{1}{\sqrt{g}}\bmu_k\bG \quad \mbox{and} \quad \sqrt{g-\alpha^2}\bnu_k=\bq_k\cof(\bG).
$$
We denote by $\bM(\bq)$ the matrix-valued function defined by
\begin{align}
	\begin{pmatrix}
		\alpha+q_{11}^2+q_{21}^2+q_{31}^2 & q_{11}q_{12}+q_{21}q_{22}+q_{31}q_{32}\\
		q_{11}q_{12}+q_{21}q_{22}+q_{31}q_{32} & \alpha+q_{12}^2+q_{22}^2+q_{32}^2
	\end{pmatrix}
\end{align}
and denote $\det \bM(\bq)$ by $m(\bq)$.  
Define the sets $\Sigma_f,S_{\bG},\widetilde{S}_{\bG}$ as
\begin{align*}
	&\Sigma_f=\left\{\bq\in (\cL^2(\Omega))^{3\times 2}, \exists \bv\in \cH^1(\Omega) \mbox{ such that } \bq=\nabla v \mbox{ and } \int_{\Omega} v_k-f_k dx=0 \mbox{ for }k=1,2,3\right\},\\
	&S=\left\{(\bq,\bmu)\in \left((\cL^2(\Omega))^{3\times 2},(\cL^2(\Omega))^{3\times 2}\right), \bmu_k=\sqrt{\det \bM(\bq)}\bq_k\left(\bM(\bq)\right)^{-1}, k=1,2,3\right\},\\
	&\widetilde{S}=\left\{(\bq,\bnu)\in \left((\cL^2(\Omega))^{3\times 2},(\cL^2(\Omega))^{3\times 2}\right), \sqrt{\det \bM(\bq)-\alpha^2}\bnu_k=\bq_k\cof(\bM(\bq)), k=1,2,3\right\},
\end{align*}
and their indicator functions as
\begin{align*}
	&I_{\Sigma_f}(\bq)=
	\begin{cases}
		0 & \mbox{if } \bq\in \Sigma_f,\\
		+ \infty & \mbox{otherwise},
	\end{cases}\\
	&I_{S}(\bq,\bmu)=
	\begin{cases}
		0 & \mbox{if } (\bq,\bmu)\in S,\\
		+ \infty & \mbox{otherwise},
	\end{cases}
	\quad
	I_{\widetilde{S}}(\bq,\bnu)=
	\begin{cases}
		0 & \mbox{if } (\bq,\bnu)\in \widetilde{S},\\
		+ \infty & \mbox{otherwise}.
	\end{cases}
\end{align*}

Let $(\bp,\blambda)$ be the minimizer of
\begin{align}
	\min\limits_{\substack{\bq\in (\cH^1(\Omega))^{3\times 2},\\ \bmu\in (\cH^1(\Omega))^{3\times 2}}} \displaystyle\int_{\Omega} &\left(1+ \sum_{k=1}^3 |\nabla\cdot \bmu_k|^2 \right)\sqrt{m(\bq)} d\bx +\frac{1}{2\eta} \sum_{k=1}^3 \displaystyle\int_{\Omega} |(v_{\bq})_k-f_k|^2 d\bx \nonumber\\
	& +I_{\Sigma_f}(\bq)+I_S(\bq,\bmu),
	\label{eq.model1.var}
\end{align}
where $\bv_{\bq}=\{(v_{\bq})_k\}_{k=1}^3$ is the solution of
\begin{equation}
	\begin{cases}
		\nabla^2(v_{\bq})_k=\nabla \cdot \bq_k,\\
		(\nabla(v_{\bq})_k-\bq_k)\cdot \bn=0,\\
		\displaystyle\int_{\Omega} (v_{\bq})_k dx=\displaystyle\int_{\Omega} f_k d\bx,\\
		k=1,2,3.
	\end{cases}
	\label{eq.vq}
\end{equation}

Then, $\bu_{\bp}$ solving (\ref{eq.vq}) is the minimizer of (\ref{eq.model1}). In (\ref{eq.model1.var}), $\bv_{\bq}$ can be uniquely determined by $\bq$. The complicated Laplace-Beltrami operator is represented using the divergence of $\bmu$. The resulting formulation is an unconstrained optimization problem of $\bq$ and $\bmu$ only.

Similarly, if $(\bp,\blambda)$ be the minimizer of
\begin{align}
	\min\limits_{\substack{\bq\in (\cH^1(\Omega))^{3\times 2},\\ \bnu\in (\cH^1(\Omega))^{3\times 2}}}
	\displaystyle\int_{\Omega} &\left(1+ \sum_{k=1}^3 |\nabla\cdot \bnu_k|^2 \right)\sqrt{m(\bq)-\alpha^2} d\bx +\frac{1}{2\eta} \sum_{k=1}^3 \displaystyle\int_{\Omega} |(v_{\bq})_k-f_k|^2 d\bx\\
	& +I_{\Sigma_f}(\bq)+I_{\widetilde{S}}(\bq,\bnu).
	\label{eq.model2.var}
\end{align}
Then $\bu_{\bp}$ solving (\ref{eq.vq}) is the minimizer of (\ref{eq.model2}).

\section{Operator splitting methods}
\label{sec.splitting}
In this section, we derive operator-splitting schemes to solve (\ref{eq.model1.var}) and (\ref{eq.model2.var}). We will first derive the Euler-Lagrange equations of both functional and associate them with initial-value problems, which are suitable to be solved by operator-splitting methods.
\subsection{Optimal conditions of (\ref{eq.model1.var}) and (\ref{eq.model2.var})}
Define
\begin{align}
	&J_1(\bq,\bmu)=\int_{\Omega} \left(1+ \beta\sum_{k=1}^3 |\nabla\cdot \bmu_k|^2 \right)\sqrt{m(\bq)} d\bx,\\
	&\widetilde{J}_1(\bq,\bnu)=\int_{\Omega} \left(1+ \beta\sum_{k=1}^3 |\nabla\cdot \bnu_k|^2 \right)\sqrt{m(\bq)-\alpha^2} d\bx,\\
	&J_2(\bq)=\frac{1}{2\eta} \sum_{k=1}^3 \int_{\Omega} |(v_{\bq})_k-f_k|^2 d\bx.
\end{align}
%

If $(\bp,\blambda)$ is the minimizer of (\ref{eq.model1.var}), it satisfies 
\begin{equation}
	\begin{cases}
		\partial_{\bq} J_1(\bp,\blambda) +D_{\bq}J_2(\bp)+\partial_{\bq} I_{\Sigma_f}(\bp)+\partial_{\bq} I_{S}(\bp,\blambda) \ni 0,\\
		D_{\bmu} J_1(\bp,\blambda) +\partial_{\bmu} I_{S}(\bp,\blambda) \ni0,\\
	\end{cases}
	\label{eq.optimal1}
\end{equation}
where $D_{\bq}$ (resp. $\partial_{\bq}$) denotes the partial derivative (resp. subdifferential) of a differentiable function (resp. non-differentiable function) with respect to $\bq$. 

\begin{remark}\label{remark.S}
	Although the set $S$ may be nonconvex, which makes $I_S$ nonconvex, $I_S$ is an indicator function with special properties.
	Actually, the subdifferential of $I_{S}(\bq,\bmu)$ exists for any $(\bq,\bmu)\in S$, and $(\mathbf{0},\mathbf{0})$ is an element of it.
	Let $(\bq^*,\bmu^*)$ be any element of $S$, we have $I_S(\bq^*,\bmu^*)=0$ and
	\begin{align*}
		I_S(\bq,\bmu)\geq 0=I_S(\bq^*,\bmu^*)+ \langle \mathbf{0}, \bq-\bq^*\rangle + \langle \mathbf{0}, \bmu-\bmu^*\rangle,
	\end{align*}
	for any $(\bq,\bmu)$ (not necessarily in $S$). 
	By definition, the subdifferential of $I_S$ at $(\bq^*,\bmu^*)$ exists, and $(\mathbf{0},\mathbf{0})$ is an element of it. 
	
	In (\ref{eq.optimal1}), $(\bp,\blambda)$ is a minimizer of (\ref{eq.model1.var}). 
	We have $(\bp,\blambda)\in S$ and $(\mathbf{0},\mathbf{0})$ an element of the subdifferential of $I_S$ at $(\bp,\blambda)$,  which proves 
	(\ref{eq.optimal1}).
\end{remark}

We then introduce an artificial time and associate (\ref{eq.optimal1}) with the following initial-value problem
\begin{equation}
	\begin{cases}
		\frac{\partial \bp}{\partial t}+\partial_{\bq} J_1(\bp,\blambda) +D_{\bq}J_2(\bp)+\partial_{\bq} I_{\Sigma_f}(\bp)+\partial_{\bq} I_{S}(\bp,\blambda) \ni 0,\\
		\gamma_1\frac{\partial \blambda}{\partial t}+D_{\bmu}  J_1(\bp,\blambda) +\partial_{\bmu} I_{S}(\bp,\blambda) \ni0,\\
		(\bp(0),\blambda(0))=(\bp^0,\blambda^0),
	\end{cases}
	\label{eq.ivp1}
\end{equation}
where $\gamma_1$ is a positive constant controlling the evolution speed of $\blambda$, respectively. In (\ref{eq.ivp1}), $(\bp^0,\blambda^0)$ is the initial condition whose choice will be discussed in Section \ref{sec.initial}. Note that the steady state solution of (\ref{eq.ivp1}) solves (\ref{eq.model1.var}).

Similarly, for problem (\ref{eq.model2.var}), we solve for the steady state solution of the following initial-value problem
\begin{equation}
	\begin{cases}
		\frac{\partial \bp}{\partial t}+\partial_{\bq} \widetilde{J}_1(\bp,\blambda) +D_{\bq}J_2(\bp)+\partial I_{\Sigma_f}(\bp)+\partial_{\bq} I_{\widetilde{S}}(\bp,\blambda) \ni 0,\\
		\gamma_1\frac{\partial \blambda}{\partial t}+D_{\bnu} \widetilde{J}_1(\bp,\blambda) +\partial_{\bnu} I_{\widetilde{S}}(\bp,\blambda) \ni0,\\
		(\bp(0),\blambda(0))=(\bp^0,\blambda^0).
	\end{cases}
	\label{eq.ivp2}
\end{equation}
The argument in Remark \ref{remark.S} also applies to $I_{\widetilde{S}}$, 
which validates
(\ref{eq.ivp2}).

\subsection{Operator-splitting schemes}

The expression of (\ref{eq.ivp1}) and (\ref{eq.ivp2}) are well suited to be time discretized by operator-splitting methods, as what has been done in \cite{deng2019new,liu2021color,liu2021operator}. We refer the readers to \cite{glowinski2017splitting} for a complete discussion of operator-splitting methods.
A simple choice is the Lie scheme \cite{glowinski2016some}. In the following, we use $\tau$ to denote the time step. 
For (\ref{eq.ivp1}), we update $\bp,\blambda$ as follows:\\
\emph{\underline{Initialization}}
\begin{equation}
	\mbox{Initialize }\bp^0, \blambda^0.
	\label{eq.split.0}
\end{equation}
\emph{\underline{Fractional step 1}}\\
Solve
\begin{equation}
	\begin{cases}
		\begin{cases}
			\frac{\partial \bp}{\partial t}+\partial_{\bq} J_1(\bp,\blambda) \ni 0,\\
			\gamma_1 \frac{\partial \blambda}{\partial t}+ D_{\bmu} J_1(\bp,\blambda)\ni 0,
		\end{cases}
		\mbox{ in } \Omega\times (t^n,t^{n+1}),\\
		(\bp(t^n),\blambda(t^n))=(\bp^n,\blambda^n),
	\end{cases}
	\label{eq.split.1}
\end{equation}
and set $\bp^{n+1/3}=\bp(t^{n+1}),\blambda^{n+1/3}=\blambda(t^{n+1})$.

\noindent\emph{\underline{Fractional step 2}}\\
Solve
\begin{equation}
	\begin{cases}
		\begin{cases}
			\frac{\partial \bp}{\partial t}+\partial_{\bq} I_{S}(\bp,\blambda) \ni 0,\\
			\gamma_1 \frac{\partial \blambda}{\partial t}+ \partial_{\bmu}I_{S}(\bp,\blambda)\ni 0,
		\end{cases}
		\mbox{ in } \Omega\times (t^n,t^{n+1}),\\
		(\bp(t^n),\blambda(t^n))=(\bp^{n+1/3},\blambda^{n+1/3}),
	\end{cases}
	\label{eq.split.2}
\end{equation}
and set $\bp^{n+2/3}=\bp(t^{n+1}),\blambda^{n+2/3}=\blambda(t^{n+1})$.

\noindent\emph{\underline{Fractional step 3}}\\
Solve
\begin{equation}
	\begin{cases}
		\begin{cases}
			\frac{\partial \bp}{\partial t}+D_{\bq} J_2(\bp)+ \partial_{\bq} I_{\Sigma_f}(\bp) \ni 0,\\
			\gamma_1 \frac{\partial \blambda}{\partial t}= 0,
		\end{cases}
		\mbox{ in } \Omega\times (t^n,t^{n+1}),\\
		(\bp(t^n),\blambda(t^n))=(\bp^{n+2/3},\blambda^{n+2/3}),
	\end{cases}
	\label{eq.split.3}
\end{equation}
and set $\bp^{n+1}=\bp(t^{n+1}),\blambda^{n+1}=\blambda(t^{n+1})$.

For $\bp$ and $\blambda$ in (\ref{eq.split.1})-(\ref{eq.split.3}), we update them by implicit schemes. Specifically, we use the Marchuk-Yanenko type scheme to time discretize (\ref{eq.split.1})-(\ref{eq.split.3}):
Initialize $(\bp^0, \blambda^0)$.

For $n\geq0$, we update $(\bp^n,\blambda^n)\rightarrow (\bp^{n+1/3},\blambda^{n+1/3}) \rightarrow (\bp^{n+2/3},\blambda^{n+2/3}) \rightarrow (\bp^{n+1},\blambda^{n+1})$ as:
\begin{align}
	&\begin{cases}
		\frac{\bp^{n+1/3}-\bp^n}{\tau} + \partial_{\bq}  J_1(\bp^{n+1/3},\blambda^n) \ni 0,\\
		\gamma_1 \frac{\blambda^{n+1/3}-\blambda^n}{\tau} + D_{\bmu}  J_1(\bp^{n+1/3},\blambda^{n+1/3}) \ni 0,
	\end{cases} \label{eq.split1.1}\\
	&\begin{cases}
		\frac{\bp^{n+2/3}-\bp^{n+1/3}}{\tau} + \partial_{\bq} I_{S}(\bp^{n+2/3},\blambda^{n+2/3}) \ni 0,\\
		\gamma_1\frac{\blambda^{n+2/3}-\blambda^{n+1/3}}{\tau} + \partial_{\bmu} I_{S}(\bp^{n+2/3},\blambda^{n+2/3}) \ni 0,\\
	\end{cases}\label{eq.split1.2}\\
	&\begin{cases}
		\frac{\bp^{n+1}-\bp^{n+2/3}}{\tau} + D_{\bq} J_2(\bp^{n+1}) + \partial_{\bq} I_{\Sigma_f}(\bp^{n+1})\ni 0,\\
		\blambda^{n+1}=\blambda^{n+2/3},\\
	\end{cases}\label{eq.split1.3}
\end{align}

Analogously, the operator splitting scheme for (\ref{eq.ivp2}) is
\begin{align}
	&\begin{cases}
		\frac{\bp^{n+1/3}-\bp^n}{\tau} + \partial_{\bq}  \widetilde{J}_1(\bp^{n+1/3},\blambda^n) \ni 0,\\
		\gamma_1\frac{\blambda^{n+1/3}-\blambda^n}{\tau} + D_{\bnu}  \widetilde{J}_1(\bp^{n+1/3},\blambda^{n+1/3}) \ni 0,
	\end{cases} \label{eq.split2.1}\\
	&\begin{cases}
		\frac{\bp^{n+2/3}-\bp^{n+1/3}}{\tau} + \partial_{\bq} I_{\widetilde{S}}(\bp^{n+2/3},\blambda^{n+2/3}) \ni 0,\\
		\gamma_1\frac{\blambda^{n+2/3}-\blambda^{n+1/3}}{\tau} + \partial_{\bnu} I_{\widetilde{S}}(\bp^{n+2/3},\blambda^{n+2/3}) \ni 0.
	\end{cases}\label{eq.split2.2}\\
	&\begin{cases}
		\frac{\bp^{n+1}-\bp^{n+2/3}}{\tau} + D_{\bq} J_2(\bp^{n+1}) + \partial_{\bq} I_{\Sigma_f}(\bp^{n+1})\ni 0,\\
		\blambda^{n+1}=\blambda^{n+2/3}.
	\end{cases}\label{eq.split2.3}
\end{align}

In the rest of this section, we discuss solutions $(\bp,\blambda)$ to each of the subprobelm in scheme (\ref{eq.split1.1})--(\ref{eq.split1.3}) and (\ref{eq.split2.1})--(\ref{eq.split2.3}).


\subsection{On the solution of (\ref{eq.split1.1}) }
\label{sec.p1}
In (\ref{eq.split1.1}), $\bp^{n+1/3}$ is the minimizer of
\begin{align}
	\bp^{n+1/3}=\argmin_{\bq\in (L^2(\Omega))^{3\times 2}} \left[ \frac{1}{2\tau} \int_{\Omega} |\bq-\bp^n|^2dx +\int_{\Omega} \left( 1 +\beta \sum_{k=1}^3 |\nabla\cdot \blambda^n_k|^2\right)\sqrt{m(\bq)} d\bx \right].
	\label{eq.frac1.p}
\end{align}

We suggest to use the fixed point method to solve it.
The functional in (\ref{eq.frac1.p}) is in the form of
\begin{equation}
	E_1(\bq)=\frac{1}{2\tau} \int_{\Omega} |\bq-\bp|^2d\bx +\int_{\Omega} s\sqrt{m(\bq)} d\bx
\end{equation}
with some $s>0$ and $\bp\in (L^2(\Omega))^{3\times 2}$.

The first variation of $E_1$ with respect to $q_{kr},k=1,2,3, r=1,2,$ is
\begin{align}
	&\frac{\partial E_1}{\partial q_{kr}}=\frac{1}{\tau}(q_{kr}-p_{kr}) +\frac{s}{2\sqrt{m(\bq)}} \frac{\partial m(\bq)}{\partial q_{kr}},
\end{align}
with
\begin{align}
	&\frac{\partial m(\bq)}{\partial q_{k1}}=2g_{22}q_{k1}-2g_{12}q_{k2}, & &\frac{\partial m(\bq)}{\partial q_{k2}}=2g_{11}q_{k2}-2g_{12}q_{k1}
	\label{eq.updatep1}
\end{align}
for $k=1,2,3$. In (\ref{eq.updatep1}), the notation $\bM(\bq)=\begin{pmatrix}
	g_{11}& g_{12}\\ g_{21} & g_{22}
\end{pmatrix}$ is used. Given an initial guess $\bq^0$, in the $(\omega+1)$-th iteration, we freeze the denominator and update $\bq_{k}$ by solving for $\frac{\partial E_1}{\partial q_{kr}}=0$:
\begin{align}
	q_{k1}^{\omega+1}=\frac{\sqrt{m(\bq^{\omega})}p_{k1}+s\tau g_{12}^{\omega}q_{k2}^{\omega}}{\sqrt{m(\bq^{\omega})}+s\tau g_{22}^{\omega}},\quad
	q_{k2}^{\omega+1}=\frac{\sqrt{m(\bq^{\omega})}p_{k2}+s\tau g_{12}^{\omega}q_{k1}^{\omega}}{\sqrt{m(\bq^{\omega})}+s\tau g_{11}^{\omega}}.
	\label{eq.freezq}
\end{align}
We continue updating until $\|\bq^{\omega+1}-\bq^{\omega}\|_{\infty}< \xi_1$ for some small $\xi_1>0$, where we define $\|\bq\|_{\infty}=\max_{k,r} |q_{kr}|$. Then we set $\bp^{n+1/3}=\bq^*$ where $\bq^*$ is the converged variable.

For $\blambda^{n+1/3}$, it is the unique solution to
\begin{align}
	\begin{cases}
		\blambda^{n+1/3}=(\blambda^{n+1/3}_k)_{k=1}^3 \in (\cH^1(\Omega))^{3\times 2},\\
		\gamma_1 \displaystyle\int_{\Omega} \blambda_k^{n+1/3}\cdot \bmu_k d\bx+ 2\beta\tau \int_{\Omega} \sqrt{ m(\bp^{n+1/3})}(\nabla\cdot \blambda_k^{n+1/3})(\nabla \cdot \bmu_i)d\bx=\gamma_1 \displaystyle\int_{\Omega} \blambda_k^n \cdot \bmu_kd\bx,\\
		\forall \bmu_k\in (\cH^1(\Omega))^2, k=1,2,3.
	\end{cases}
	\label{eq.frac1.pde.lambda1.var.0}
\end{align}
Here $\blambda^{n+1/3}$ is also the weak solution to the linear elliptic Neumann problem
\begin{equation}
	\begin{cases}
		\gamma_1\blambda_k^{n+1/3}-2\beta\tau\nabla\left( \sqrt{ m(\bp^{n+1/3})}(\nabla\cdot \blambda_k^{n+1/3})\right)=\gamma_1\blambda_k^n &\mbox{ in } \Omega,\\
		\blambda_k^{n+1/3}\cdot \bn=0 & \mbox{ on } \partial\Omega, \\
		k=1,2,3,
	\end{cases}
	\label{eq.frac1.pde.lambda1.0}
\end{equation}
where $\bn$ denotes the outward normal direction.

In Section \ref{sec.G}, we introduce a new auxiliary variable $\bG$ approximating $\bM(\bp)$. See Remark \ref{remark.1} for another choice to update $\blambda^{n+1/3}$ in (\ref{eq.split1.1}).


\subsection{On the solution of (\ref{eq.split2.1})}
In (\ref{eq.split2.1}), we can compute $\bp^{n+1/3}$ in the same way as that in (\ref{eq.split1.1}), except replacing $m(\bq)$ in Section \ref{sec.p1} by $m(\bq)-\alpha^2$. The updating formula analogous to (\ref{eq.freezq}) is 
\begin{align}
	q_{k1}^{\omega+1}=\frac{p_{k1}+\frac{s\tau}{\sqrt{m(\bq^{\omega})-\alpha^2}+\varepsilon} g_{12}^{\omega}q_{k2}^{\omega}}{1+\frac{s\tau}{\sqrt{m(\bq^{\omega})-\alpha^2}+\varepsilon} g_{22}^{\omega}},\quad
	q_{k2}^{\omega+1}=\frac{p_{k2}+\frac{s\tau}{\sqrt{m(\bq^{\omega})-\alpha^2}+\varepsilon} g_{12}^{\omega}q_{k1}^{\omega}}{1+\frac{s\tau}{\sqrt{m(\bq^{\omega})-\alpha^2}+\varepsilon} g_{11}^{\omega}}.
	\label{eq.freezq2}
\end{align}
In (\ref{eq.freezq2}), $\varepsilon>0$ is a small number to avoid division by 0. In our experiments, $\varepsilon=10^{-3}$ gives fast convergence rate while providing good results.

For $\blambda^{n+1/3}$, follow the derivation Section \ref{sec.p1}, it is the weak solution of 
\begin{equation}
	\begin{cases}
		\gamma_1\blambda_k^{n+1/3}-2\beta\tau\nabla\left( \sqrt{m(\bp^{n+1/3})-\alpha^2}(\nabla\cdot \blambda_k^{n+1/3})\right)=\gamma_1\blambda_k^n &\mbox{ in } \Omega,\\
		\blambda_k^{n+1/3}\cdot \bn=0 & \mbox{ on } \partial\Omega, \\
		k=1,2,3.
	\end{cases}
	\label{eq.frac1.pde.lambda2.0}
\end{equation}
See Remark \ref{remark.2} for another option to update $\blambda_k^{n+1/3}$ in (\ref{eq.split2.1}).


\subsection{On the solution of (\ref{eq.split1.2})}
\label{sec.G}
The solution $(\bp^{n+2/3},\blambda^{n+2/3})$ in (\ref{eq.split1.2}) is the minimizer of
\begin{equation}
	(\bp^{n+2/3},\blambda^{n+2/3})=\argmin_{(\bq,\bmu)\in S} \frac{1}{2}\int_{\Omega} \left( \left| \bq-\bp^{n+1/3}\right|^2 +\gamma_1 \left| \bmu-\blambda^{n+1/3}\right|^2 \right)d\bx.
	\label{eq.frac2.lambda1.0}
\end{equation}
The constraint in the set $S$ is nonlinear in $\bq$, making (\ref{eq.frac2.lambda1.0}) difficult to solve. In this paper, instead of directly solving (\ref{eq.frac2.lambda1.0}), we borrow the idea of sequential quadratic programming (SQP) \cite{nocedal1999numerical} and replace $S$ by $S_G$  defined by
\begin{align}
	S_{\bG}=\left\{(\bq,\bmu)\in \left((\cL^2(\Omega))^{3\times 2},(\cL^2(\Omega))^{3\times 2}\right), \bmu_k=\sqrt{\det \bG}\bq_k\bG^{-1}, k=1,2,3\right\}
	\label{eq.S_bG}
\end{align}
for some $\bG$ close to $\bM(\bq^{n+2/3})$. The constraint in $S_{\bG}$ is a `linearization' of the constraint in $S$ (in the flavor of the fixed point method). Such a strategy is a variant of the first step of the first order SQP studied in \cite{bai2018analysis} with $x_0=[\bp^{n+1/3},\blambda^{n+1/3}]$. 

For a small $\tau$, we expect that $\bp^{n+1/3}$ is close to $\bp^{n+2/3}$. Thus $\bG=\bM(\bp^{n+1/3})$ is a natural choice. However, such a choice makes our algorithm unstable. To improve the robustness, we take $\bG$ as a variable and update it with $\bp$ during iterations (with damping) while keeping it being close to $\bM(\bp)$. Specifically, given an initial value $\bG^0$, for $k=1,2,3$, every time $\bp^{n+k/3}$ is computed, we update $\bG^{n+k/3}$ with damping as
\begin{align}
	\bG^{n+k/3}=e^{-\gamma_2\tau}\bG^{n+(k-1)/3}+(1-e^{-\gamma_2\tau})\bM(\bp^{n+k/3})
	\label{eq.bG}
\end{align}
for some $\gamma_2>0$. The updating formula (\ref{eq.bG}) is the solution $\bG(t^{n+1})$ to the differential equation
\begin{align}
	\frac{\partial \bG}{\partial t}+\gamma_2(\bG-\bM(\bp^{n+k/3}))=0
\end{align}
given $\bG(t^n)=\bG^{n+(k-1)/3}$. In (\ref{eq.S_bG}), we set $\bG=\bG^{n+1/3}$, i.e., the latest $\bG$ computed using $\bp^{n+1/3}$. Consequently, (\ref{eq.frac2.lambda1.0}) is approximated by
\begin{equation}
	(\bp^{n+2/3},\blambda^{n+2/3})=\argmin_{(\bq,\bmu)\in S_{\bG^{n+1/3}}} \frac{1}{2}\int_{\Omega} \left( \left| \bq-\bp^{n+1/3}\right|^2 +\gamma_1 \left| \bmu-\blambda^{n+1/3}\right|^2 \right)d\bx.
	\label{eq.frac2.lambda1}
\end{equation}

We then focus on the solution of (\ref{eq.frac2.lambda1}). For $(\bq,\bmu)\in S_{\bG^{n+1/3}}$, we have $\bmu_i=\sqrt{g^{n+1/3}}\bq_i(\bG^{n+1/3})^{-1}$ with $g^{n+1/3}=\det \bG^{n+1/3}$. While when $\alpha$ is very close to 0, the matrix $\bG^{n+1/3}$ maybe singular and $g^{n+1/3}$ may equal to 0 at some locations. To avoid computing $(\bG^{n+1/3})^{-1}$ and $1/\sqrt{g^{n+1/3}}$, we rewrite the relation as
\begin{align}
	\sqrt{g^{n+1/3}}\bq=\bmu\bG^{n+1/3},
	\label{eq.frac2.constraint1}
\end{align}
and consider the following constrained optimization problem
\begin{equation}
	(\bp^{n+2/3},\blambda^{n+2/3})=\argmin_{(\bq,\bmu):\sqrt{g^{n+1/3}}\bq=\bmu\bG^{n+1/3}} \int_{\Omega} \left( \left| \bq-\bp^{n+1/3}\right|^2 +\gamma_1 \left| \bmu-\blambda^{n+1/3}\right|^2 \right)dx.
	\label{eq.frac2.lambda1.1}
\end{equation}
For simplicity, we temporally use $g, g_{ij},p_{kr},\lambda_{kr}$ to denote $g^{n+1/3}, g_{ij}^{n+1/3},p_{kr}^{n+1/3},\lambda_{kr}^{n+1/3}$ in this subsection.
The explicit solution to (\ref{eq.frac2.lambda1.1}) is given in the following theorem
\begin{thm}\label{thm.frac2}
	Let
	\begin{align*}
		&a_{k1}=-\frac{g_{11}^2}{\gamma_1}-\frac{g_{21}^2}{\gamma_1}-g,\ a_{i2}=-\frac{g_{11}g_{12}}{\gamma_1}-\frac{g_{21}g_{22}}{\gamma_1},\ a_{k3}=\lambda_{k1}g_{11}+\lambda_{k2}g_{21}-\sqrt{g}p_{k1},\\
		&b_{k1}=-\frac{g_{11}g_{12}}{\gamma_1}-\frac{g_{21}g_{22}}{\gamma_1},\ b_{i2}=-\frac{g_{12}^2}{\gamma_1}-\frac{g_{22}^2}{\gamma_1}-g,\ b_{k3}=\lambda_{k1}g_{12}+\lambda_{k2}g_{22}-\sqrt{g}p_{k2}.
	\end{align*}
	The solution to (\ref{eq.frac2.lambda1.1}) is given as
	\begin{align*}
		&\lambda_{k1}^{n+2/3}=\lambda_{k1}-\frac{g_{11}}{\gamma_1}\frac{a_{k2}b_{k3}-a_{k3}b_{k2}}{a_{k1}b_{k2}-a_{k2}b_{k1}}- \frac{g_{12}}{\gamma_1}\frac{a_{k1}b_{k3}-a_{k3}b_{k1}}{a_{k2}b_{k1}-a_{k1}b_{k2}},\\
		&\lambda_{k2}^{n+2/3}=\lambda_{k2}-\frac{g_{21}}{\gamma_1}\frac{a_{k2}b_{k3}-a_{k3}b_{k2}}{a_{k1}b_{k2}-a_{k2}b_{k1}}- \frac{g_{22}}{\gamma_1}\frac{a_{k1}b_{k3}-a_{k3}b_{k1}}{a_{k2}b_{k1}-a_{k1}b_{k2}},\\
		&p_{k1}^{n+2/3}=p_{k1}+\sqrt{g}\frac{a_{k2}b_{k3}-a_{k3}b_{k2}}{a_{k1}b_{k2}-a_{k2}b_{k1}},\ p_{k2}^{n+2/3}=p_{k2}+\sqrt{g}\frac{a_{k1}b_{k3}-a_{k3}b_{k1}}{a_{k2}b_{k1}-a_{k1}b_{k2}}.
	\end{align*}
	for $k=1,2,3$. 
\end{thm}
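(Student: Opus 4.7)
The problem (\ref{eq.frac2.lambda1.1}) is pointwise over $\Omega$ and, crucially, its objective and affine constraint $\sqrt{g^{n+1/3}}\,\bq=\bmu\,\bG^{n+1/3}$ decouple across the color index $k\in\{1,2,3\}$. So I would reduce the whole theorem to three independent, finite-dimensional problems, one for each $k$: minimize the strictly convex quadratic $(q_{k1}-p_{k1})^2+(q_{k2}-p_{k2})^2+\gamma_1(\mu_{k1}-\lambda_{k1})^2+\gamma_1(\mu_{k2}-\lambda_{k2})^2$ in the four unknowns $(q_{k1},q_{k2},\mu_{k1},\mu_{k2})$ subject to the two scalar constraints obtained by reading off the rows of $\sqrt{g}\bq_k=\bmu_k\bG$, namely
\begin{align*}
\sqrt{g}\,q_{k1}&=g_{11}\mu_{k1}+g_{21}\mu_{k2},\\
\sqrt{g}\,q_{k2}&=g_{12}\mu_{k1}+g_{22}\mu_{k2}.
\end{align*}
Since the objective is strictly convex and the constraints are affine (and feasible: take $\bmu=\sqrt{g}\bq\bG^{-1}$ whenever $\bG$ is invertible, which holds since $\alpha>0$ at stage $n+1/3$), a unique minimizer exists and is characterized by the KKT conditions.

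Next I would introduce two Lagrange multipliers $\theta_{k1},\theta_{k2}$ (one per scalar constraint, with the factor $2$ absorbed for convenience). Setting the partial derivatives of the Lagrangian with respect to $q_{kr}$ and $\mu_{kr}$ to zero immediately yields the explicit expressions
\begin{align*}
q_{k1}&=p_{k1}-\sqrt{g}\,\theta_{k1}, & q_{k2}&=p_{k2}-\sqrt{g}\,\theta_{k2},\\
\mu_{k1}&=\lambda_{k1}+\tfrac{1}{\gamma_1}(g_{11}\theta_{k1}+g_{12}\theta_{k2}), & \mu_{k2}&=\lambda_{k2}+\tfrac{1}{\gamma_1}(g_{21}\theta_{k1}+g_{22}\theta_{k2}).
\end{align*}
Plugging these into the two affine constraints and collecting the $\theta_{k1},\theta_{k2}$ terms produces a $2\times 2$ linear system in the multipliers whose rows are, by direct inspection, exactly the coefficients $(a_{k1},a_{k2}\mid a_{k3})$ and $(b_{k1},b_{k2}\mid b_{k3})$ defined in the statement of the theorem.

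I would then solve this $2\times 2$ system by Cramer's rule, obtaining
\[
\theta_{k1}=\frac{a_{k3}b_{k2}-a_{k2}b_{k3}}{a_{k1}b_{k2}-a_{k2}b_{k1}},\qquad
\theta_{k2}=\frac{a_{k1}b_{k3}-a_{k3}b_{k1}}{a_{k1}b_{k2}-a_{k2}b_{k1}},
\]
and back-substitute into the KKT expressions for $q_{kr}$ and $\mu_{kr}$. Rewriting the denominator for $\theta_{k2}$ as $-(a_{k2}b_{k1}-a_{k1}b_{k2})$ to match the sign convention in the claim gives the advertised closed-form expressions for $p_{kr}^{n+2/3}$ and $\lambda_{kr}^{n+2/3}$. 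The one place that needs a careful word is why the denominator $a_{k1}b_{k2}-a_{k2}b_{k1}$ is nonzero: I would argue that the $2\times 2$ constraint Jacobian has full row rank whenever $\bG^{n+1/3}$ is invertible and $g^{n+1/3}>0$, so LICQ holds and, equivalently, the Schur complement appearing in the KKT coefficient matrix (which is, up to sign, exactly $a_{k1}b_{k2}-a_{k2}b_{k1}$) is strictly positive. Since $\alpha>0$ guarantees $\bG=\bM(\bp)$ is symmetric positive definite, this is the only genuine obstacle, and it is handled by the standing assumption on $\alpha$.
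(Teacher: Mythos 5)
Your proposal is correct and follows essentially the same route as the paper: decouple by channel, write the two scalar constraints from $\sqrt{g}\bq_k=\bmu_k\bG$, introduce one Lagrange multiplier per constraint, eliminate $(q_{kr},\mu_{kr})$ from the stationarity conditions, substitute into the constraints to get the $2\times 2$ system with coefficients $(a_{k1},a_{k2},a_{k3})$, $(b_{k1},b_{k2},b_{k3})$, and solve by Cramer's rule (your multipliers $\theta_{kr}$ are just the negatives of the paper's $s_{kr}$, so the signs reconcile). Your added observations on feasibility, uniqueness via strict convexity, and the positivity of the determinant $a_{k1}b_{k2}-a_{k2}b_{k1}$ as a Schur complement are correct refinements that the paper omits, but they do not change the argument.
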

\begin{proof}
	We derive the formulas using the method of Lagrange multipliers. The formula can be derived component-wisely. Let $k\in\{1,2,3\}.$ The constraint (\ref{eq.frac2.constraint1}) implies 
	\begin{align}
		\begin{cases}
			F_1(\bmu_k,\bq_k)=0,\\
			F_2(\bmu_k,\bq_k)=0,
		\end{cases}
		\mbox{ with }
		\begin{cases}
			F_1(\bmu_k,\bq_k)=g_{11}\mu_{k1}+g_{21}\mu_{k2}-\sqrt{g}q_{k1},\\
			F_2(\bmu_k,\bq_k)=g_{12}\mu_{k1}+g_{22}\mu_{k2}-\sqrt{g}q_{k2}.
		\end{cases}
		\label{eq.frac2.proof.1}
	\end{align}
	To derive the formula for $\bmu_k,\bq_k$, consider the following Lagrangian functional
	\begin{align}
		E_{k2}=\int_{\Omega} \left( \left| \bq_k-\bp_k\right|^2 +\gamma_1 \left| \bmu_k-\blambda_k\right|^2 \right)d\bx+s_{k1}F_1(\bmu_k,\bq_k)+ s_{k2}F_2(\bmu_k,\bq_k),
	\end{align}
	where $s_{k1},s_{k2}$ are Lagrange multipliers. Computing partial derivatives of $E_{k2}$ with respect to $\bmu_k,\bq_k$ and setting them to 0 gives rise to
	\begin{align}
		\begin{cases} 
			\gamma_1(\mu_{k1}-\lambda_{k1})+s_{k1}g_{11}+s_{k2}g_{12}=0,\\
			\gamma_1(\mu_{k2}-\lambda_{k2})+s_{k1}g_{21}+s_{k2}g_{22}=0,\\
			q_{k1}-p_{k1}-s_{k1}\sqrt{g}=0,\\
			q_{k2}-p_{k2}-s_{k2}\sqrt{g}=0.
		\end{cases}
		\label{eq.frac2.proof.2}
	\end{align}
	Reorganizing (\ref{eq.frac2.proof.2}), we have
	\begin{align}
		\begin{cases}
			\mu_{k1}=\lambda_{k1}-\frac{g_{11}}{\gamma_1}s_{k1}-\frac{g_{12}}{\gamma_1}s_{k2},\\
			\mu_{k2}=\lambda_{k2}-\frac{g_{21}}{\gamma_1}s_{k1}-\frac{g_{22}}{\gamma_1}s_{k2},\\
			q_{k1}=p_{k1}+s_{k1}\sqrt{g},\\
			q_{k2}=p_{k2}+s_{k2}\sqrt{g}.
		\end{cases}
		\label{eq.frac2.proof.3}
	\end{align}
	Substituting (\ref{eq.frac2.proof.3}) into (\ref{eq.frac2.proof.1}) gives rise to
	\begin{align}
		\begin{cases}
			a_{k1}s_{k1}+a_{k2}s_{k2}+a_{k3}=0,\\
			b_{k1}s_{k1}+b_{k2}s_{k2}+b_{k3}=0
		\end{cases}
		\label{eq.frac2.proof.4}
	\end{align}
	for $a_{k1},a_{k2},a_{k3},b_{k1},b_{k2},b_{k3}$ defined in Theorem \ref{thm.frac2}. Solving (\ref{eq.frac2.proof.4}) for $s_{k1},s_{k2}$ gives
	\begin{align}
		s_{k1}=\frac{a_{k2}b_{k3}-b_{k2}a_{k3}}{b_{k2}a_{k1}-a_{k2}b_{k1}}, \quad s_{k2}=\frac{a_{k1}b_{k3}-a_{k3}b_{k1}}{b_{k1}a_{k2}-b_{k2}a_{k1}}.
		\label{eq.frac2.proof.5}
	\end{align}
	Substituting (\ref{eq.frac2.proof.5}) into (\ref{eq.frac2.proof.3}) finishes the proof.
\end{proof}

\begin{remark}
	We remark that the approximation (\ref{eq.frac2.lambda1}) is not strictly the first step of the first order SQP \cite{bai2018analysis} to solve (\ref{eq.frac2.lambda1.0}), as we did not use the Jacobian of the constraint in $S$ when constructing $S_{\bG}$. Nevertheless, numerical experiments suggest that our proposed algorithm converges with this numerical approximation.
\end{remark}

\begin{remark}\label{remark.1}
	Using $\bG^{n+1/3}$ and $g^{n+1/3}$ defined above, another option to update $\blambda^{n+1/3}$ in (\ref{eq.split1.1}) is to replace $m(\bp^{n+1/3})$ in (\ref{eq.frac1.pde.lambda1.var.0}) and (\ref{eq.frac1.pde.lambda1.0}) by $g^{n+1/3}$ to get
	\begin{align}
		\begin{cases}
			\blambda^{n+1/3}=(\blambda^{n+1/3}_k)_{k=1}^3 \in (\cH^1(\Omega))^{3\times 2},\\
			\gamma_1 \displaystyle\int_{\Omega} \blambda_k^{n+1/3}\cdot \bmu_k d\bx+ 2\beta\tau \int_{\Omega} \sqrt{g^{n+1/3}}(\nabla\cdot \blambda_k^{n+1/3})(\nabla \cdot \bmu_i)d\bx=\gamma_1 \displaystyle\int_{\Omega} \blambda_k^n \cdot \bmu_kd\bx,\\
			\forall \bmu_k\in (\cH^1(\Omega))^2, k=1,2,3.
		\end{cases}
		\label{eq.frac1.pde.lambda1.var}
	\end{align}
	and
	\begin{equation}
		\begin{cases}
			\gamma_1\blambda_k^{n+1/3}-2\beta\tau\nabla\left( \sqrt{g^{n+1/3}}(\nabla\cdot \blambda_k^{n+1/3})\right)=\gamma_1\blambda_k^n &\mbox{ in } \Omega,\\
			\blambda_k^{n+1/3}\cdot \bn=0 & \mbox{ on } \partial\Omega, \\
			k=1,2,3.
		\end{cases}
		\label{eq.frac1.pde.lambda1}
	\end{equation}
	Here $g^{n+1/3}$ is a relaxed version of $m(\bp^{n+1/3})$. Such a treatment enhances the coupling between $(\bp,\blambda)$ and $\bG$. In our experiments, (\ref{eq.frac1.pde.lambda1.var})-(\ref{eq.frac1.pde.lambda1}) give similar results as (\ref{eq.frac1.pde.lambda1.var.0})-(\ref{eq.frac1.pde.lambda1.0}). While our algorithm is more stable with (\ref{eq.frac1.pde.lambda1.var})-(\ref{eq.frac1.pde.lambda1}). In the rest of the paper, we stick with (\ref{eq.frac1.pde.lambda1.var})-(\ref{eq.frac1.pde.lambda1}).
\end{remark}

\begin{remark}
	\label{remark.2}
	Similar to Remark \ref{remark.1}, another option to update $\blambda^{n+1/3}$ in (\ref{eq.split2.1}) is to replace $m(\bp^{n+1/3})$ in (\ref{eq.frac1.pde.lambda2.0}) by $g^{n+1/3}$ to get
	\begin{equation}
		\begin{cases}
			\gamma_1\blambda_k^{n+1/3}-2\beta\tau\nabla\left( \sqrt{g^{n+1/3}-\alpha^2}(\nabla\cdot \blambda_k^{n+1/3})\right)=\gamma_1\blambda_k^n &\mbox{ in } \Omega,\\
			\blambda_k^{n+1/3}\cdot \bn=0 & \mbox{ on } \partial\Omega, \\
			k=1,2,3.
		\end{cases}
		\label{eq.frac1.pde.lambda2}
	\end{equation}
	In the rest of the paper, we stick with (\ref{eq.frac1.pde.lambda2}).
\end{remark}
\subsection{On the solution of (\ref{eq.split2.2})}
\label{sec.G_tilde}

The solution $(\bp^{n+2/3},\blambda^{n+2/3})$ is the minimizer of
\begin{equation}
	(\bp^{n+2/3},\blambda^{n+2/3})=\argmin_{(\bq,\bnu)\in \widetilde{S}} \int_{\Omega} \left( \left| \bq-\bp^{n+1/3}\right|^2 +\gamma_1 \left| \bmu-\blambda^{n+1/3}\right|^2 \right)d\bx.
	\label{eq.frac2.lambda2.0}
\end{equation}
Similar to what has been done in Section \ref{sec.G}, we approximate (\ref{eq.frac2.lambda2.0}) by
\begin{equation}
	(\bp^{n+2/3},\blambda^{n+2/3})=\argmin_{(\bq,\bnu)\in \widetilde{S}_{\bG^{n+1/3}}} \int_{\Omega} \left( \left| \bq-\bp^{n+1/3}\right|^2 +\gamma_1 \left| \bmu-\blambda^{n+1/3}\right|^2 \right)d\bx
	\label{eq.frac2.lambda2}
\end{equation}
with
\begin{align}
	\widetilde{S}_{\bG}=\left\{(\bq,\bnu)\in \left((\cL^2(\Omega))^{3\times 2},(\cL^2(\Omega))^{3\times 2}\right), \sqrt{\det \bG-\alpha^2}\bnu_k=\bq_k\cof(\bG), k=1,2,3\right\}.
\end{align}

Denote $g^{n+1/3}=\det \bG^{n+1/3}$. In $\widetilde{S}_{\bG^{n+1/3}}$, we have $\sqrt{g^{n+1/3}-\alpha^2}\bnu=\bq\cof(\bG^{n+1/3})$. We consider the following constrained optimization problem
\begin{equation}
	(\bp^{n+2/3},\blambda^{n+2/3})=\argmin_{(\bq,\bnu):\sqrt{g^{n+1/3}-\alpha^2}\bnu=\bq\cof(\bG^{n+1/3})} \int_{\Omega} \left( \left| \bq-\bp^{n+1/3}\right|^2 +\gamma_1 \left| \bnu-\blambda^{n+1/3}\right|^2 \right)d\bx.
	\label{eq.frac2.lambda2.1}
\end{equation}
For simplicity, we temporally use $g, g_{ij},p_{kr},\lambda_{kr}$ to denote $g^{n+1/3}, g_{ij}^{n+1/3},p_{kr}^{n+1/3},\lambda_{kr}^{n+1/3}$ in this subsection. The explicit solution for (\ref{eq.frac2.lambda2.1}) is given in the following theorem
\begin{thm}\label{thm.frac2.2}
	Let 
	\begin{align*}
		&a_{k1}=-g_{22}^2-g_{12}^2-\frac{g-\alpha^2}{\gamma_1},\ a_{k2}=g_{12}g_{22}+g_{11}g_{12},\ a_{k3}=g_{22}p_{k1}-g_{12}p_{k2}-\sqrt{g-\alpha^2}\lambda_{k1},\\
		&b_{k1}=-g_{11}g_{12}+g_{12}g_{22}, \ b_{k2}=g_{11}^2-g_{12}^2-\frac{g-\alpha^2}{\gamma_1},\ b_{k3}=g_{11}p_{k1}-g_{12}p_{k2}-\sqrt{g-\alpha^2}\lambda_{k2}.
	\end{align*}
	The solution to (\ref{eq.frac2.lambda2.1}) is given as
	\begin{align*}
		&\lambda_{k1}^{n+2/3}=\lambda_{k1}+ \frac{\sqrt{g-\alpha^2}}{\gamma_1} \frac{a_{k2}b_{k3}-b_{k2}a_{k3}}{b_{k2}a_{k1}-a_{k2}b_{k1}},\\
		&\lambda_{k2}^{n+2/3}=\lambda_{k2}+ \frac{\sqrt{g-\alpha^2}}{\gamma_1} \frac{a_{k1}b_{k3}-b_{k1}a_{k3}}{b_{k1}a_{k2}-a_{k1}b_{k2}},\\
		&p_{k1}^{n+2/3}=p_{k1}-g_{22}\frac{a_{k2}b_{k3}-a_{k3}-b_{k2}}{a_{k1}b_{k2}-a_{k2}b_{k1}}+ g_{22}\frac{a_{k1}b_{k3}-a_{k3}b_{k1}}{a_{k2}b_{k1}-a_{k1}b_{k2}},\\
		&p_{k2}^{n+2/3}=p_{k2}+g_{12}\frac{a_{k2}b_{k3}-a_{k3}-b_{k2}}{a_{k1}b_{k2}-a_{k2}b_{k1}}- g_{11}\frac{a_{k1}b_{k3}-a_{k3}b_{k1}}{a_{k2}b_{k1}-a_{k1}b_{k2}},
	\end{align*}
	for $i=1,2,3$.
\end{thm}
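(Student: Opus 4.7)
The plan is to mimic the proof of Theorem \ref{thm.frac2} essentially verbatim: derive the stationarity conditions for the constrained quadratic minimization via Lagrange multipliers, eliminate the primal variables in favor of the multipliers, solve the resulting $2\times 2$ linear system for the multipliers, and substitute back. The problem decouples across $k=1,2,3$, so it suffices to work with a single channel and repeat.

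First I would encode the constraint componentwise. Using the $2\times 2$ cofactor $\cof(\bG)=\begin{pmatrix}g_{22}&-g_{12}\\-g_{12}&g_{11}\end{pmatrix}$ (since $\bG$ is symmetric), the constraint $\sqrt{g-\alpha^2}\bnu_k=\bq_k\cof(\bG)$ becomes the pair
\begin{align*}
F_1(\bnu_k,\bq_k)&=g_{22}q_{k1}-g_{12}q_{k2}-\sqrt{g-\alpha^2}\,\nu_{k1}=0,\\
F_2(\bnu_k,\bq_k)&=-g_{12}q_{k1}+g_{11}q_{k2}-\sqrt{g-\alpha^2}\,\nu_{k2}=0.
\end{align*}
Next I would form, in direct analogy with the proof of Theorem \ref{thm.frac2}, the Lagrangian
$$
E_{k2}=\int_{\Omega}\left(|\bq_k-\bp_k|^2+\gamma_1|\bnu_k-\blambda_k|^2\right)d\bx+s_{k1}F_1+s_{k2}F_2,
$$
and set its partial derivatives with respect to $q_{k1},q_{k2},\nu_{k1},\nu_{k2}$ equal to zero (absorbing the factor $2$ into the multipliers).

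The resulting stationarity equations express $(\bq_k,\bnu_k)$ affinely in $(s_{k1},s_{k2})$: schematically $\bq_k=\bp_k+[\text{linear in }s_{kj}\text{ with coefficients from }\cof(\bG)]$ and $\bnu_k=\blambda_k+\sqrt{g-\alpha^2}(s_{k1},s_{k2})/\gamma_1$. Substituting these expressions into $F_1=0$ and $F_2=0$ yields a $2\times 2$ linear system
$$
\begin{cases}
a_{k1}s_{k1}+a_{k2}s_{k2}+a_{k3}=0,\\
b_{k1}s_{k1}+b_{k2}s_{k2}+b_{k3}=0,
\end{cases}
$$
whose coefficients are precisely the $a_{kj},b_{kj}$ listed in the theorem. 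Solving by Cramer's rule, one obtains $s_{k1}=(a_{k2}b_{k3}-b_{k2}a_{k3})/(b_{k2}a_{k1}-a_{k2}b_{k1})$ and $s_{k2}=(a_{k1}b_{k3}-a_{k3}b_{k1})/(b_{k1}a_{k2}-b_{k2}a_{k1})$, and plugging back into the formulas for $\bq_k,\bnu_k$ gives the closed forms in the statement.

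The main obstacle will be purely bookkeeping rather than conceptual: the cofactor entries introduce several sign and index permutations (compared with the $\sqrt{g}\bG^{-1}$ case of Theorem \ref{thm.frac2}), so one has to be careful when forming the products $\bq_k\cof(\bG)$ and when taking partial derivatives of $s_{k1}F_1+s_{k2}F_2$ that the $g_{12}$ versus $g_{22}$ and $g_{11}$ versus $g_{12}$ entries appear in the right slots. A minor auxiliary point is nondegeneracy of the linear system: one should observe that $a_{k1}b_{k2}-a_{k2}b_{k1}\neq 0$ whenever $\gamma_1>0$ (since the quadratic part of the Lagrangian is strictly convex in $(\bq,\bnu)$, guaranteeing a unique stationary point and hence a nonsingular KKT system), which justifies the divisions appearing in the closed-form formulas.
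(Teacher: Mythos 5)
Your proposal follows exactly the route the paper intends: the paper omits the proof of Theorem \ref{thm.frac2.2}, stating only that it is proved ``similarly to Theorem \ref{thm.frac2},'' and your Lagrange-multiplier derivation (componentwise cofactor constraint, affine elimination of $\bq_k,\bnu_k$ in the multipliers, Cramer's rule on the resulting $2\times2$ system) is precisely that argument. One caveat on the bookkeeping you yourself flag: carrying the substitution through actually yields $b_{k1}=g_{11}g_{12}+g_{12}g_{22}$, $b_{k2}=-g_{11}^2-g_{12}^2-\frac{g-\alpha^2}{\gamma_1}$ and $b_{k3}=-g_{12}p_{k1}+g_{11}p_{k2}-\sqrt{g-\alpha^2}\lambda_{k2}$, so the signs printed in the theorem statement (and the stray $g_{22}$ and the numerators $a_{k2}b_{k3}-a_{k3}-b_{k2}$ in the $p$-updates) appear to be typographical errors in the paper rather than a defect of your derivation.
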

Theorem \ref{thm.frac2.2} can be proved similarly to Theorem \ref{thm.frac2}. The proof is omitted here.

\subsection{On the solution of (\ref{eq.split1.3}) and (\ref{eq.split2.3})}
Problems (\ref{eq.split1.3}) and (\ref{eq.split2.3}) are the same, in which $\bp^{n+1}$ solves
\begin{align}
	\argmin_{\bq\in (\cH^1(\Omega))^{3\times 2}}\left[\frac{1}{2\tau}\int_{\Omega} |\bq-\bp^{n+2/3}|^2 d\bx+ \frac{1}{2\eta} \sum_{k=1}^3 \int_{\Omega} |(v_{\bq})_k-f_k|^2 d\bx + I_{\Sigma_f}(\bq)\right].
\end{align}
Since $\bp^{n+1}\in \Sigma_f$, there exists some $\bu^{n+1}\in (\cH^1(\Omega))^3$ such that $\bp^{n+1}=\nabla \bu^{n+1}$. Furthermore, $\bu^{n+1}$ solves
\begin{align}
	\argmin_{\bv\in (\cH^1(\Omega))^{3}}\left[\frac{1}{2\tau}\int_{\Omega} |\nabla \bv-\bp^{n+2/3}|^2 d\bx+ \frac{1}{2\eta} \sum_{k=1}^3 \int_{\Omega} |v_k-f_k|^2 d\bx\right].
	\label{eq.frac3.var}
\end{align}
Here $\bu^{n+1}$ is also the unique weak solution of the folloing linear elliptic problem
\begin{align}
	\begin{cases} 
		-\eta\nabla^2 u_k^{n+1} + \tau\frac{1}{\eta} u_k^{n+1}=-\eta\nabla\cdot \bp_k^{n+2/3}+\tau f_k \mbox{ in } \Omega,\\
		\nabla u_k^{n+1}\cdot \bn=0 \mbox{ on } \partial \Omega,\\
		k=1,2,3.
	\end{cases}
	\label{eq.frac3.pde}
\end{align}
After $\bu^{n+1}$ is solved, we set $\bp^{n+1}=\nabla \bu^{n+1}$.
\subsection{Initial condition} \label{sec.initial}
Both schemes (\ref{eq.split1.1})--(\ref{eq.split1.3}) and (\ref{eq.split2.1})--(\ref{eq.split2.3}) require an initial condition $(\bp^0,\blambda^0)$. We first initialize 
$$\bu^0=\bbf \quad \mbox{ or } \quad \bu^0=\mathbf{0}$$ 
and compute $\bp^0=\nabla \bu^0$.
Then we set $\blambda^0=\sqrt{\det \bp^0}\bp^0(\bM(\bp^0))^{-1}$ for problem (\ref{eq.model1.var}) or $\blambda^0=\frac{1}{\sqrt{\det \bp^0-\alpha^2}}\bp^0\cof(\bM(\bp^0))$ for problem (\ref{eq.model2.var}).

We also take $\bG$ as a variable which is updated during iterations, we initialize $\bG^0=\bM(\bp^0)$.

Our algorithms for problems (\ref{eq.model1.var}) and  problem (\ref{eq.model2.var}) are summarized in Algorithm \ref{alg.1} and \ref{alg.2}, respectively.
\begin{remark}
	The functionals in (\ref{eq.model1}) and (\ref{eq.model2}) are complicated and nonconvex. All we can expect is that our algorithms converge to a local minimizer. However, in our experiments, both algorithms are robust to initial conditions and noise. With either initial condition discussed above, every time we generate noisy images with random Gaussian noise, our algorithms always provide good results.
\end{remark}	

\begin{algorithm}[th!]
	\caption{\label{alg.1}An operator-splitting method for solving problem (\ref{eq.model1.var}).}
	\begin{algorithmic}
		\STATE {\bf Input:} The noisy image $\bbf$, parameters $\alpha,\beta,\eta,\tau, \gamma_1,\gamma_2$.
		\STATE {\bf Initialization:} Set  $n=0,$ $(\bp^0,\blambda^0,\bG^0)=(\bp_0,\blambda_0,\bG_0)$ as discussed in Section \ref{sec.initial}.
		\WHILE{not converge}
		\STATE 1. Solve (\ref{eq.split1.1}) using (\ref{eq.freezq}) and (\ref{eq.frac1.pde.lambda1}) (or (\ref{eq.frac1.pde.lambda1.0})) to obtain  $(\bp^{n+1/3}, \blambda^{n+1/3})$. \\
		\ \ \ \ Update $\bG^{n+1/3}$ using (\ref{eq.bG}).
		\STATE 2. Solve (\ref{eq.split1.2}) using Theorem \ref{thm.frac2} to obtain $(\bp^{n+2/3}, \blambda^{n+2/3})$.\\
		\ \ \ \ Update $\bG^{n+2/3}$ using (\ref{eq.bG}).
		\STATE 3. Solve (\ref{eq.split1.3}) using (\ref{eq.frac3.pde}) to obtain $(\bp^{n+1}, \blambda^{n+1})$.\\
		\ \ \ \ Update $\bG^{n+1}$ using (\ref{eq.bG}).
		\STATE 4. Set $n=n+1$.
		\ENDWHILE
		\STATE Solve (\ref{eq.vq}) using the converged function $\bp^*$ to obtain $u^*$.
		\STATE {\bf Output:} The function $u^*$.
	\end{algorithmic}
\end{algorithm}

\begin{algorithm}[th!]
	\caption{\label{alg.2}An operator-splitting method for solving problem (\ref{eq.model2.var}).}
	\begin{algorithmic}
		\STATE {\bf Input:} The noisy image $f$, parameters $\alpha,\beta,\eta,\tau,\gamma_1,\gamma_2$.
		\STATE {\bf Initialization:} Set $n=0,$ $(\bp^0,\blambda^0)=(\bp_0,\blambda_0)$.
		\WHILE{not converge}
		\STATE 1. Solve (\ref{eq.split2.1}) using (\ref{eq.freezq2}) and (\ref{eq.frac1.pde.lambda2}) (or (\ref{eq.frac1.pde.lambda2.0})) to obtain $(\bp^{n+1/3}, \blambda^{n+1/3})$.\\
		\ \ \ \ Update $\bG^{n+1/3}$ using (\ref{eq.bG}).
		\STATE 2. Solve (\ref{eq.split2.2}) using Theorem \ref{thm.frac2.2} to obtain $(\bp^{n+2/3}, \blambda^{n+2/3})$.\\
		\ \ \ \ Update $\bG^{n+2/3}$ using (\ref{eq.bG}).
		\STATE 3. Solve (\ref{eq.split2.3}) using (\ref{eq.frac3.pde}) to obtain $(\bp^{n+1}, \blambda^{n+1})$.\\
		\ \ \ \ Update $\bG^{n+1}$ using (\ref{eq.bG}).
		\STATE 4. Set $n=n+1$.
		\ENDWHILE
		\STATE Solve (\ref{eq.vq}) using the converged function $\bp^*$ to obtain $u^*$.
		\STATE {\bf Output:} The function $u^*$.
	\end{algorithmic}
\end{algorithm}

\subsection{On the periodic boundary condition}
The solutions to each subproblem discussed so far use Neumann boundary conditions. In image processing, the periodic boundary condition is a popular condition which allows one to use FFT. In this subsection, we discuss the minimal efforts necessary to modify the algorithms and solvers above to accommodate periodic boundary conditions.

Assume our computational domain is $\Omega=[0,L_1]\times[0,L_2]$. We first replace the function space $\cH^1(\Omega)$ by $\cH^1_P(\Omega)$ defined as
\begin{align*}
	\cH^1_P(\Omega)=\left\{v\in \cH^1(\Omega): v(0,:)=v(L_1,:), \ v(:,0)=v(:,L_2)\right\}.
\end{align*}
The set $\Sigma_f$ is replaced by
\begin{align*}
	\Sigma_f=\left\{\bq\in (\cL^2(\Omega))^{3\times 2}, \exists \bv\in \cH_P^1(\Omega) \mbox{ such that } \bq=\nabla v \mbox{ and } \int_{\Omega} v_k-f_k dx=0 \mbox{ for }k=1,2,3\right\}.
\end{align*}
Problems (\ref{eq.model1.var}), (\ref{eq.vq}) and (\ref{eq.model2.var}) are replaced by
\begin{align}
	&\min\limits_{\substack{\bq\in (\cH_P^1(\Omega))^{3\times 2},\\ \bmu\in (\cH_P^1(\Omega))^{3\times 2}}} \displaystyle\int_{\Omega} \left(1+ \sum_{k=1}^3 |\nabla\cdot \bmu_k|^2 \right)\sqrt{m(\bq)} d\bx  \nonumber\\
	&\hspace{4cm}+\frac{1}{2\eta} \displaystyle\sum\limits_{k=1}^3 \displaystyle\int_{\Omega} |(v_{\bq})_k-f_k|^2 d\bx +I_{\Sigma_f}(\bq)+I_{S}(\bq,\bmu),
	\label{eq.model1.var.periodic}
\end{align}
\begin{equation}
	\begin{cases}
		\nabla^2(v_{\bq})_k=\nabla \cdot \bq_k,\\
		(v_{\bq})_k(x_1,0)=(v_{\bq})_k(x_1,L_2),\ 0\leq x_1\leq L_1,\\
		(v_{\bq})_k(0,x_2)=(v_{\bq})_k(L_1,x_2),\ 0\leq x_2\leq L_2,\\
		\left(\frac{\partial (v_{\bq})_k}{\partial x_2}-q_{k2}\right)(x_1,0)=\left(\frac{\partial (v_{\bq})_k}{\partial x_2}-q_{k2}\right)(x_1,L_2),\ 0\leq x_1\leq L_1,\\
		\left(\frac{\partial (v_{\bq})_k}{\partial x_1}-q_{k1}\right)(0,x_2)=\left(\frac{\partial (v_{\bq})_k}{\partial x_1}-q_{k1}\right)(L_1,x_2),\ 0\leq x_2\leq L_2,\\
		\displaystyle\int_{\Omega} (v_{\bq})_k d\bx=\displaystyle\int_{\Omega} f_k d\bx,
	\end{cases}
	\label{eq.vq.periodic}
\end{equation}
and
\begin{align}
	&\min\limits_{\substack{\bq\in (\cH_P^1(\Omega))^{3\times 2},\\ \bnu\in (\cH_P^1(\Omega))^{3\times 2}}}
	\displaystyle\int_{\Omega} \left(1+ \sum_{k=1}^3 |\nabla\cdot \bnu_k|^2 \right)\sqrt{m(\bq)-\alpha^2} d\bx \nonumber\\
	&\hspace{4cm} +\frac{1}{2\eta} \displaystyle\sum_{k=1}^3 \displaystyle\int_{\Omega} |(v_{\bq})_k-f_k|^2 d\bx +I_{\Sigma_f}(\bq)+I_{\widetilde{S}}(\bq,\bnu),
	\label{eq.model2.var.periodic}
\end{align}
respectively. We modify subproblem solvers as follows

Replace(\ref{eq.frac1.pde.lambda1.var}), (\ref{eq.frac1.pde.lambda1}) and (\ref{eq.frac1.pde.lambda2}) by
\begin{align}
	\begin{cases}
		\blambda^{n+1/3}=(\blambda^{n+1/3}_k)_{k=1}^3 \in (\cH_P^1(\Omega))^{3\times 2},\\
		\gamma_1 \displaystyle\int_{\Omega} \blambda_k^{n+1/3}\cdot \bmu_k dx+ 2\beta\tau \int_{\Omega} \sqrt{g^{n+1/3}}(\nabla\cdot \blambda_k^{n+1/3})(\nabla \cdot \bmu_k)dx=\gamma_1 \displaystyle\int_{\Omega} \blambda_k^n \cdot \bmu_k d\bx,\\
		\forall \bmu_k\in (\cH_P^1(\Omega))^2, k=1,2,3,
	\end{cases}
	\label{eq.frac1.pde.lambda1.var.periodic}
\end{align}
\begin{equation}
	\begin{cases}
		\gamma_1\blambda_k^{n+1/3}-2\beta\tau\nabla\left( \sqrt{g^{n+1/3}}(\nabla\cdot \blambda_k^{n+1/3})\right)=\gamma_1\blambda_k^n &\mbox{ in } \Omega,\\
		\lambda_{k2}^{n+1/3}(x_1,0)=\lambda_{k2}^{n+1/3}(x_1,L_2),\ 0\leq x_1\leq L_1,\\
		\lambda_{k1}^{n+1/3}(0,x_2)=\lambda_{k1}^{n+1/3}(L_1,x_2),\ 0\leq x_2\leq L_2,\\
		k=1,2,3,
	\end{cases}
	\label{eq.frac1.pde.lambda1.periodic}
\end{equation}
and
\begin{equation}
	\begin{cases}
		\gamma_1\blambda_k^{n+1/3}-2\beta\tau\nabla\left( \sqrt{g^{n+1/3}-\alpha^2}(\nabla\cdot \blambda_k^{n+1/3})\right)=\gamma_1\blambda_k^n &\mbox{ in } \Omega,\\
		\lambda_{k2}^{n+1/3}(x_1,0)=\lambda_{k2}^{n+1/3}(x_1,L_2),\ 0\leq x_1\leq L_1,\\
		\lambda_{k1}^{n+1/3}(0,x_2)=\lambda_{k1}^{n+1/3}(L_1,x_2),\ 0\leq x_2\leq L_2,\\
		k=1,2,3.
	\end{cases}
	\label{eq.frac1.pde.lambda2.periodic}
\end{equation}
Equations (\ref{eq.frac1.pde.lambda1.0}) and (\ref{eq.frac1.pde.lambda2.0}) can be modified analogously.

Finally, we replace (\ref{eq.frac3.var}) and (\ref{eq.frac3.pde}) by
\begin{align}
	\argmin_{\bv\in (\cH_P^1(\Omega))^{3}}\left[\frac{1}{2\tau}\int_{\Omega} |\nabla \bv-\bp^{n+2/3}|^2 d\bx+ \frac{1}{2\eta} \sum_{k=1}^3 \int_{\Omega} |v_k-f_k|^2 d\bx\right],
	\label{eq.frac3.var.periodic}
\end{align}
and
\begin{align}
	\begin{cases} 
		-\eta\nabla^2 u_k^{n+1} + \tau\frac{1}{\eta} u_k^{n+1}=-\eta\nabla\cdot \bp_k^{n+2/3}+\tau f_k \mbox{ in } \Omega,\\
		\frac{\partial u_k^{n+1}}{\partial x_2}(x_1,0)=\frac{\partial u_k^{n+1}}{\partial x_2}(x_1,L_2),\ 0\leq x_1\leq L_1,\\
		\frac{\partial u_k^{n+1}}{\partial x_1}(0,x_2)=\frac{\partial u_k^{n+1}}{\partial x_1}(L_1,x_2),\ 0\leq x_2\leq L_2,\\
		k=1,2,3,
	\end{cases}
	\label{eq.frac3.pde.periodic}
\end{align}
respectively.

Problems (\ref{eq.vq.periodic}), (\ref{eq.frac1.pde.lambda2.periodic}) and (\ref{eq.frac3.pde.periodic}) are linear elliptic problems which can be solved efficiently by FFT. In the rest of this article, periodic boundary conditions are used.

\section{Numerical discretization}
\label{sec.dis}
In this section, we numerically discretize the scheme (\ref{eq.split1.1})-(\ref{eq.split1.3}) and (\ref{eq.split2.1})--(\ref{eq.split2.3}) with periodic boundary conditions. Let our computational domain be $\Omega=[0,L_1]\times [0,L_2]$. We discretize $\Omega$ by $M\times N$ grids with step $h=L_1/M=L_2/N$. For any function $v$ defined on $\Omega$, we denote $v(ih,jh)$ by $v(i,j)$ for $1\leq i\leq M, \ 1\leq j\leq N$. We assume all functions satisfy the periodic boundary condition.

We first define several difference operators. For $1\leq i\leq M, \ 1\leq j\leq N$ and a scalar-valued function $v$, define the forward $(+)$ and backward $(-)$ difference by
\begin{align*}
	\partial_1^+ v(i,j)=(v(i+1,j)-v(i,j))/h,\ \partial_1^- v(i,j)=(v(i,j)-v(i-1,j))/h,\\
	\partial_2^+ v(i,j)=(v(i,j+1)-v(i,j))/h,\ \partial_2^- v(i,j)=(v(i,j)-v(i,j-1))/h,
\end{align*}
where $v(M+1,j)=v(1,j),\ v(0,j)=v(M,j), \ v(i,N+1)=v(i,1),\ v(i,0)=v(i,N)$ are used. With the notations above, the forward $(+)$ and backward $(-)$ gradient and divergence for scalar-valued function $v$ and vector-valued function $\bq$ are defined as
\begin{align*}
	&\nabla^{\pm} v(i,j)=(\partial^{\pm}_1 v(i,j), \partial^{\pm}_2 v(i,j)),\\
	&\diver^{\pm} \bq(i,j)= \partial_1^{\pm}q_1(i,j)+\partial_2^{\pm} q_2(i,j),\ \nabla^{\pm}\bq(i,j)=
	\begin{pmatrix}
		\partial^{\pm}_1 q_1(i,j) & \partial^{\pm}_2 q_1(i,j) \\
		\partial^{\pm}_1 q_2(i,j) & \partial^{\pm}_2 q_2(i,j)
	\end{pmatrix}.
\end{align*}

Define the shifting operator and identity operator by
\begin{align*}
	\cS_1^{\pm} v(i,j)=v(i\pm1,j), \ \cS_2^{\pm} v(i,j)=v(i,j\pm1), \ \cI v(i,j)=v(i,j).
\end{align*}
Let $\cF$ and $\cF^{-1}$ denote the discrete Fourier transform and the inverse transform. We have
\begin{align*}
	&\cF(\cS_1^{\pm} v)(i,j)=(\cos z_i\pm \sqrt{-1} \sin z_i) \cF(v)(i,j),\\
	&\cF(\cS_2^{\pm} v)(i,j)=(\cos z_j\pm \sqrt{-1} \sin z_j) \cF(v)(i,j),
\end{align*}
where 
\begin{align}
	z_i=\frac{2\pi}{M}(i-1), \ z_j=\frac{2\pi}{N}(j-1)
	\label{eq.z}
\end{align}
for $i=1,...,M$ and $j=1,...,N$.

\subsection{Computing the discrete analogue of $\bp^{n+1/3},\blambda^{n+1/3}$ in (\ref{eq.split1.1}) and (\ref{eq.split2.1})}
\label{sec.split1.1.dis}
In (\ref{eq.split1.1}), $\bp^{n+1/3}$ can be computed pixel-wisely using (\ref{eq.freezq}). For $\blambda^{n+1/3}$, we solve (\ref{eq.frac1.pde.lambda1.periodic}) whose discrete analogue is 
\begin{equation}
	\gamma_1\blambda_k^{n+1/3}-2\beta\tau\nabla^+\left( \sqrt{g^{n+1/3}}(\diver^- \blambda_k^{n+1/3})\right)=\gamma_1\blambda_k^n 
	\label{eq.frac1.pde.lambda1.periodic.dis}
\end{equation}
for $k=1,2,3$.
Instead of solving (\ref{eq.frac1.pde.lambda1.periodic.dis}), we use the frozen coefficient approach \cite{tai2011fast,he2020curvature} to solve
\begin{equation}
	\gamma_1\blambda_k^{n+1/3}-c_1\nabla^+(\diver^- \blambda_k^{n+1/3}) =\gamma_1\blambda_k^n +\nabla^+\left( 2\beta\tau\left(\sqrt{g^{n+1/3}}-c_1\right)(\diver^- \blambda_k^{n})\right),
	\label{eq.frac1.pde.lambda1.periodic.dis.1}
\end{equation}
where $c_1>0$ is some constant. Problem (\ref{eq.frac1.pde.lambda1.periodic.dis.1}) can be solved efficiently by FFT. We first rewrite (\ref{eq.frac1.pde.lambda1.periodic.dis.1}) in matrix form as
\begin{align}
	\begin{pmatrix}
		\gamma_1- c_1 \partial_1^+\partial_1^-&-c_1\partial_1^+\partial_2^- \\
		-c_1\partial_2^+\partial_1^- & \gamma_1-c_1 \partial_2^+\partial_2^-
	\end{pmatrix}
	\begin{pmatrix}
		\lambda_{k1}^{n+1/3}\\ \lambda_{k2}^{n+1/3}
	\end{pmatrix}=
	\begin{pmatrix}
		w_1\\w_2
	\end{pmatrix}
	\label{eq.frac1.pde.lambda1.periodic.dis.2}
\end{align}
with 
\begin{align*}
	&w_1=\gamma_1\lambda_{k1}^n+\partial_1^+\left(2\beta\tau\left(\sqrt{g^{n+1/3}}-c_1\right) \left(\partial_1^-\lambda_{k1}^n+\partial_2^- \lambda_{k2}^n\right)\right),\\
	&w_2=\gamma_1\lambda_{k2}^n+\partial_2^+\left(2\beta\tau\left(\sqrt{g^{n+1/3}}-c_1\right) \left(\partial_1^-\lambda_{k1}^n+\partial_2^- \lambda_{k2}^n\right)\right).
\end{align*}
Problem (\ref{eq.frac1.pde.lambda1.periodic.dis.2}) is equivalent to
\begin{align}
	\begin{pmatrix}
		\gamma_1- c_1 (\cS_1^+-\cI)(\cI-\cS_1^-)/h^2&-c_1(\cS_1^+-\cI)(\cI-\cS_2^-)/h^2 \\
		-c_1(\cS_2^+-\cI)(\cI-\cS_1^-)/h^2 & \gamma_1-c_1 (\cS_2^+-\cI)(\cI-\cS_2^-)/h^2
	\end{pmatrix}
	\begin{pmatrix}
		\cF(\lambda_{k1}^{n+1/3})\\ \cF(\lambda_{k2}^{n+1/3})
	\end{pmatrix}=
	\begin{pmatrix}
		\cF(w_1)\\\cF(w_2)
	\end{pmatrix}.
	\label{eq.frac1.pde.lambda1.periodic.dis.3}
\end{align}
Taking discrete Fourier transform on both sides of (\ref{eq.frac1.pde.lambda1.periodic.dis.3}) gives rise to
\begin{align}
	\begin{pmatrix}
		a_{11}&a_{12} \\
		a_{21} & a_{22}
	\end{pmatrix}
	\begin{pmatrix}
		\cF(\lambda_{k1}^{n+1/3})\\ \cF(\lambda_{k2}^{n+1/3})
	\end{pmatrix}=
	\begin{pmatrix}
		\cF(w_1)\\\cF(w_2)
	\end{pmatrix}
	\label{eq.frac1.pde.lambda1.periodic.dis.4}
\end{align}
with
\begin{align*}
	&a_{11}=\gamma_1-2c_1(\cos z_i-1)/h^2,\ a_{22}=\gamma_1-2c_1(\cos z_j-1)/h^2,\\
	&a_{12}=-c_1(1-\cos z_i-\sqrt{-1}\sin z_i)(1-\cos z_j+\sqrt{-1}\sin z_j)/h^2,\\
	&a_{21}=-c_1(1-\cos z_j-\sqrt{-1}\sin z_j)(1-\cos z_i+\sqrt{-1}\sin z_i)/h^2,
\end{align*}
and $z_i,z_j$ defined in (\ref{eq.z}). We have
\begin{align}
	\begin{pmatrix}
		\lambda_{k1}^{n+1/3}\\ \lambda_{k2}^{n+1/3}
	\end{pmatrix}=\Real\left(\cF^{-1}\left[\frac{1}{a_{11}a_{22}-a_{12}a_{21}}
	\begin{pmatrix}
		a_{22}\cF(w_1)-a_{12}\cF(w_2)\\
		-a_{21}\cF(w_1)+a_{22}\cF(w_2)
	\end{pmatrix} \right]\right).
	\label{eq.frac1.pde.lambda1.periodic.dis.5}
\end{align}

In (\ref{eq.split2.1}), $\bp^{n+1/3}$ can be computed pixel-wisely using (\ref{eq.freezq2}). The computation of the discrete analogue of $\blambda^{n+1/3}$ can be conducted similarly to that of $\blambda^{n+1/3}$ in (\ref{eq.split1.1}) as is discussed above. One can use (\ref{eq.frac1.pde.lambda1.periodic.dis.5}) except replacing $w_1,w_2$ by
\begin{align*}
	&w_1=\gamma_1\lambda_{k1}^n+\partial_1^+\left(2\beta\tau\left(\sqrt{g^{n+1/3}-\alpha^2}-c_1\right) \left(\partial_1^-\lambda_{k1}^n+\partial_2^- \lambda_{k2}^n\right)\right),\\
	&w_2=\gamma_1\lambda_{k2}^n+\partial_2^+\left(2\beta\tau\left(\sqrt{g^{n+1/3}-\alpha^2}-c_1\right) \left(\partial_1^-\lambda_{k1}^n+\partial_2^- \lambda_{k2}^n\right)\right).
\end{align*}
\subsection{Computing the discrete analogue of $\bp^{n+2/3},\blambda^{n+2/3}$ in (\ref{eq.split1.2}) and (\ref{eq.split2.2})}
The computation of the discrete analogues of $\bp^{n+2/3},\blambda^{n+2/3}$ in (\ref{eq.split1.2}) and (\ref{eq.split2.2}) can be conducted pixel-wisely using Theorem \ref{thm.frac2} and \ref{thm.frac2.2}, respectively.

\subsection{Computing the discrete analogue of $\bp^{n+1}$ in (\ref{eq.split1.3}) and (\ref{eq.split2.3})}
For $\bp^{n+1}$ in (\ref{eq.split1.3}) and (\ref{eq.split2.3}), we solve (\ref{eq.frac3.pde.periodic}) whose discrete analogue is 
\begin{align}
	\begin{cases} 
		-\eta\diver^-(\nabla^+ u_k^{n+1}) + \tau u_k^{n+1}=-\eta\diver^- \bp_k^{n+2/3}+\tau f_k \mbox{ in } \Omega,\\
		k=1,2,3.
	\end{cases}
	\label{eq.frac3.pde.periodic.dis}
\end{align}
Problem (\ref{eq.frac3.pde.periodic.dis}) is equivalent to
\begin{align}
	\begin{cases} 
		\left(\tau h^2\cI-\eta (\cI-\cS_1^-)(\cS_1^+-\cI)-\eta (\cI-\cS_2^-)(\cS_2^+-\cI) \right) u_k^{n+1}= g_k,\\
		k=1,2,3,
	\end{cases}
	\label{eq.frac3.pde.periodic.dis1}
\end{align}
where $g_k=-\eta h^2\diver^- \bp_k^{n+2/3}+\tau h^2f_k$. Taking the discrete Fourier transform on both sides, we get
$$
b\cF(u_k^{n+1})=\cF(g_k)
$$
with $b=\tau h^2+4\eta-2\eta\cos z_i-2\eta \cos z_j,$ where $z_i,z_j$ are defined in (\ref{eq.z}).
We compute 
\begin{align}
	u_k^{n+1}=\Real\left(\cF^{-1}\left( \frac{\cF(g)}{b}\right)\right)
\end{align}
and $\bp^{n+1}=\nabla^+ \bu$.

\begin{figure}[t!]
	\centering
	\includegraphics[width=0.25\textwidth]{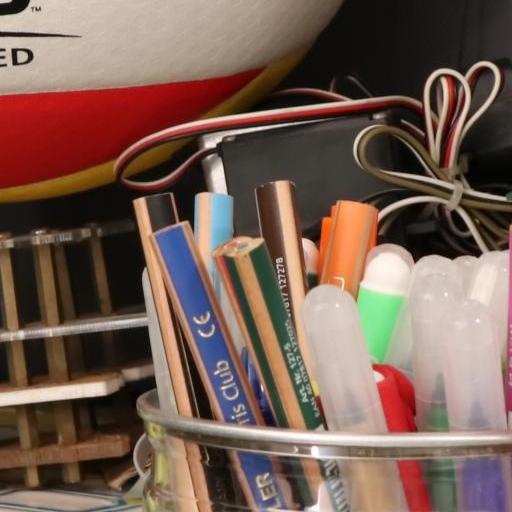}\,
	\includegraphics[width=0.25\textwidth]{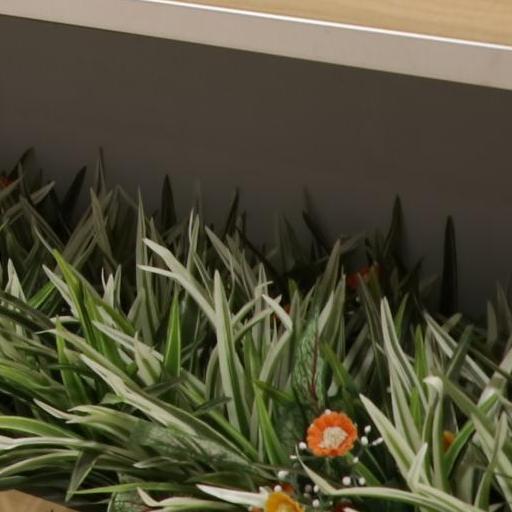}\,
	\includegraphics[width=0.25\textwidth]{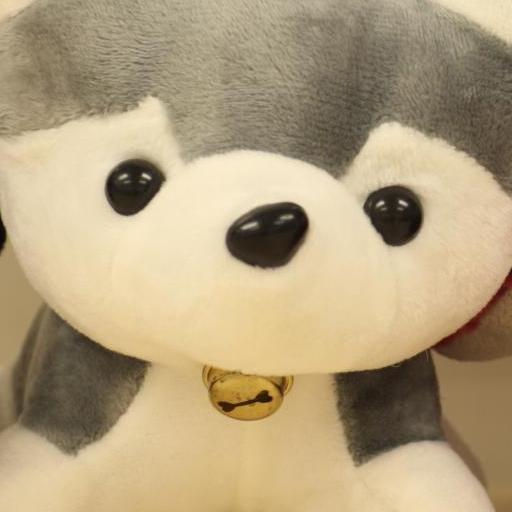}
	\caption{Empirical justification of proposed models on
		three images sampled from \cite{xu2018real}. 
		From left to right, `Pens', `Plant' and `Toy'.}
	\label{fig.justify}
\end{figure}

\begin{figure}[t!]
	\centering
	\begin{tabular}{ccc}
		(a) & (b) & (c)\\
		\includegraphics[width=0.3\textwidth]{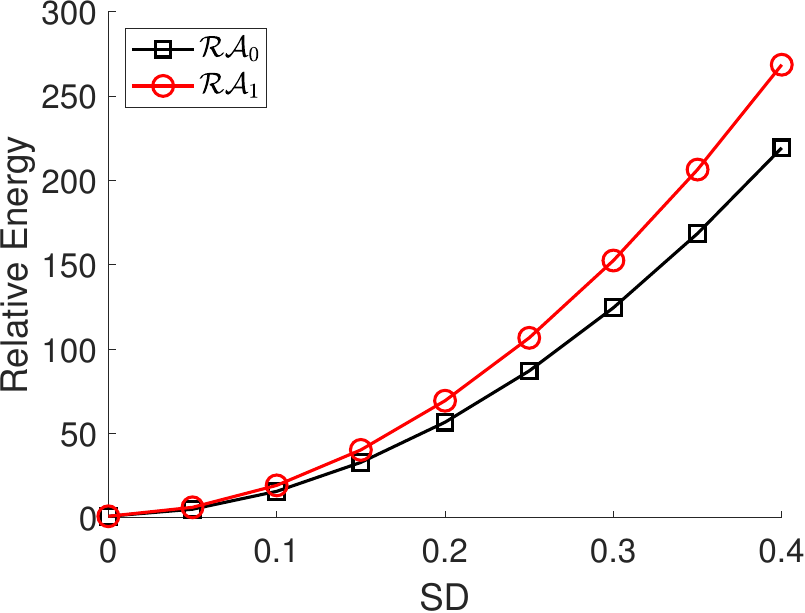}&
		\includegraphics[width=0.3\textwidth]{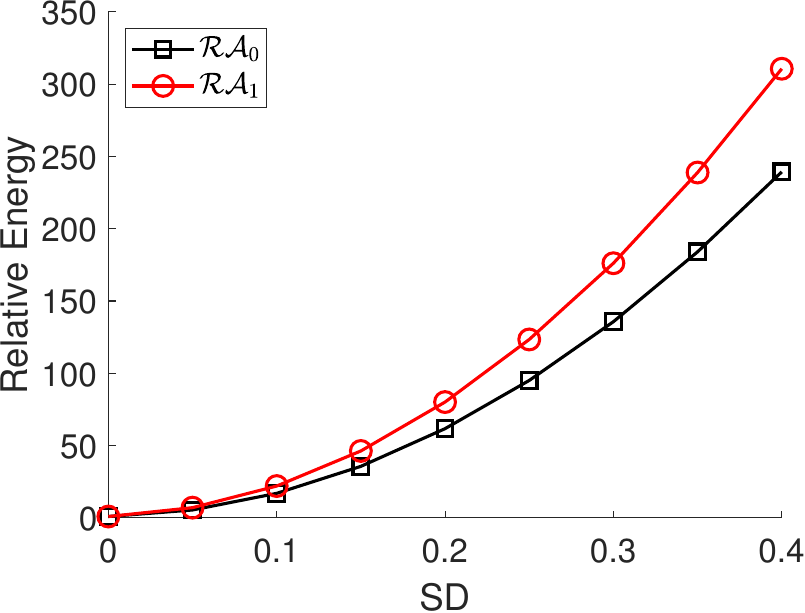}&
		\includegraphics[width=0.3\textwidth]{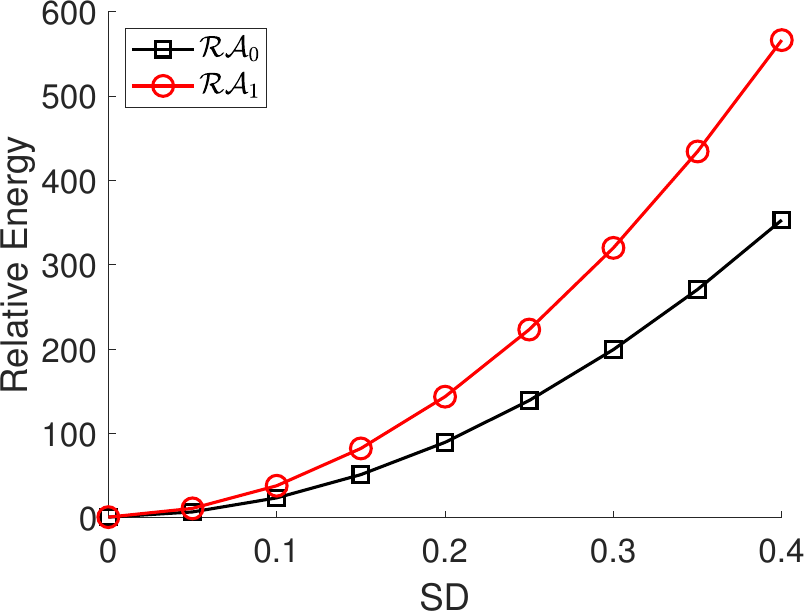}\\
		\includegraphics[width=0.3\textwidth]{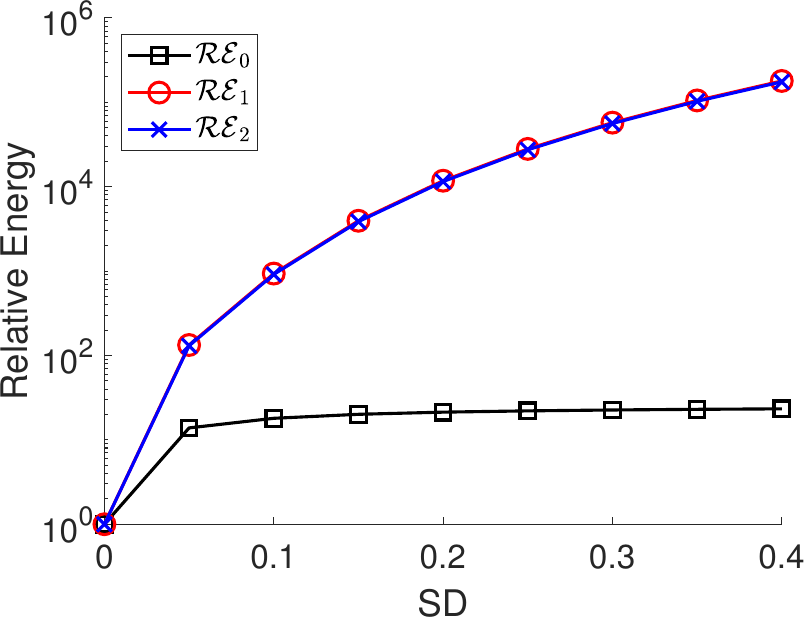}&
		\includegraphics[width=0.3\textwidth]{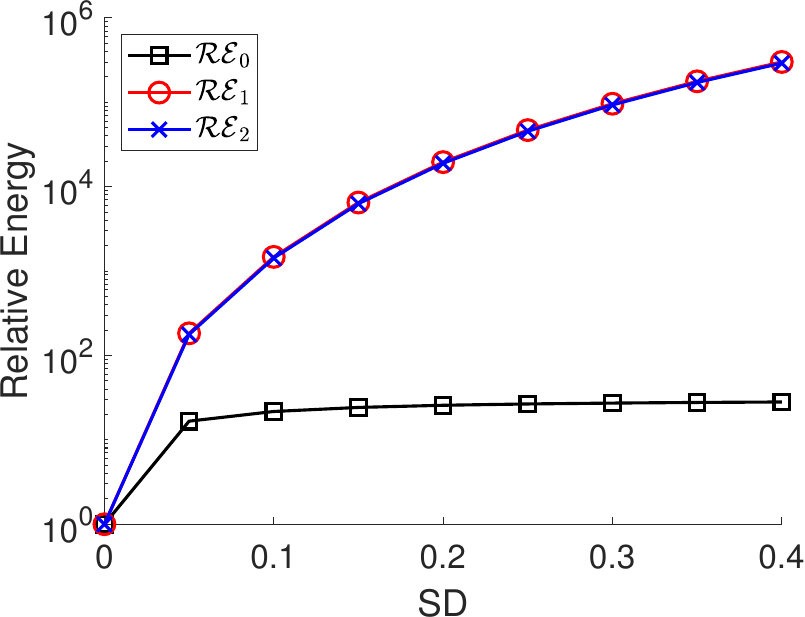}&
		\includegraphics[width=0.3\textwidth]{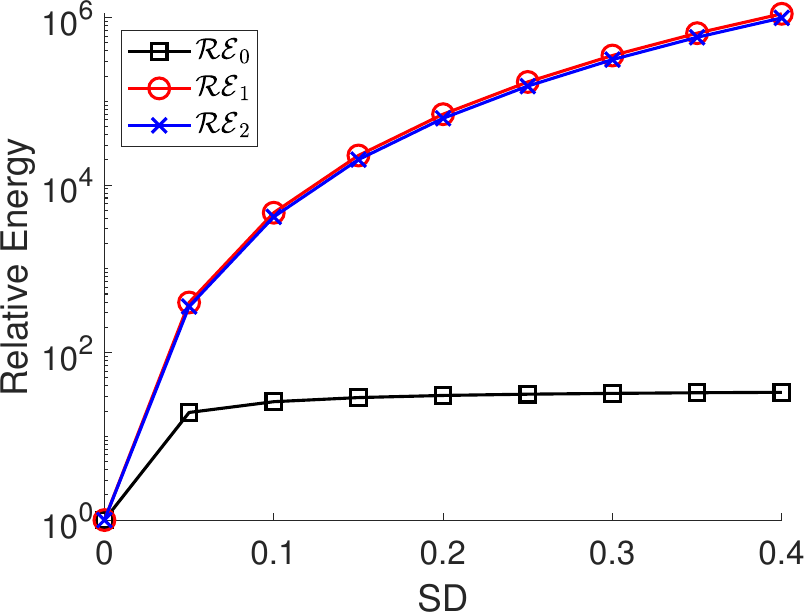}
	\end{tabular}
	\caption{Comparison of the relative energy of the surface area (top) and the elastica (bottom) as functions of the noise standard deviation (SD).  
		Noisy images are generated by adding Gaussian noise with SD varying from 0 to 0.4. 
		(a)--(c) correspond to the images `Pens', `Plant' and `Toy' in Figure \ref{fig.justify}, respectively.  
		Each relative energy is averaged over 10 experiments.}
	\label{fig.RelativeEnergy}
\end{figure}

\begin{figure}[t!]
	\begin{tabular}{ccc}
		(a) & (b) & (c)\\
		\includegraphics[width=0.3\textwidth]{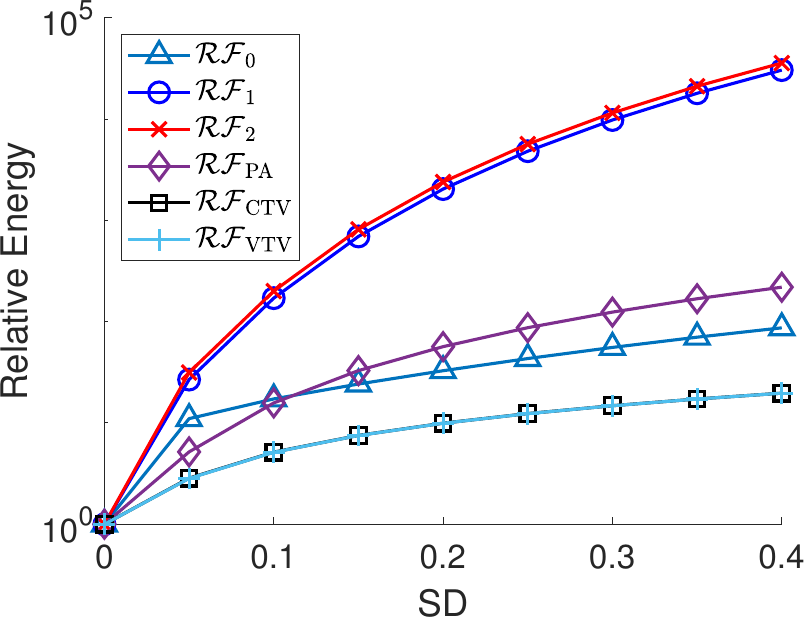}&
		\includegraphics[width=0.3\textwidth]{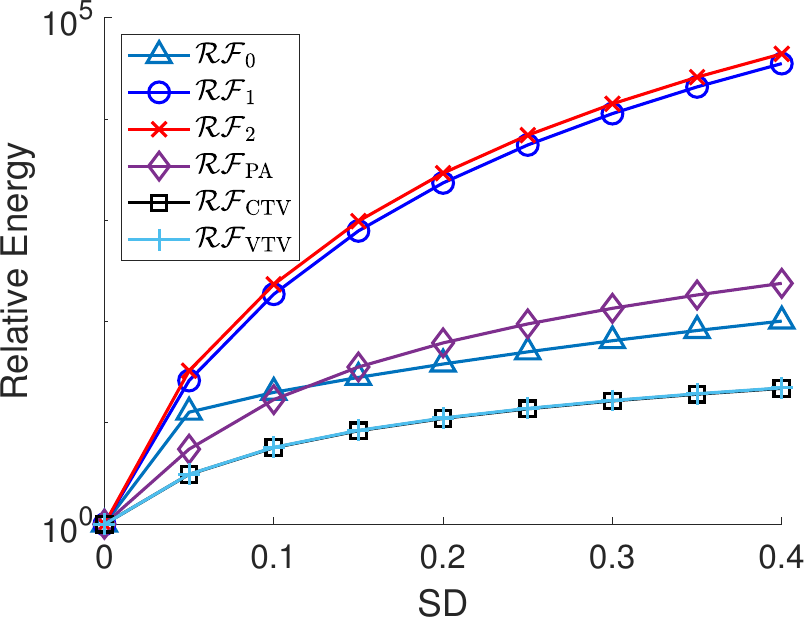}&
		\includegraphics[width=0.3\textwidth]{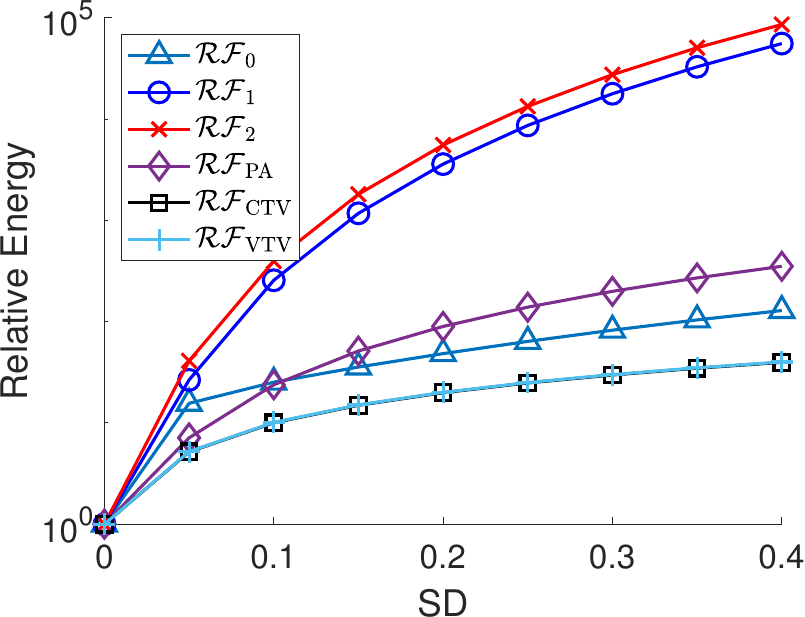}
	\end{tabular}
	\caption{Comparison of the relative energy of different regularizers. The noisy images are generated by adding Gaussian noise with SD varying from 0 to 0.4. (a)--(c) correspond to `Pens', `Plant' and `Toy' images in Figure \ref{fig.justify}. 
		The tested regularizers are $\cF_0$ (in CE), $\cF_1$ (in model (\ref{eq.model1})), $\cF_2$ (in model (\ref{eq.model2})), $\cF_{\rm PA}$ (in PA), $\cF_{\rm CTV}$ (in CTV) and $\cF_{\rm VTV}$ (in VTV). 
		Each relative energy is averaged over 10 experiments.}
	\label{fig.RelativeEnergy1}
\end{figure}


\begin{figure}[t!]
	\begin{tabular}{ccc}
		(a) & (b) & (c)\\
		\includegraphics[width=0.3\textwidth]{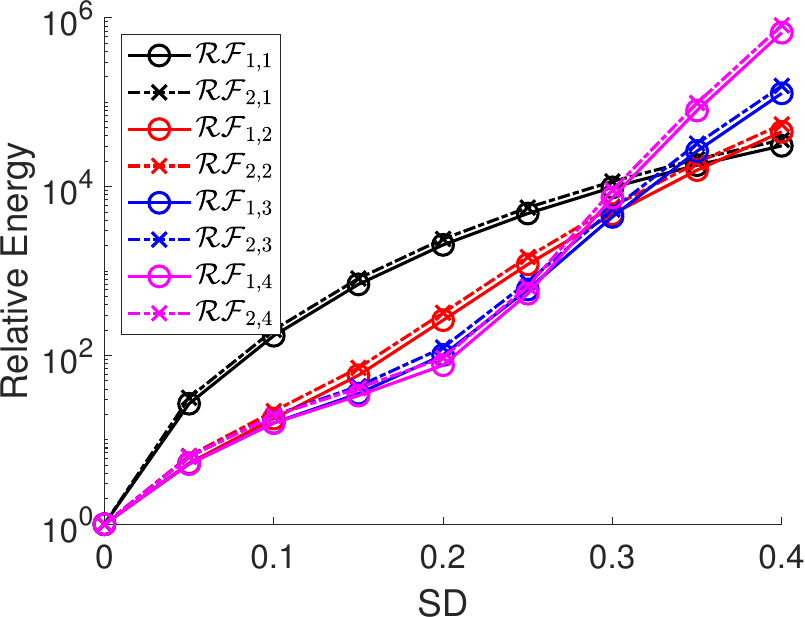} &
		\includegraphics[width=0.3\textwidth]{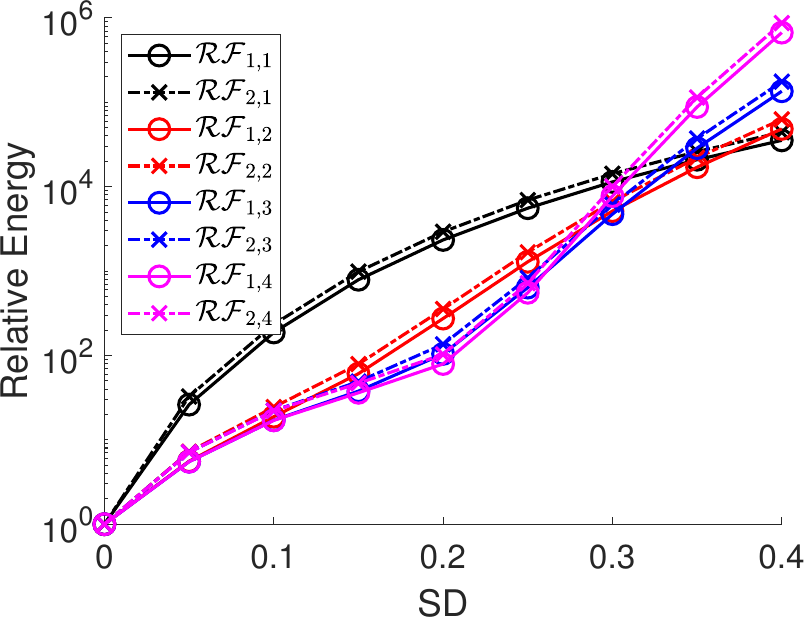} &
		\includegraphics[width=0.3\textwidth]{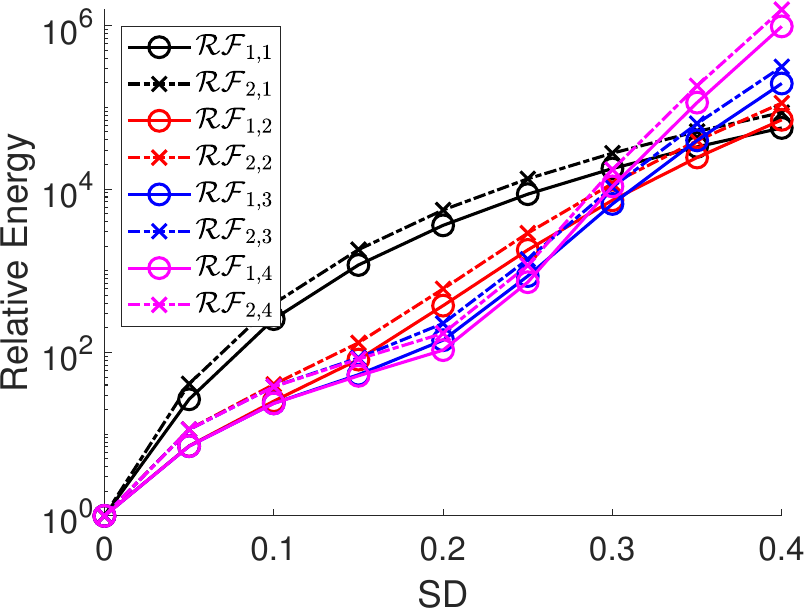} 
	\end{tabular}	
	\caption{Comparison of the relative energy of variants of model (\ref{eq.model1}) and (\ref{eq.model2}). 
		The noisy images are generated by adding Gaussian noise with SD varying from 0 to 0.4. (a)--(c) correspond to images `Pens', `Plant' and `Toy' in Figure \ref{fig.justify}.  
		Each relative energy is averaged over 10 experiments.}
	\label{fig.RelativeEnergyN}
\end{figure}

\section{Empirical justifications of the new models}
\label{sec.justification}
In this section, we empirically justify the new terms we introduced to the modified color elastica models. 
In model (\ref{eq.model1}) and (\ref{eq.model2}), three new terms are considered.
A new surface area measure  $\cA_1$ and two new color elastica terms $\cE_1, \cE_2$, which are defined by
\begin{eqnarray*}
	\cA_1(\bff)&=&\int_{\Omega} \sqrt{g-\alpha^2} d \bx, \cr
	\cE_1(\bff)&=&\int_{\Omega}\left(\sum_{k=1}^3 g|\Delta_g f_k|^2\right) \sqrt{g} d\bx, \cr \cE_2(\bff)&=&\int_{\Omega}\left(\sum_{k=1}^3 (g-\alpha^2)|\widetilde{\Delta}_g f_k|^2\right) \sqrt{g-\alpha^2} d\bx.
\end{eqnarray*}
The term $\cE_1$ is used in model (\ref{eq.model1}). 
The terms $\cA_1$ and $\cE_2$ are used in model (\ref{eq.model2}). 
We  denote the original surface area and elastica term in the color elastica model (\ref{eq.model.old}) by
\begin{align}
	\cA_0(f)\,=\,\int_{\Omega} \sqrt{g}d \bx,\,\,\,\,\,\,\,\, \cE_0(\bff)\,=\,\int_{\Omega}\left(\sum_{k=1}^3 |\Delta_g f_k|^2\right) \sqrt{g} d\bx.
\end{align}
Given a clean image $f_0$ and its noisy version $f$, we measure the effectiveness of these terms by computing the ratio between their values at $f$ and $f_0$. 
The quantity is considered to be more sensitive if the ratio increases more rapidly as the noise level increases. 
Specifically, we define the relative energy for surface areas and elastica terms as
\begin{eqnarray*}
	\cR\cA_i(f)&=&\frac{\cA_i(f)}{\cA_i(f_0)},\cr \cR\cE_j(f)&=&\frac{\cE_i(f)}{\cE_i(f_0)}
\end{eqnarray*}
for $i=0,1$ and $j=0,1,2$. 
For each term, we monitor the relation between its relative energy as a function of the noise level injected to the image.

In this set of experiments, we set $\alpha=10^{-3}$ and consider three images in Figure \ref{fig.justify}: `Pens', `Plant' and `Toy'. 
The noisy images are generated by adding Gaussian noise with standard deviation (variance) varying from 0 to 0.4. 
The comparison of the two relative energies of all measures are shown in Figure \ref{fig.RelativeEnergy}. 
Each relative energy is averaged over 10 experiments. 
The first row shows the comparison for variations on surface areas. 
The newly introduced term $\cA_1$ appears to be more sensitive than $\cA_0$ as its relative energy increases faster as a function of noise level for all three images. 
The second row shows comparisons for the elastica terms. 
The newly introduced measures $\cE_1$ and $\cE_2$ are similar. 
Both are more sensitive than $\cE_0$. 

We then compare the effectiveness of the regularizers in the proposed models and other models for color image regularization, including the color elastica (CE) model \cite{liu2021color}, the Polyakov action (PA) model \cite{kimmel1997high}, the color total variation (CTV) model \cite{blomgren1998color} and the vectorial total variation (VTV) model \cite{goldluecke2012natural}. 
We denote the regularizers in CE, Model (\ref{eq.model1}), Model (\ref{eq.model2}), PA, CTV and VTV by
\begin{eqnarray*}
	\cF_0(\bbf)&=&\cA_0(\bbf)+\beta\cE_0(\bbf), \cr \cF_1(\bbf)&=&\cA_0(\bbf)+\beta\cE_1(\bbf), \cr \cF_2(\bbf)&=&\cA_1(\bbf)+\beta\cE_2(\bbf), \cr 
	\cF_{\rm PA}(\bbf)&=&\cA_0(\bbf), \cr
	\cF_{\rm CTV}(\bbf)&=&\int_{\Omega} \sqrt{\sum_{k=1}^3 |\nabla f_k|^2}d\bx\cr \cF_{\rm VTV}(\bbf)&=& \int_{\Omega} \sigma_1(\nabla \bbf)d\bx,
\end{eqnarray*}
respectively, where in $\cF_{\rm VTV}(\bbf)$, $\sigma_1(\nabla \bbf)$ denotes the largest singular value of the Jacobian matrix $\nabla \bbf$. 
For any energy $\cF$ defined above, we define its relative energy as
\begin{eqnarray*}
	\cR\cF(\bbf)&=&\frac{\cF(\bbf)}{\cF(\bbf_0)},
\end{eqnarray*}
where $\bbf_0$ is the clean image and $\bbf$ is its noisy version. 

Figure \ref{fig.RelativeEnergy1} displays the relative energies as a function of the noise levels for all regularizers. 
In this set of experiments, the three images presented in Figure \ref{fig.justify}, `Pens', `Plant' and `Toy', are used. 
The noisy images are generated by adding Gaussian noise with SD varying from $0$ to $0.4$. 
For $\cF_0,\cF_1,\cF_2$, we set $\alpha=10^{-3}$. 
We use $\beta=10^{-2}$ in $\cF_0$ and $\beta=30$ in $\cF_{1}$ and $\cF_2$. 
Each relative energy is averaged over 10 experiments. 
In Figure \ref{fig.RelativeEnergy1}, the relative energy of CTV and VTV are  close to each other. 
That is because both regularizers only depend on the Jacobian matrix $\nabla \bbf$. 
In this comparison, the relative energy of model (\ref{eq.model1}) and (\ref{eq.model2}), that is, $\cF_1$ and $\cF_2$, are the most effective as they increase fastest with the noise level. 
This comparison empirically justifies the fact that the new terms in model (\ref{eq.model1}) and (\ref{eq.model2}) are better suited for modeling and denoising natural images.

In model (\ref{eq.model1}) and (\ref{eq.model2}), the elastica term is weighted by $g$ or $g-\alpha^2$. 
In the next test, we compare the relative energy of variants of model (\ref{eq.model1}) and (\ref{eq.model2}) in which the elastica term is weighted by different powers of $g$ or $g-\alpha^2$. 
Specifically, denote
\begin{eqnarray*}
	\cF_{1,m}(\bbf)&=&\cA_0(\bbf)+\beta \int_{\Omega}\left(\sum_{k=1}^3 g^m|\Delta_g f_k|^2\right) \sqrt{g} d\bx,\cr
	\cF_{2,m}(\bbf)&=&\cA_1(\bbf)+\beta \int_{\Omega}\left(\sum_{k=1}^3 (g-\alpha^2)^m|\widetilde{\Delta}_g f_k|^2\right) \sqrt{g-\alpha^2} d\bx.
\end{eqnarray*}
We compared in Figure \ref{fig.RelativeEnergyN} the relative energies $\cR\cF_{1,m}$ and $\cR\cF_{2,m}$ for $m=1,2,3,4$. 
In this experiment, we set $\alpha=10^{-3}$ and $\beta=30$. 
We observe that model (\ref{eq.model1}) and (\ref{eq.model2}), corresponding to $m=1$, are most effective for all three test images when $\rm{SD}\leq 0.3$. While model (\ref{eq.model1}) and (\ref{eq.model2}) are not the best when $\rm{SD}>0.3$, they are still most effective for practical consideration, since  usually very large noise is rare in practice. 


\section{Numerical experiments}
\label{sec.experiments}
We demonstrate the effectiveness of the propose models and the efficiency of the proposed alogrithms in this section. All experiments are implemented by MATLAB (R2020b) on a Windows desktop with 16GB RAM and Intel(R) Core(TM) i7-10700 CPU: 2.90GHz. In our experiments, images with pixel values in $[0,1]$ are used. We set $h=1,\gamma_1=1, \gamma_2=3,\xi_1=10^{-5}$ and $\epsilon=10^{-3}$, where $\xi_1$ is the stopping criterion of the iterative method for $\bp^{n+1/3}$ and $\epsilon$ is the small constant in (\ref{eq.freezq2}). For both algorithms, we use the stopping criterion  
$$
\frac{\|u^{n+1}-u^n\|}{\|u^n\|}\,\leq\, \zeta,
$$
for some small $\zeta>0$, where $\|\cdot\|$ denotes the Frobenius norm. 

There are three model parameters: $\alpha,\beta$ and $\eta$. The parameter $\beta$ controls the weight of the elastica term and $\eta$ controls the weight of fidelity term. One can get a smoother result if larger $\beta$ and $\eta$ are used. The parameter $\alpha$ controls the weight of the spatial coordinates in the manifold parametrization, see (\ref{eq.G}). As information from chromatic coordinates (feature coordinates) is usually more important than that from spatial coordinates, a small $\alpha$ should be used. Without specification, we set $\alpha=5\times 10^{-4}$ for model (\ref{eq.model1}), $\alpha=3\times 10^{-2}$ for model (\ref{eq.model2}) and $\zeta=10^{-5}$ for both algorithms.

\begin{remark}
	There are three algorithm parameters: $\gamma_1,\gamma_2$ and $\tau$, which control the evolution speed of all auxiliary variables. The performance of our algorithms is not sensitive to these parameters. In our experiments, our algorithms converge as long as these parameters are small enough. Actually, we do not need to set these parameters too small. Setting $\gamma_1=1,\gamma_2=1$ and $\tau=0.05$ already provides good results.
\end{remark}


\begin{figure}[t!]
	\centering
	\begin{tabular}{cccc}
		(a) & (b) &(c) & (d)\\
		\includegraphics[width=0.22\textwidth]{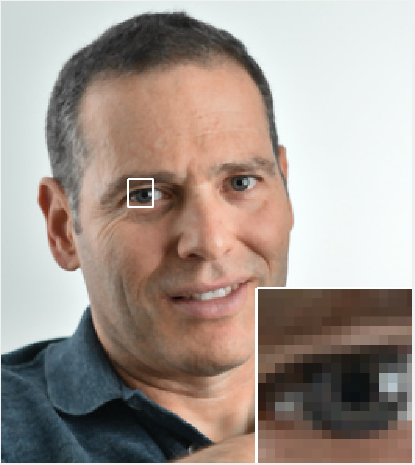}&
		\includegraphics[width=0.22\textwidth]{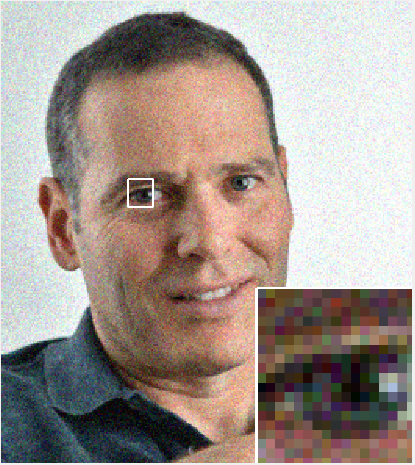}&
		\includegraphics[width=0.22\textwidth]{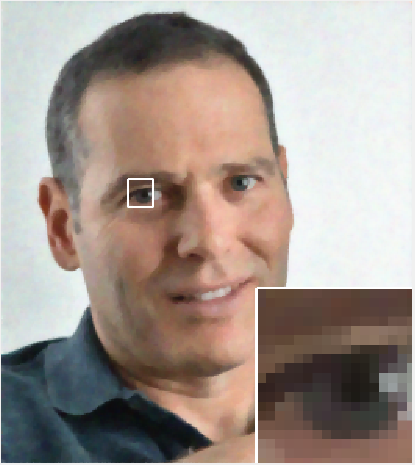}&
		\includegraphics[width=0.22\textwidth]{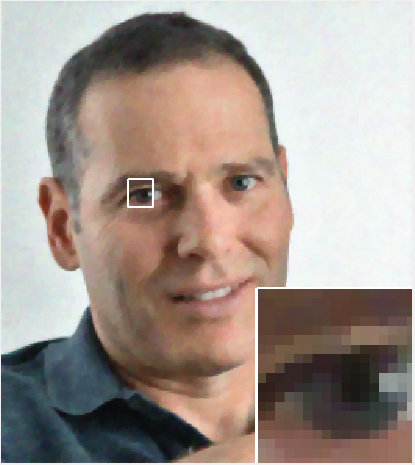}\\
		\includegraphics[width=0.22\textwidth]{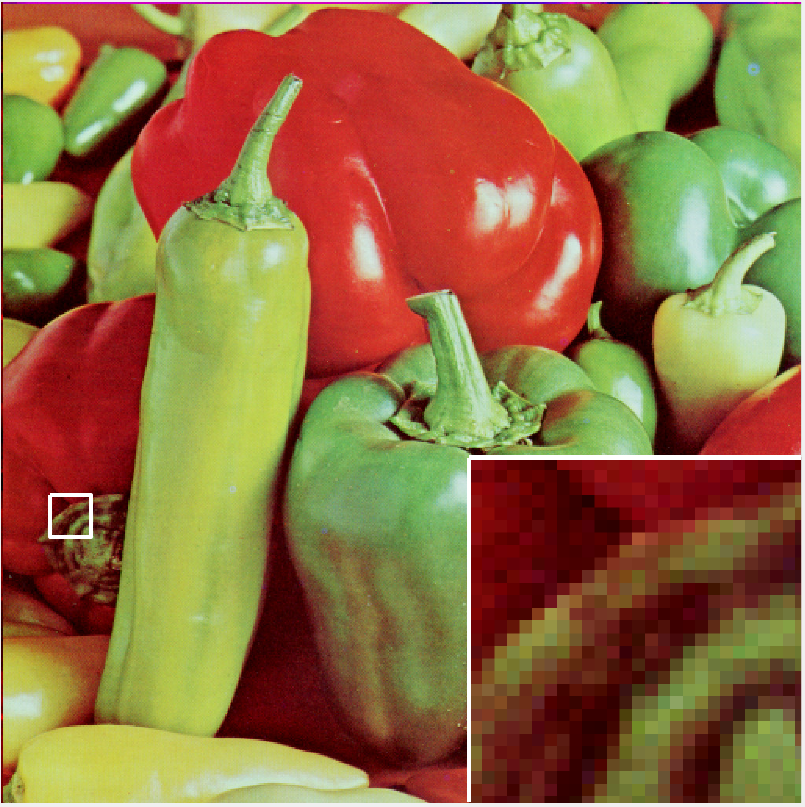}& 
		\includegraphics[width=0.22\textwidth]{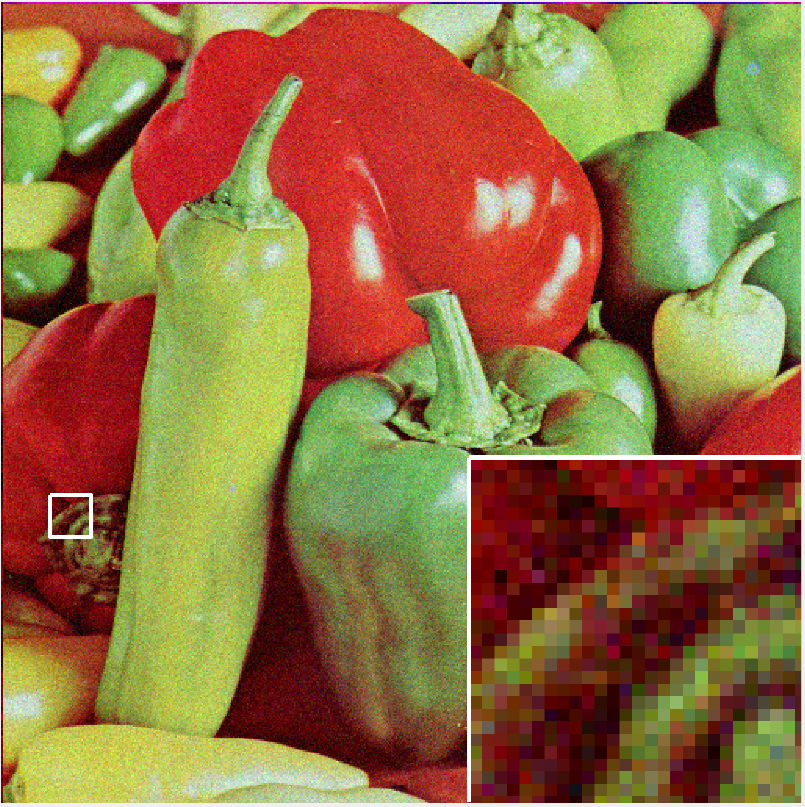}&
		\includegraphics[width=0.22\textwidth]{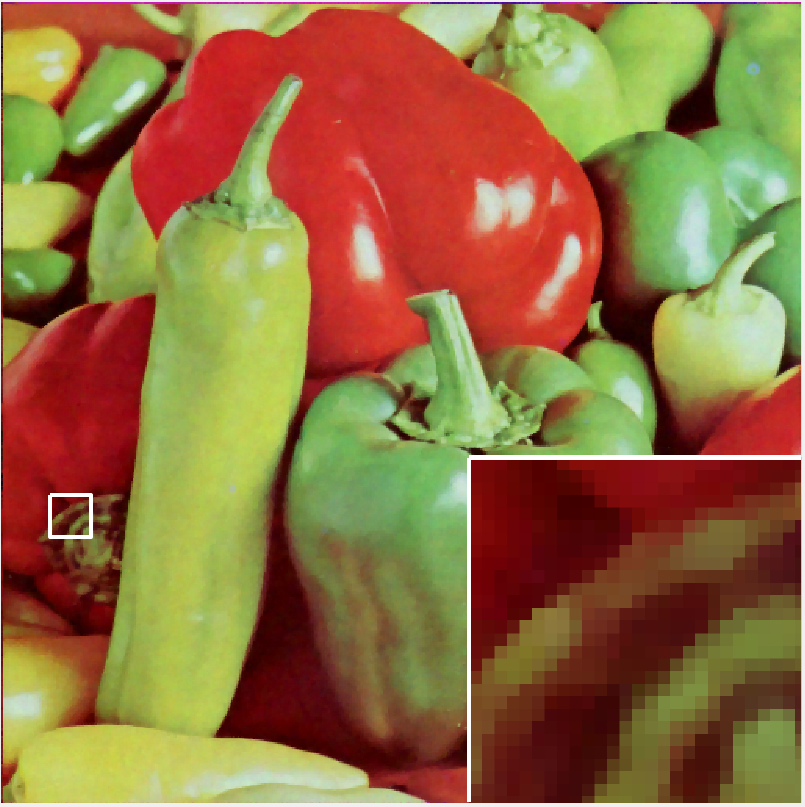}&
		\includegraphics[width=0.22\textwidth]{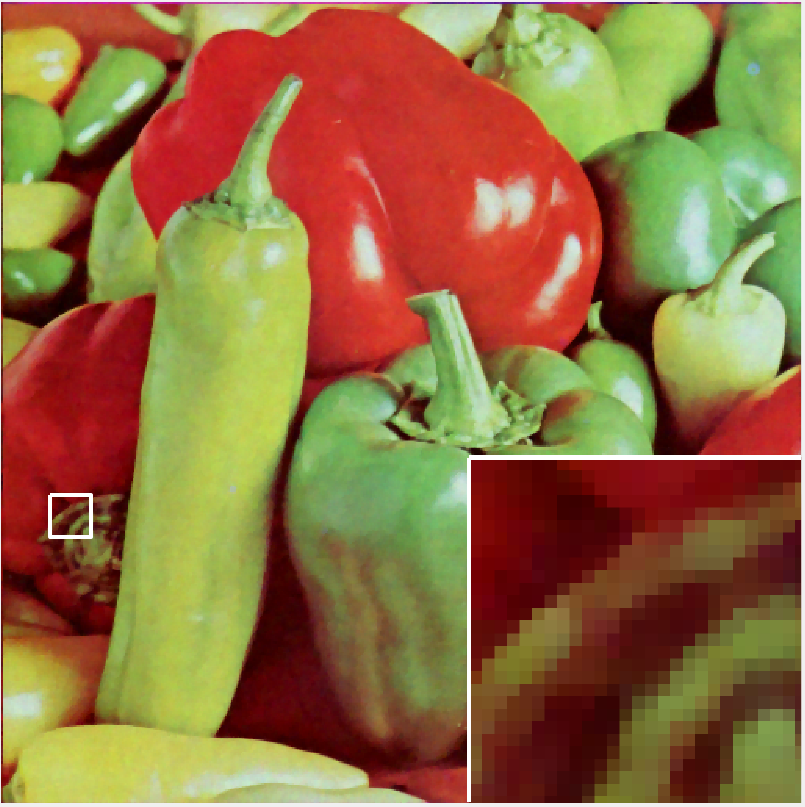}\\
		\includegraphics[width=0.22\textwidth]{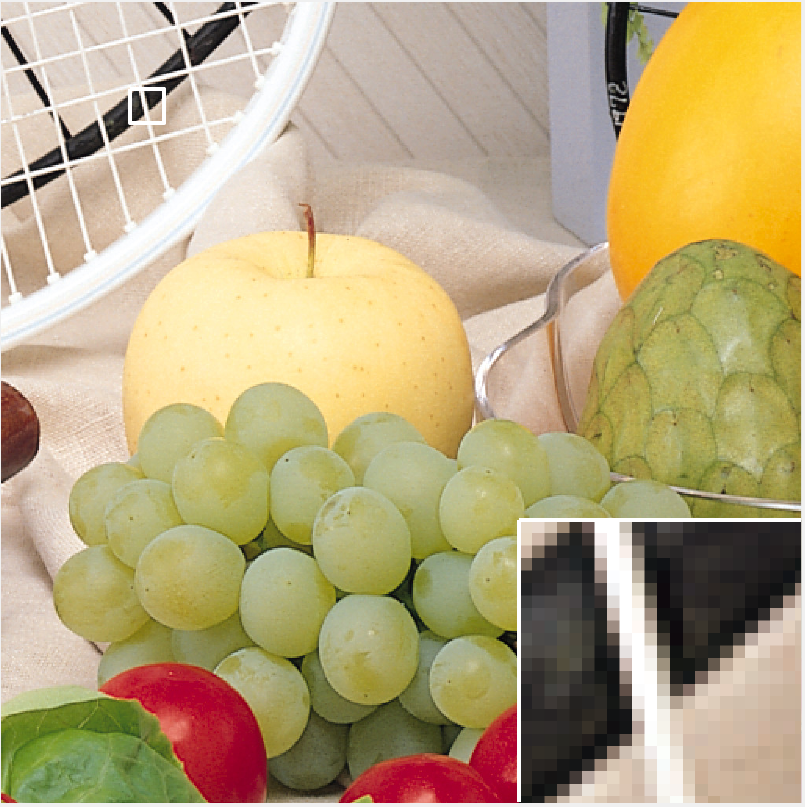} &
		\includegraphics[width=0.22\textwidth]{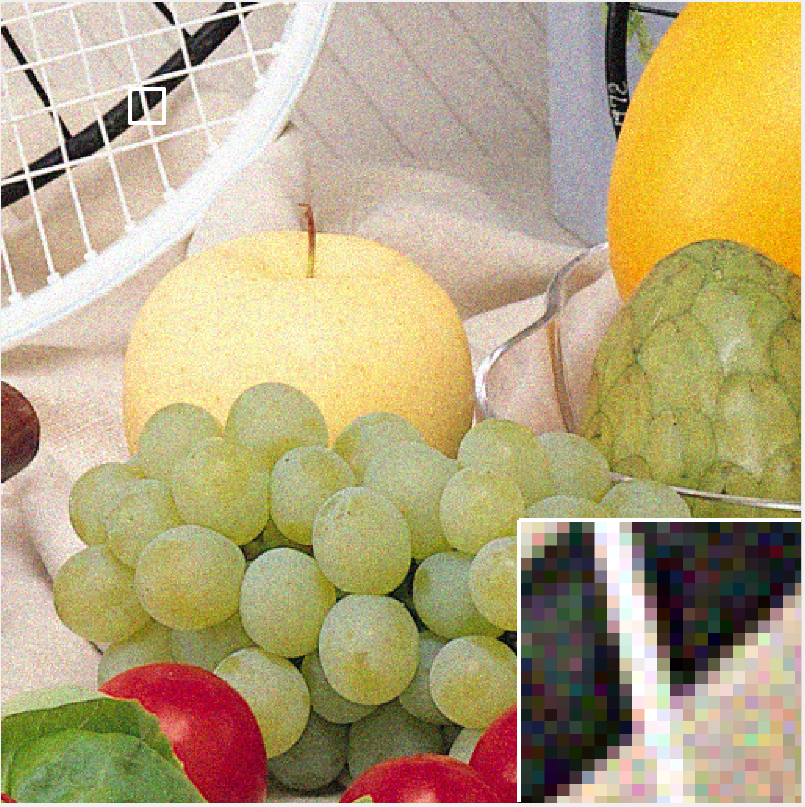}&
		\includegraphics[width=0.22\textwidth]{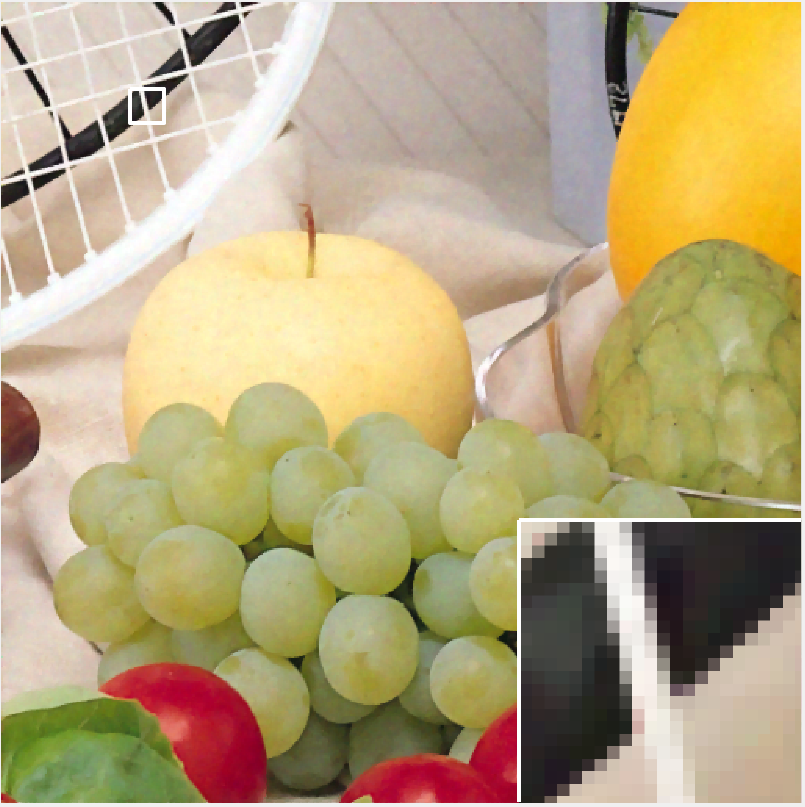}&
		\includegraphics[width=0.22\textwidth]{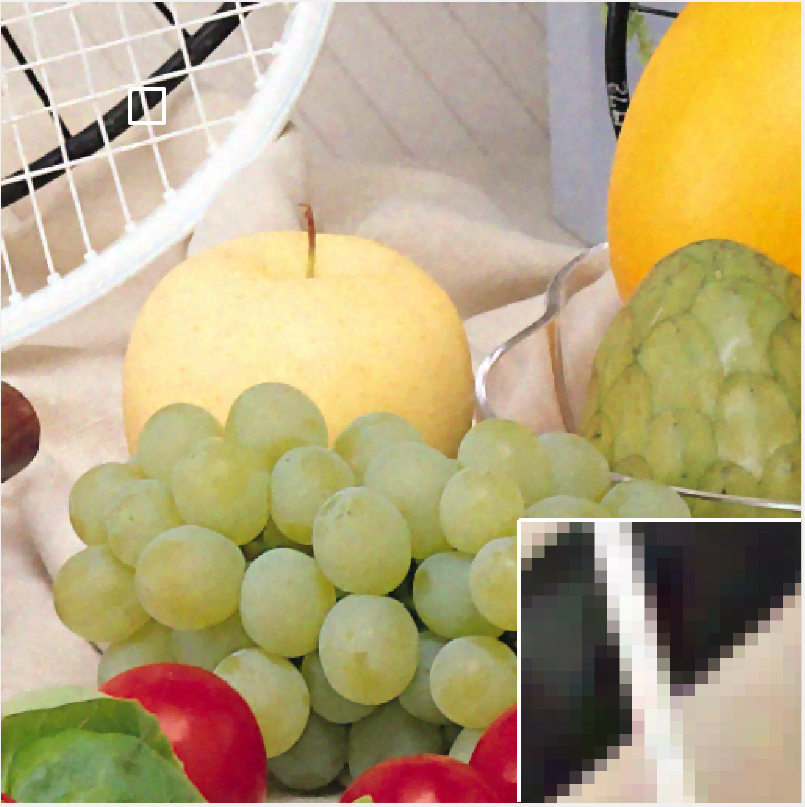}
	\end{tabular}
	\caption{Performance of the proposed models on general images with Gaussian noise and SD=0.06. (a) Clean images. (b) Noisy images. (c) Denoised images by model (\ref{eq.model1}) with $\alpha=5\times10^{-4}, \beta=50$ and $\eta=3$. (d) Denoised images by model (\ref{eq.model2}) with $\alpha=3\times10^{-2},\beta=30$ and $\eta=0.2$. From Row 1 to Row 3: 'Portrait', 'Peppers', 'Fruits'.
	}
	\label{fig.general}
\end{figure}

\begin{figure}[t!]
	\centering
	\begin{tabular}{cccc}
		(a) & (b) &(c) & (d)\\
		\includegraphics[trim={2.4cm 1.4cm 2.1cm 1.5cm},clip,width=0.22\textwidth]{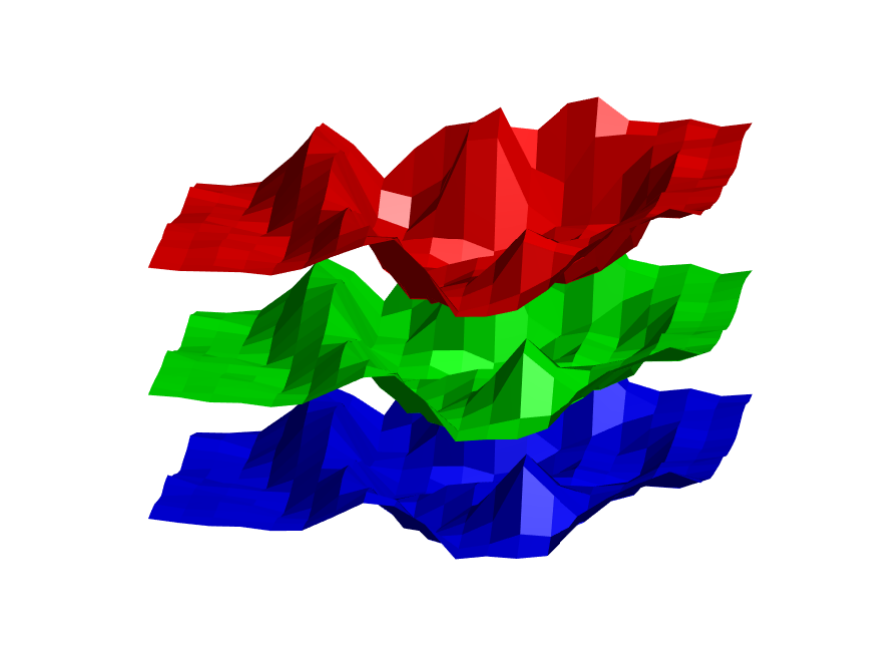}&
		\includegraphics[trim={2.4cm 1.4cm 2.1cm 1.5cm},clip,width=0.22\textwidth]{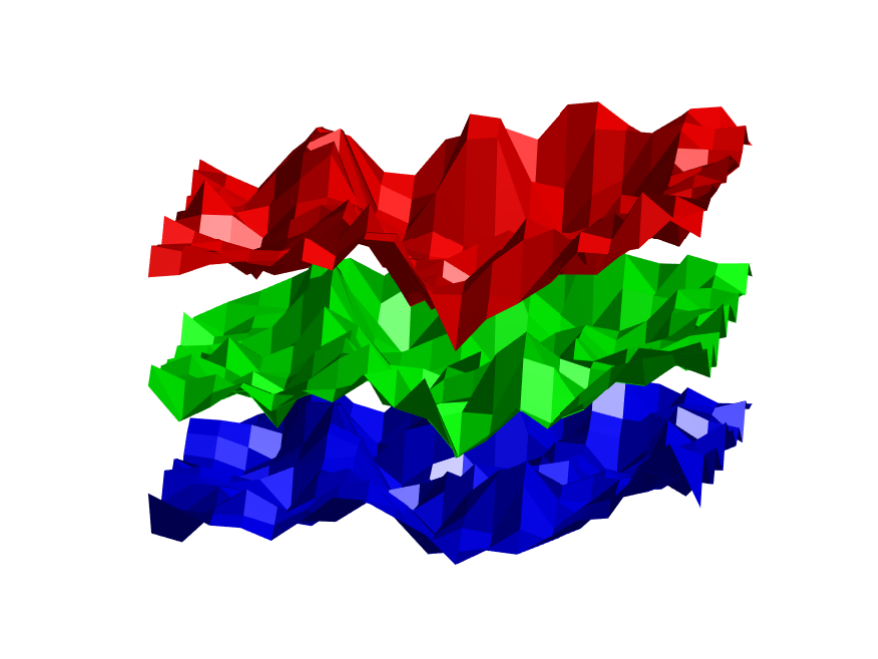}& 
		\includegraphics[trim={2.4cm 1.4cm 2.1cm 1.5cm},clip,width=0.22\textwidth]{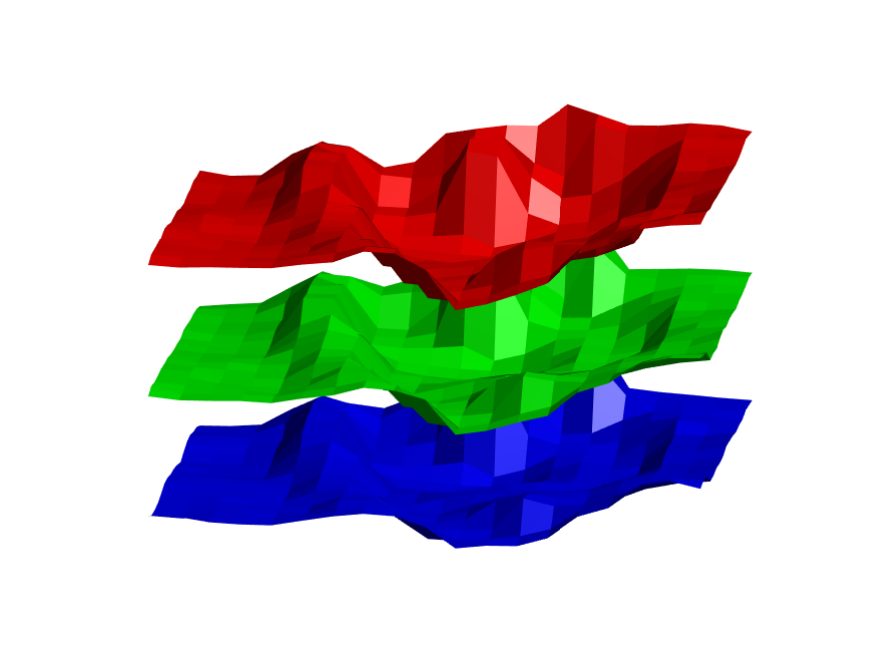} &
		\includegraphics[trim={2.4cm 1.4cm 2.1cm 1.5cm},clip,width=0.22\textwidth]{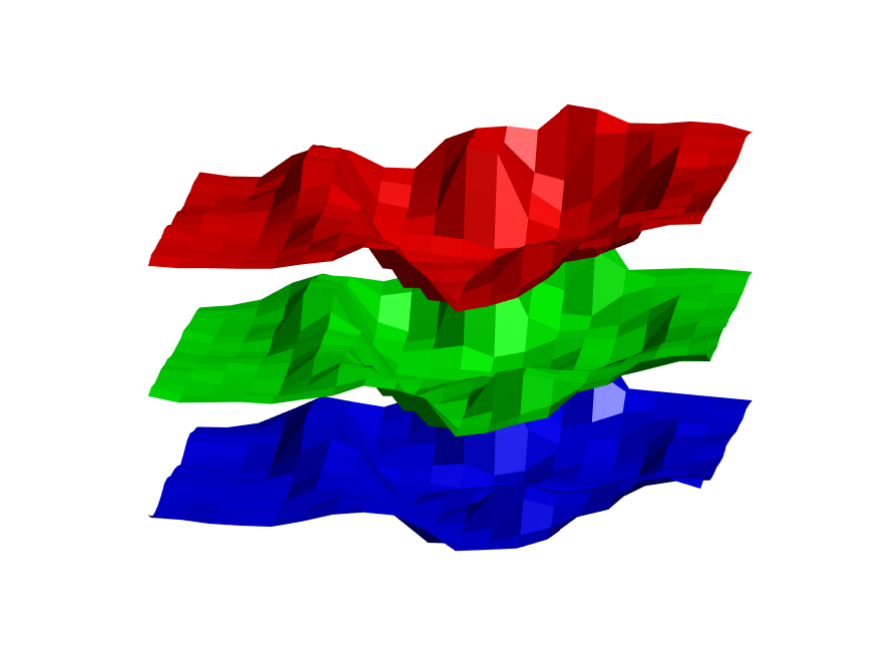}\\
		\includegraphics[trim={2.4cm 2cm 2.2cm 1.5cm},clip,width=0.22\textwidth]{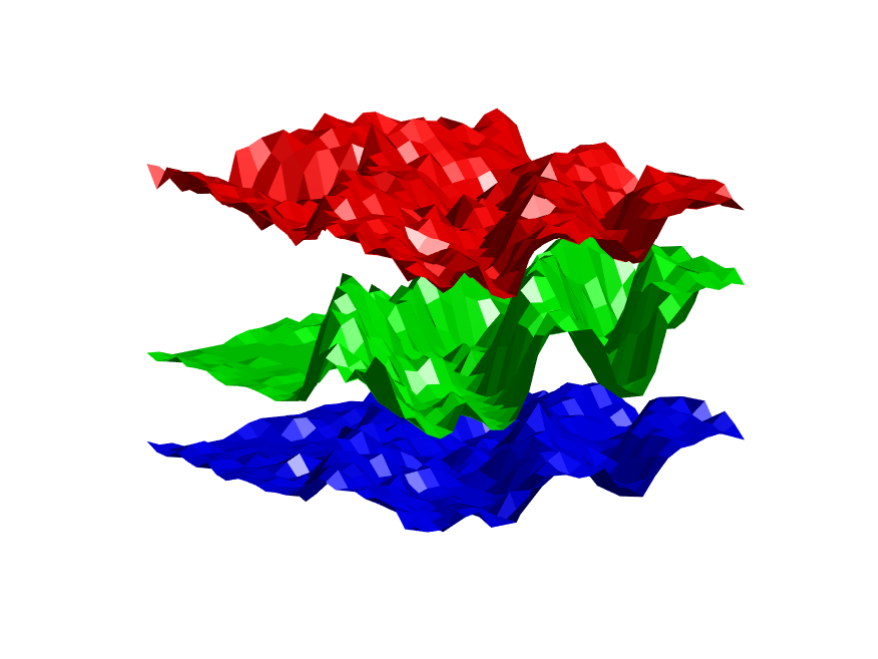}&
		\includegraphics[trim={2.4cm 2cm 2.2cm 1.5cm},clip,width=0.22\textwidth]{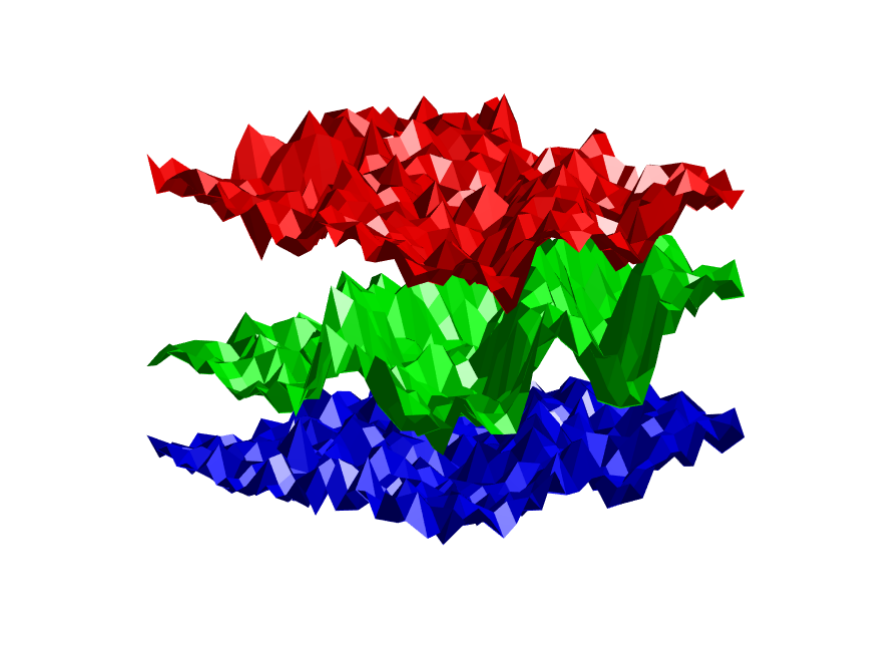}& 
		\includegraphics[trim={2.4cm 2cm 2.2cm 1.5cm},clip,width=0.22\textwidth]{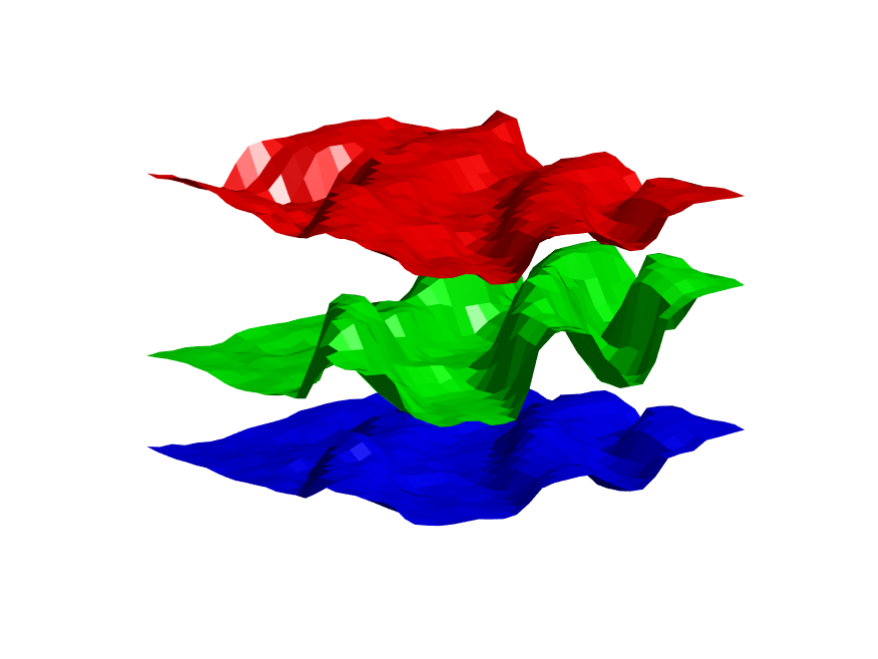} &
		\includegraphics[trim={2.4cm 2cm 2.2cm 1.5cm},clip,width=0.22\textwidth]{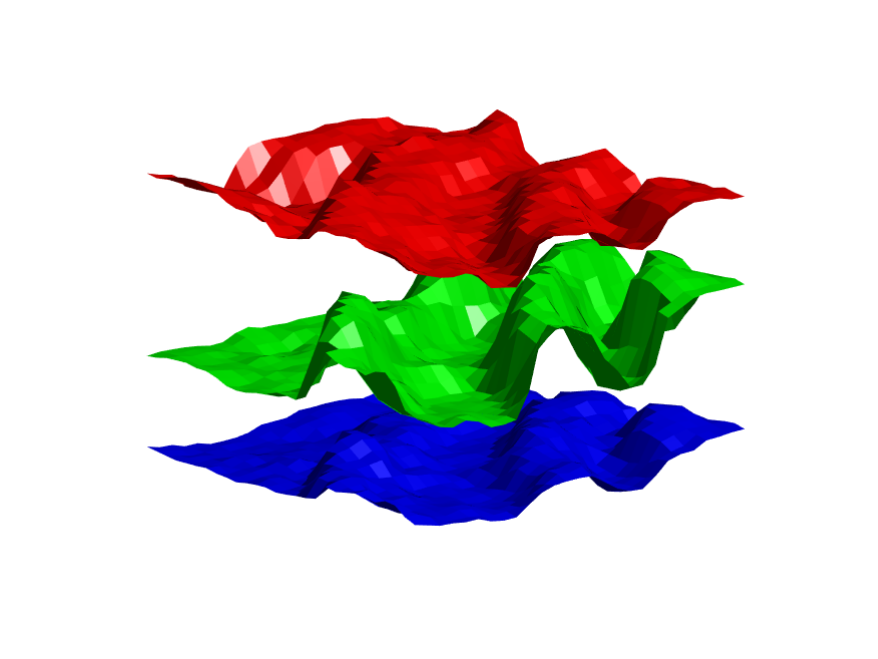}\\
		\includegraphics[trim={2.5cm 1.5cm 2.2cm 1.5cm},clip,width=0.22\textwidth]{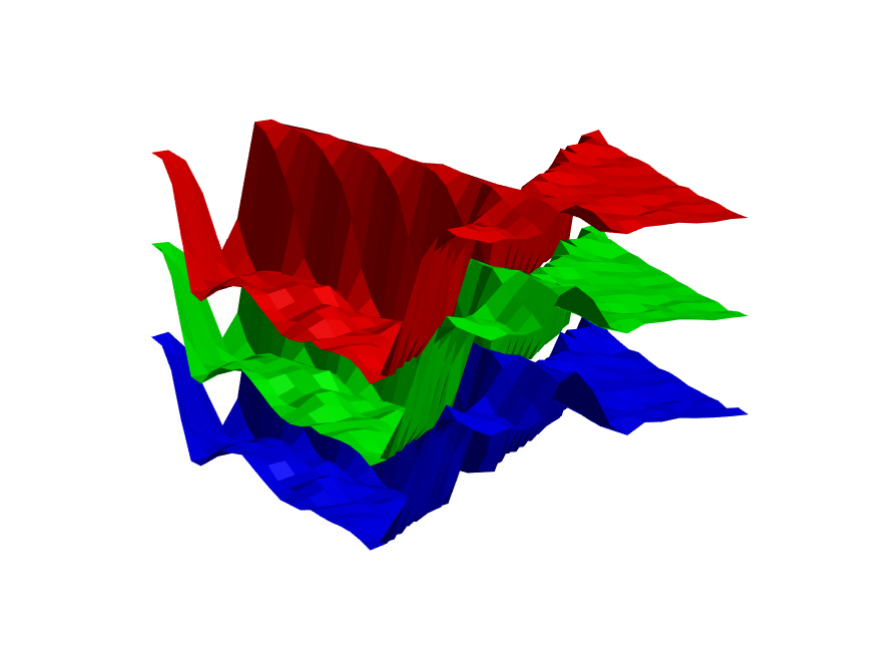}&
		\includegraphics[trim={2.5cm 1.5cm 2.2cm 1.5cm},clip,width=0.22\textwidth]{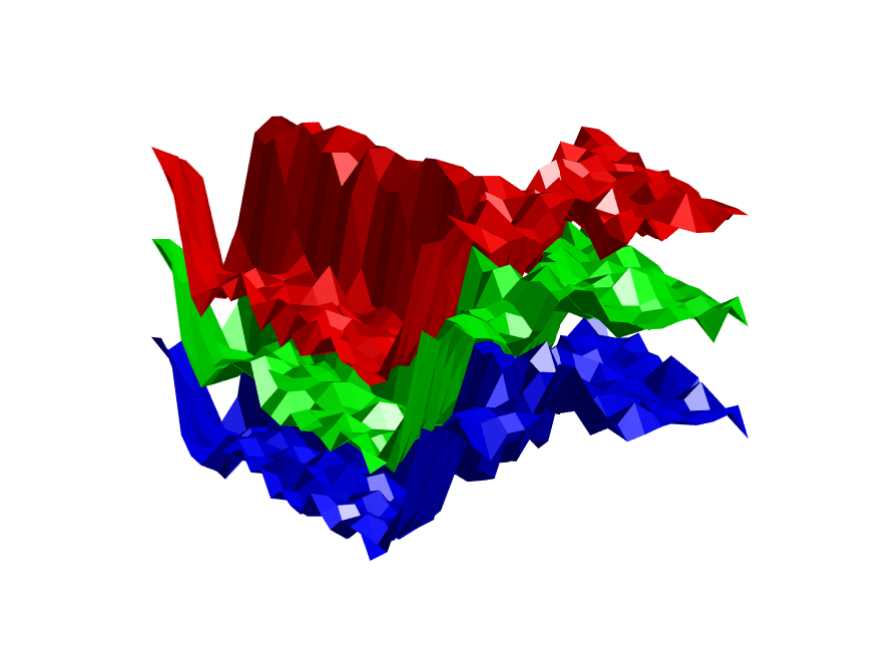}& 
		\includegraphics[trim={2.5cm 1.5cm 2.2cm 1.5cm},clip,width=0.22\textwidth]{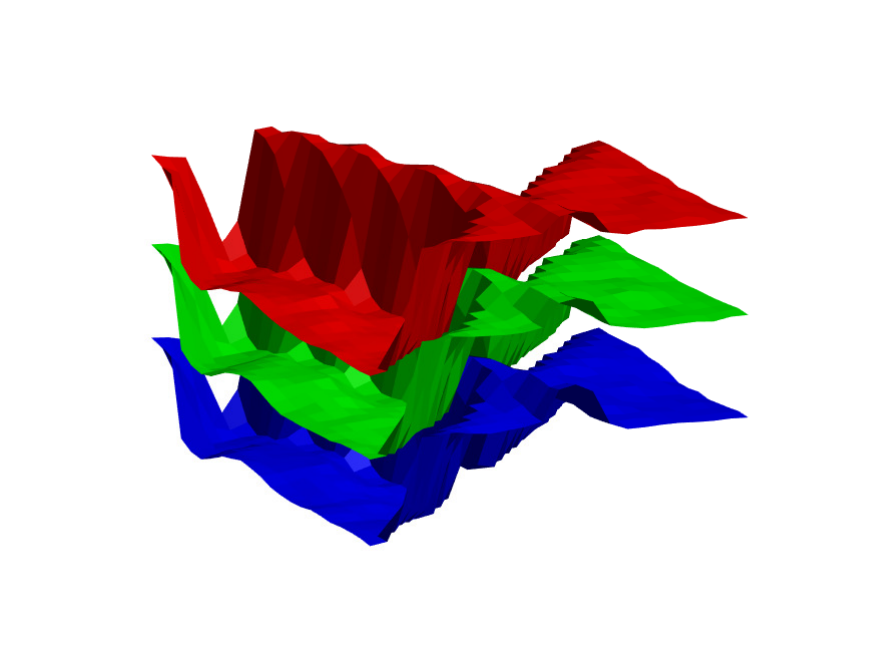} &
		\includegraphics[trim={2.5cm 1.5cm 2.2cm 1.5cm},clip,width=0.22\textwidth]{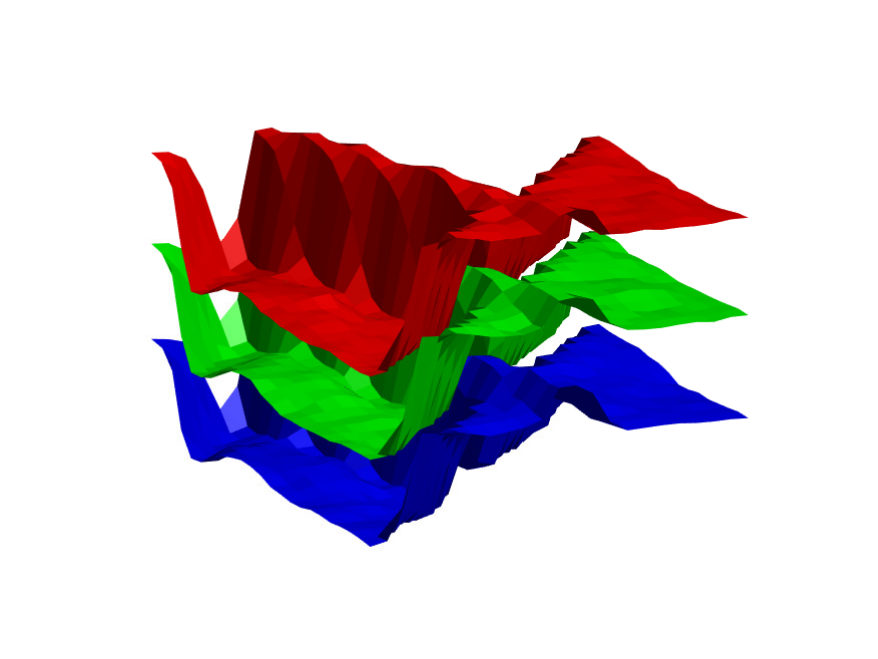}
	\end{tabular}
	\caption{Performance of the proposed models on general images. Surface plot of the zoomed regions of images in Figure \ref{fig.general}. (a) Clean images. (b) Noisy images with Gaussian noise and SD=0.06. (c) Denoised images by model (\ref{eq.model1}) with $\alpha=5\times10^{-4}, \beta=50$ and $\eta=3$. (d) Denoised images by model (\ref{eq.model2}) with $\alpha=3\times10^{-2},\beta=30$ and $\eta=0.2$. From Row 1 to Row 3: 'Portrait', 'Peppers', 'Fruits'.}
	\label{fig.general.surf}
\end{figure}

\begin{figure}[t!]
	\centering
	\begin{tabular}{ccc}
		(a) & (b) &(c) \\
		\includegraphics[trim={0.0cm 0cm 0.05cm 0cm},clip,width=0.3\textwidth]{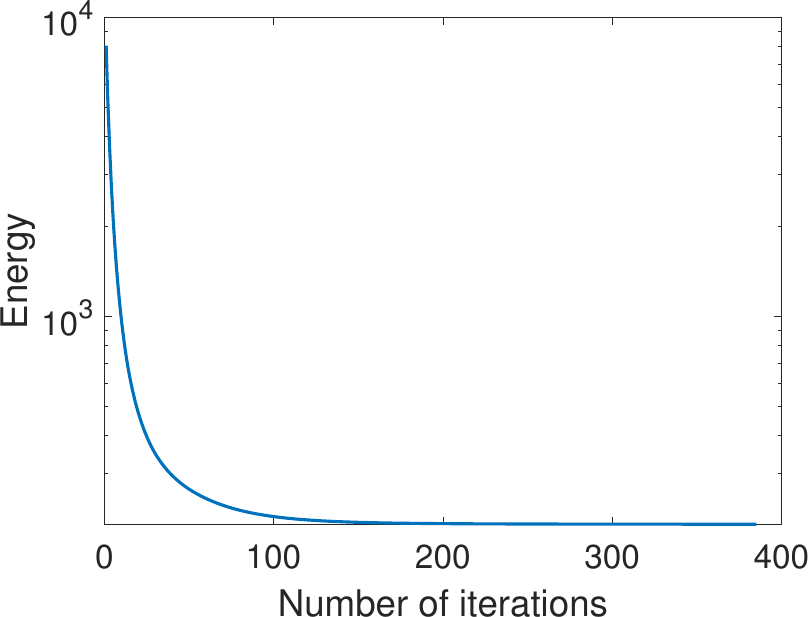}&
		\includegraphics[trim={0.0cm 0cm 0.05cm 0cm},clip,width=0.3\textwidth]{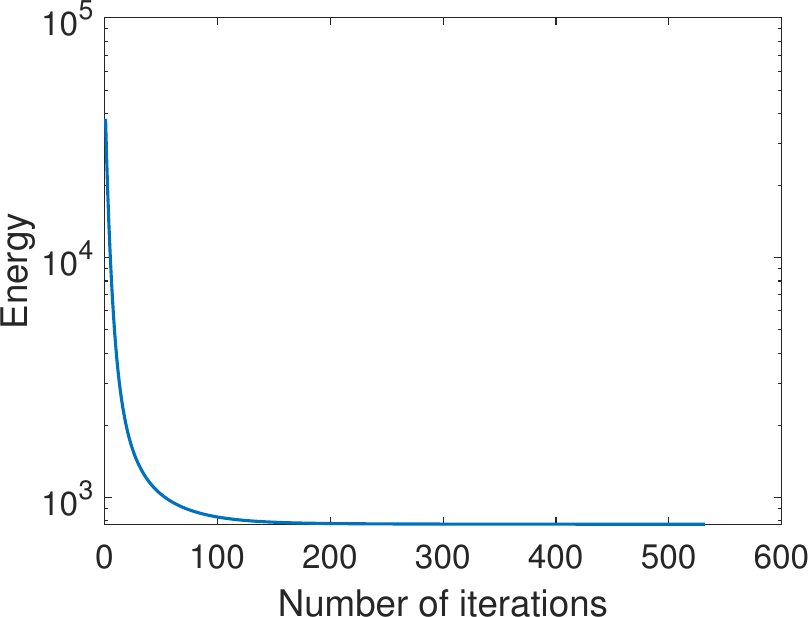}&
		\includegraphics[trim={0.0cm 0cm 0.05cm 0cm},clip,width=0.3\textwidth]{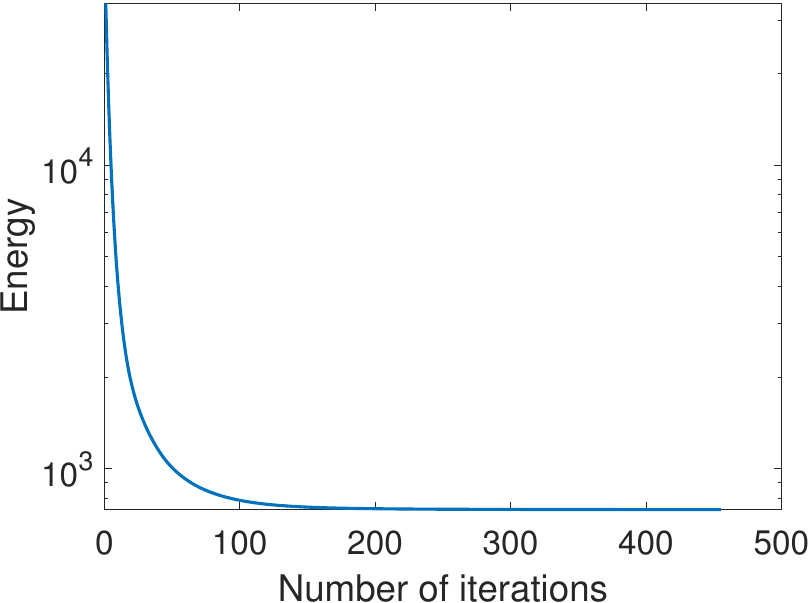}\\
		\includegraphics[trim={0.0cm 0cm 0.05cm 0cm},clip,width=0.3\textwidth]{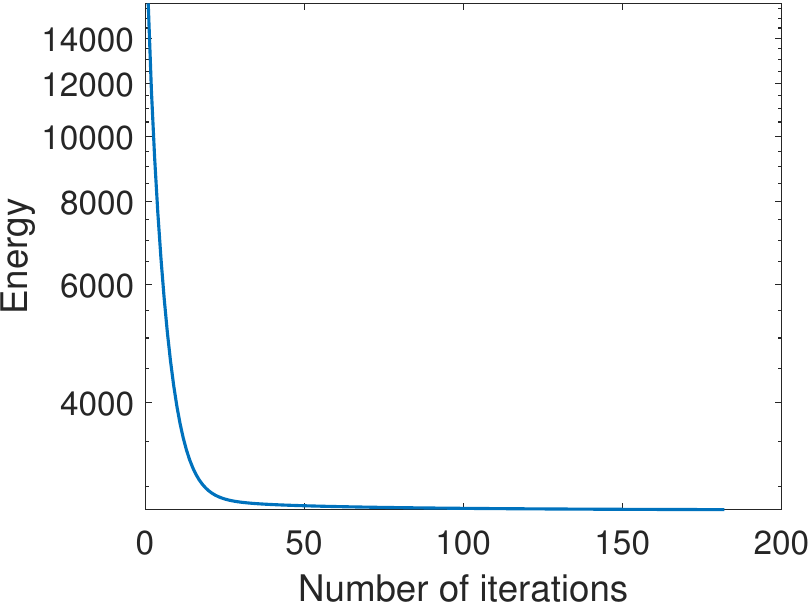} &
		\includegraphics[trim={0.0cm 0cm 0.05cm 0cm},clip,width=0.3\textwidth]{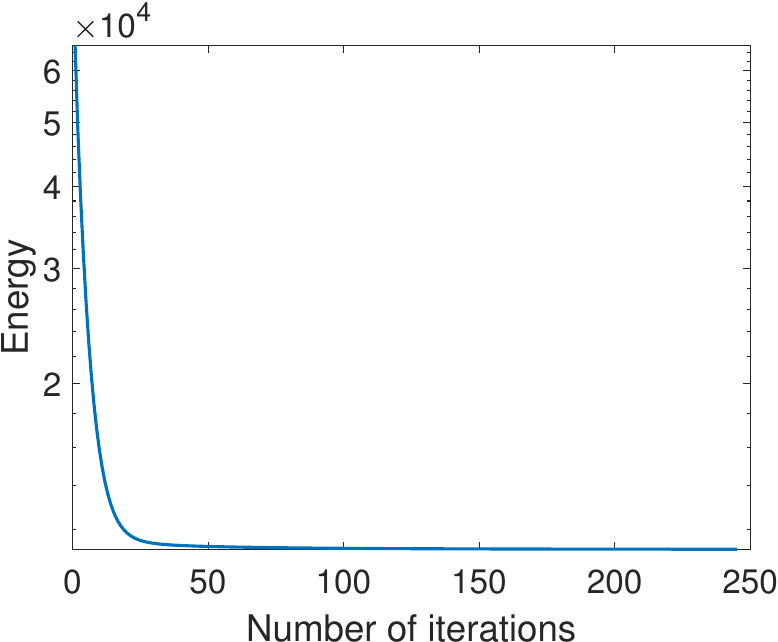} &
		\includegraphics[trim={0.0cm 0cm 0.05cm 0cm},clip,width=0.3\textwidth]{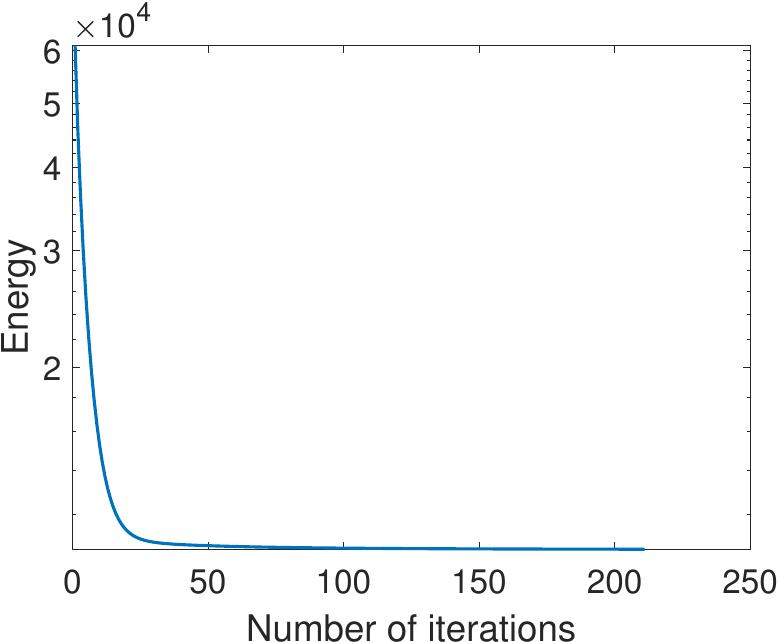} 
	\end{tabular}
	\caption{Performance of the proposed models. Histories of energies of results in Figure \ref{fig.general}. Row 1: Results by model (\ref{eq.model1}). Row 2: Results by model (\ref{eq.model2}). (a)-(c) correspond to 'Portrait', 'Peppers' and 'Fruits', respectively. }
	\label{fig.general.ener}
\end{figure}

\begin{figure}[t!]
	\centering
	\begin{tabular}{ccc}
		(a) & (b) &(c) \\
		\includegraphics[trim={0.0cm 0cm 0.05cm 0cm},clip,width=0.3\textwidth]{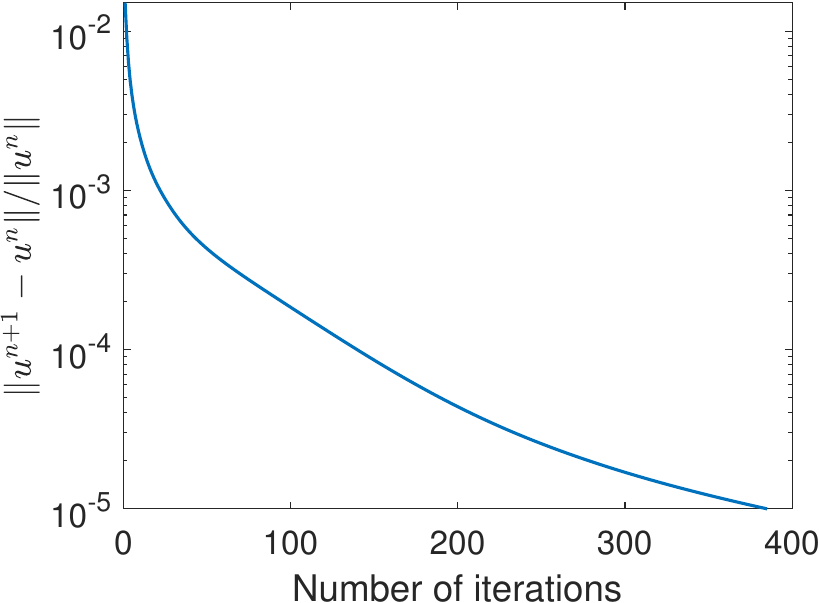}&
		\includegraphics[trim={0.0cm 0cm 0.05cm 0cm},clip,width=0.3\textwidth]{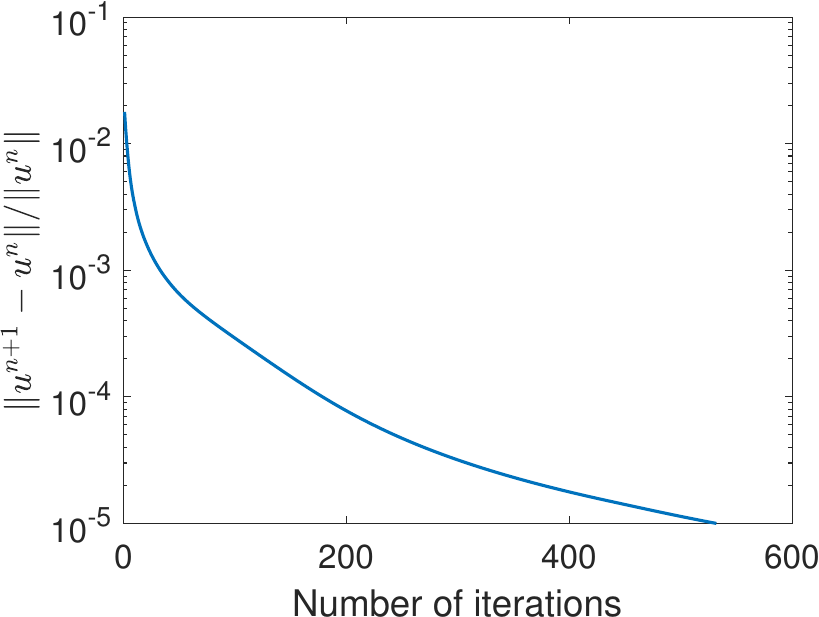}&
		\includegraphics[trim={0.0cm 0cm 0.05cm 0cm},clip,width=0.3\textwidth]{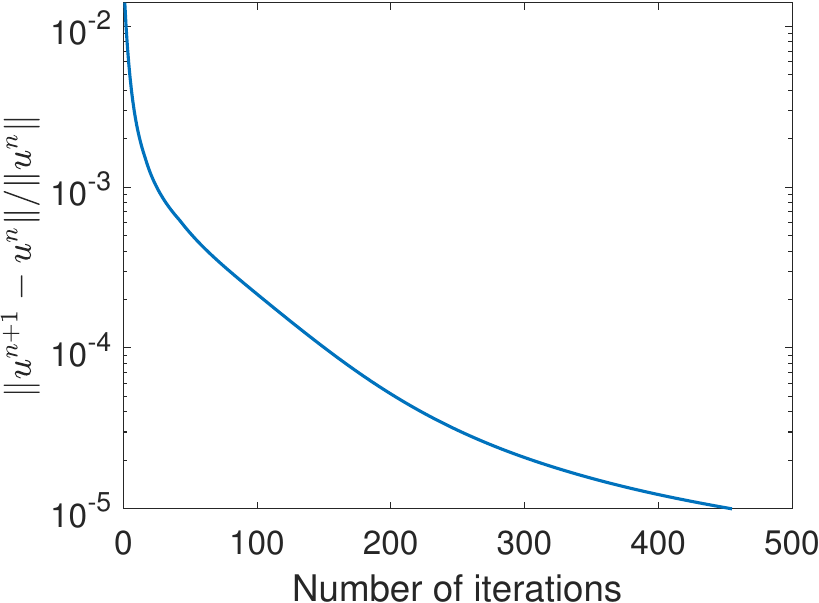}\\
		\includegraphics[trim={0.0cm 0cm 0.05cm 0cm},clip,width=0.3\textwidth]{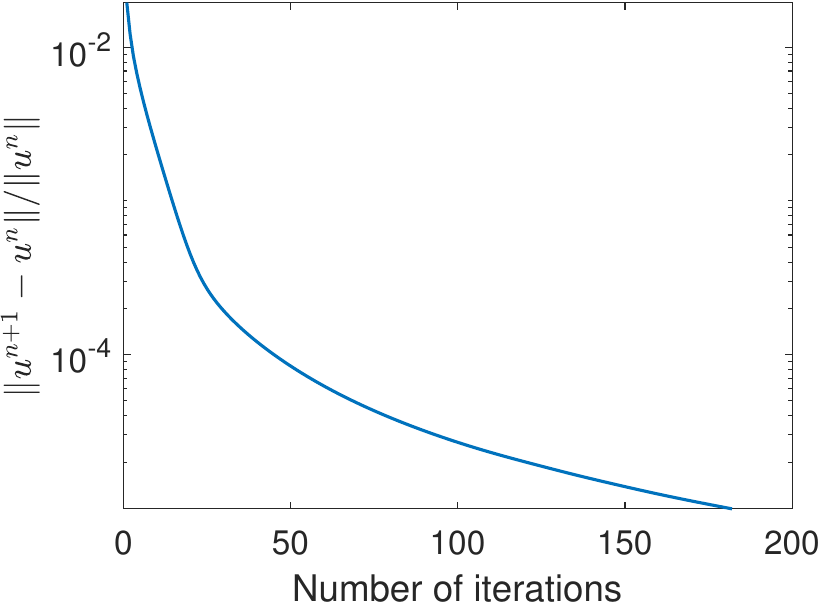} &
		\includegraphics[trim={0.0cm 0cm 0.05cm 0cm},clip,width=0.3\textwidth]{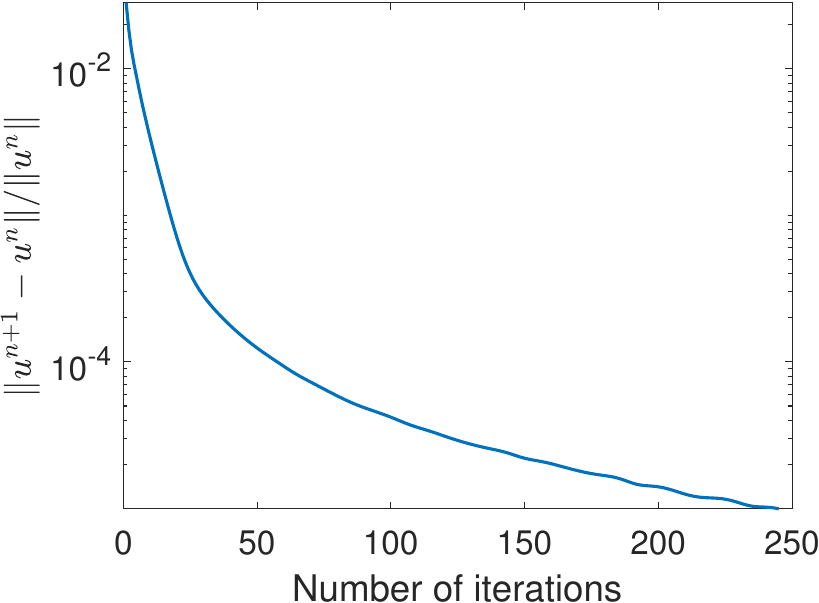} &
		\includegraphics[trim={0.0cm 0cm 0.05cm 0cm},clip,width=0.3\textwidth]{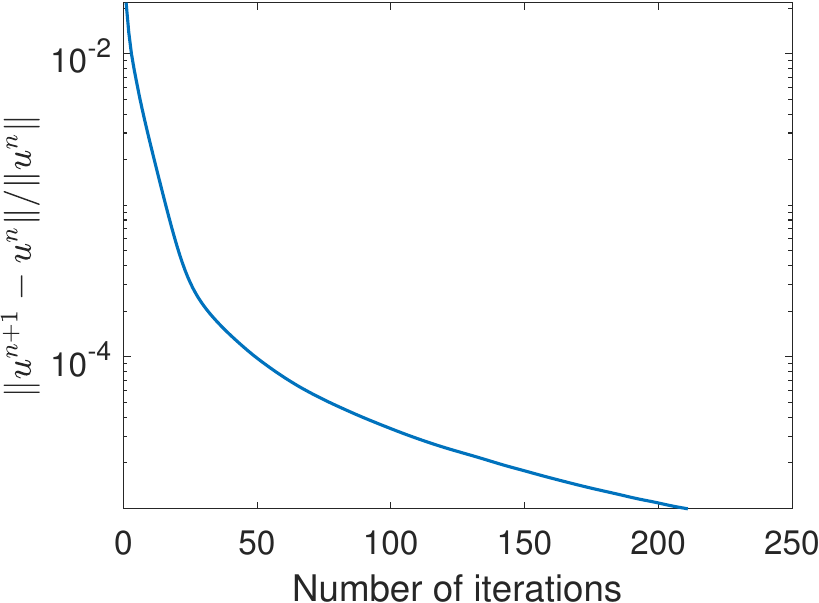} 
	\end{tabular}
	\caption{Performance of the proposed models. 
		Histories of relative errors of results in Figure \ref{fig.general}. Row 1: Results by model (\ref{eq.model1}). Row 2: Results by model (\ref{eq.model2}). (a)-(c) correspond to 'Portrait', 'Peppers' and 'Fruits', respectively. }
	\label{fig.general.err}
\end{figure}

\subsection{Performance on general examples}
We apply the propose algorithm on three images to test the performance and efficiency. The clean images are presented in the first row of Figure \ref{fig.general}. Then Gaussian noise with standard deviation (SD) 0.06 is added to generate the noisy images, as shown in the second row of Figure \ref{fig.general}. The denoised images by model (\ref{eq.model1}) and (\ref{eq.model2}) are shown in the third and fourth row of Figure \ref{fig.general}, respectively. Both models smooth the flat region of the images while keeping sharp edges. For the second column, the image 'Peppers', the clean image has some small oscillations. These oscillations are removed in the denoised images by both models. Meanwhile, the textures and shadows are kept. 

To better demonstrate the power of the proposed models, in Figure \ref{fig.general.surf}, we present the surface plot of the zoomed region of images in Figure \ref{fig.general}. Column 1--4 correspond to the clean images, noisy images, denoised images by (\ref{eq.model1}) and denoised images by model (\ref{eq.model2}), respectively. The three rows correspond to the three test images. The red, green and blue surfaces are surface plot of the RGB channels of the images. The denoised images by both models provide smooth image surfaces while keeping the contrast.

We then demonstrate the efficiency of the proposed algorithms in Figure \ref{fig.general.ener} and \ref{fig.general.err}, which present the histories of energy and relative error of the results in Figure \ref{fig.general} during iterations, respectively. Here the energy refers to the value of the functionals in (\ref{eq.model1}) and (\ref{eq.model2}). In both figures, the first row shows results by Algorithm \ref{alg.1}. The second row shows the results by Algorithm \ref{alg.2}. Column (a)--(c) correspond to 'Portrait', 'Peppers' and 'Fruits', respectively. In all experiments, Algorithm \ref{alg.1} needs about 100 iterations for the energy to achieve its minimum, while Algorithm \ref{alg.2} is more efficient and only uses 50 iterations. In terms of the relative error, sublinear converges is observed for both algorithms.

\begin{figure}[t!]
	\centering
	\begin{tabular}{cccc}
		(a) & (b) &(c) & (d)\\
		\includegraphics[width=0.22\textwidth]{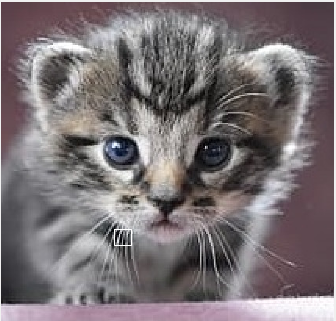}&
		\includegraphics[width=0.22\textwidth]{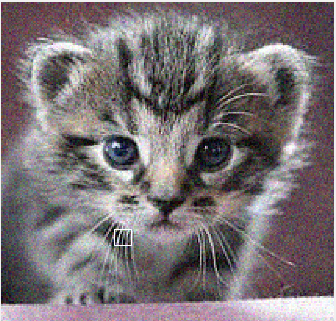}&
		\includegraphics[width=0.22\textwidth]{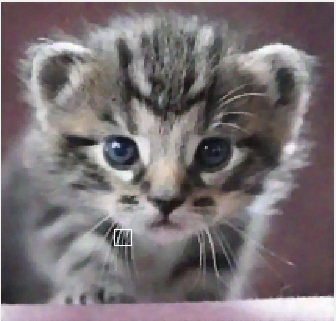}&
		\includegraphics[width=0.22\textwidth]{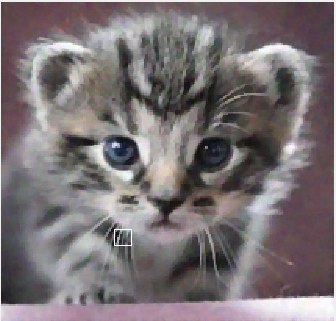}\\
		\includegraphics[trim={0.017cm 0 0.017cm 0.026cm},clip,width=0.22\textwidth]{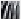}&
		\includegraphics[trim={0.017cm 0 0.017cm 0.026cm},clip,width=0.22\textwidth]{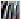}&
		\includegraphics[trim={0.017cm 0 0.017cm 0.026cm},clip,width=0.22\textwidth]{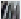}&
		\includegraphics[trim={0.017cm 0 0.017cm 0.026cm},clip,width=0.22\textwidth]{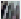}\\
		(e) &(f) &(g) & (h)\\
		\includegraphics[width=0.22\textwidth]{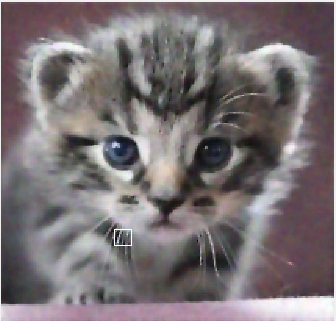}&
		\includegraphics[width=0.22\textwidth]{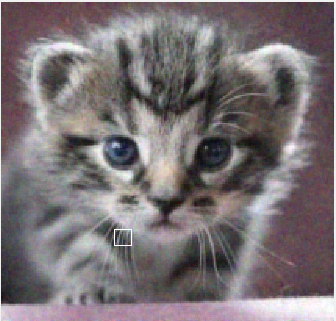}&
		\includegraphics[width=0.22\textwidth]{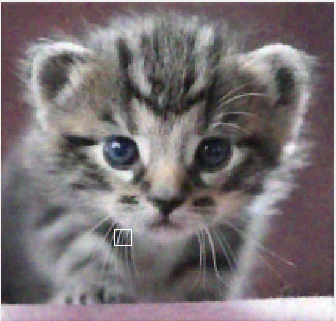}&
		\includegraphics[width=0.22\textwidth]{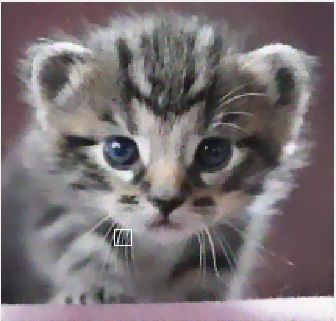}\\
		\includegraphics[trim={0.017cm 0 0.017cm 0.026cm},clip,width=0.22\textwidth]{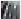}&
		\includegraphics[trim={0.017cm 0 0.017cm 0.026cm},clip,width=0.22\textwidth]{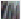}&
		\includegraphics[trim={0.017cm 0 0.017cm 0.026cm},clip,width=0.22\textwidth]{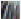}&
		\includegraphics[trim={0.017cm 0 0.017cm 0.026cm},clip,width=0.22\textwidth]{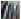}
	\end{tabular}
	\caption{Comparison of the proposed models and existing models on an image with Gaussian noise and SD=0.06. (a) Clean image. (b) Noisy image. (c) Denoised image by model (\ref{eq.model1}). (d) Denoised image by model (\ref{eq.model2}). (e) Denoised image by CE. (f) Denoised image by PA. (g) Denoised image by CTV. (h) Denoised image by VTV.}
	\label{fig.cat}
\end{figure}

\begin{table}[t!]
	\centering
	(a)\\
	\begin{tabular}{c|c|c|c|c|c|c|c}
		\hline\hline
		& Noisy & Model (\ref{eq.model1}) & Model (\ref{eq.model2}) & CE & PA & CTV & VTV\\
		\hline
		Portrait & 24.89& 33.91 & {\bf 34.00} &33.10 &30.56 & 31.74 & 32.94 \\
		\hline
		Peppers & 24.43 & {\bf 31.48} & 31.27 & 30.91 & 29.57 & 30.45 & 31.03\\
		\hline
		Fruits & 24.44 & 32.63 & {\bf 32.67} & 31.71 & 29.58 & 30.94 & 31.94\\
		\hline 
		Cat & 24.44 & {\bf 28.69} & 28.37 & 27.34 & 27.28 & 27.41 & 27.68\\
		\hline\hline
	\end{tabular}
	\vspace{0.3cm}
	
	(b)\\
	\begin{tabular}{c|c|c|c|c|c|c|c}
		\hline\hline
		& Noisy & Model (\ref{eq.model1}) & Model (\ref{eq.model2}) & CE & PA & CTV & VTV\\
		\hline
		Portrait & 0.6300 & 0.9514 & 0.9508 & 0.9379 &0.9132 &0.9022 &{\bf 0.9623}\\
		\hline
		Peppers & 0.9174 & {\bf 0.9822} & 0.9817 & 0.9802 & 0.9755 & 0.9781 & 0.9803\\
		\hline
		Fruits & 0.8149 & {\bf 0.9676} & 0.9673& 0.9611 & 0.9486 & 0.9518 & 0.9649\\
		\hline 
		Cat & 0.6919 & {\bf 0.8839} & 0.8778 & 0.8465 & 0.8435 & 0.8493 & 0.8576\\
		\hline\hline
	\end{tabular}
	\caption{\label{tab.compare} Comparison of the proposed models with existing models on images with Gaussian noise and SD=0.06. (a) PSNR values of the noisy image and denoised images by different methods. (b) SSIM values of the noisy image and denoised images by different methods. The largest value for each experiment (row) is marked in bold.}
\end{table}

\begin{table}[t!]
	\centering
	(a)\\
	\begin{tabular}{c|c|c|c|c|c|c}
		\hline\hline
		& Model (\ref{eq.model1}) & Model (\ref{eq.model2}) & CE & PA & CTV & VTV\\
		\hline
		Portrait $(295\times263)$& 385 & 182   &196 &118 & 100  & 492  \\
		\hline
		Peppers $(512\times512)$& 532  & 245 & 262 & 113 & 119 & 461\\
		\hline
		Fruits $(512\times512)$ & 455 & 211  & 244 & 114 & 113 & 427\\
		\hline 
		Cat $(204\times213)$ & 654  & 261 & 374 & 65 & 116 & 342\\
		\hline\hline
	\end{tabular}
	\vspace{0.3cm}
	
	(b)\\
	\begin{tabular}{c|c|c|c|c|c|c}
		\hline\hline
		& Model (\ref{eq.model1}) & Model (\ref{eq.model2}) & CE & PA & CTV & VTV\\
		\hline
		Portrait $(295\times263)$ & 61.92 & 25.63 & 64.44 & 2.07 &1.28 &8.86 \\
		\hline
		Peppers $(512\times512)$& 489.72  & 190.02 & 346.27 & 8.75 & 5.20 & 34.05\\
		\hline
		Fruits $(512\times512)$ & 475.37  & 164.47& 331.49 & 8.68 & 4.60 & 31.43\\
		\hline 
		Cat $(204\times213)$& 40.48  & 15.77 & 41.69 & 0.49 & 0.55 & 2.48\\
		\hline\hline
	\end{tabular}
	\caption{\label{tab.time} Comparison of the proposed models with existing models on images with Gaussian noise and SD=0.06. (a) Number of iterations used to satisfy stopping criterion. (b) CPU time (in seconds) used to satisfy stopping criterion.}
\end{table}

\begin{figure}[t!]
	\centering
	\begin{tabular}{cccc}
		\includegraphics[width=0.2\textwidth]{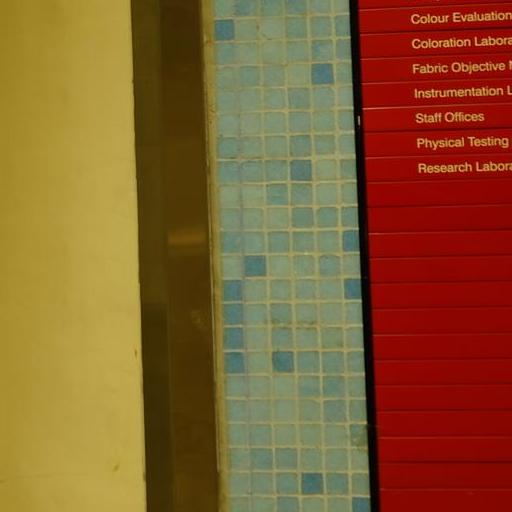}&
		\includegraphics[width=0.2\textwidth]{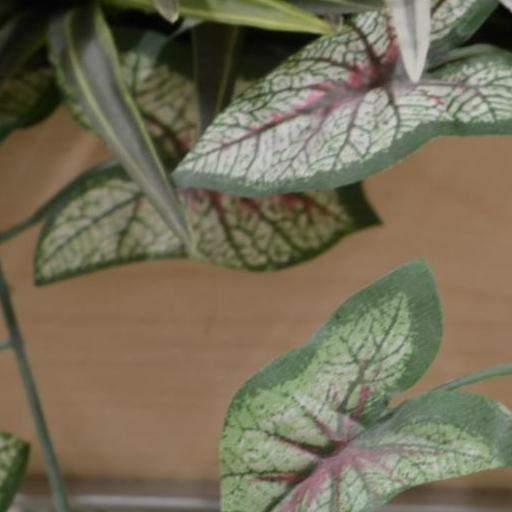}&
		\includegraphics[width=0.2\textwidth]{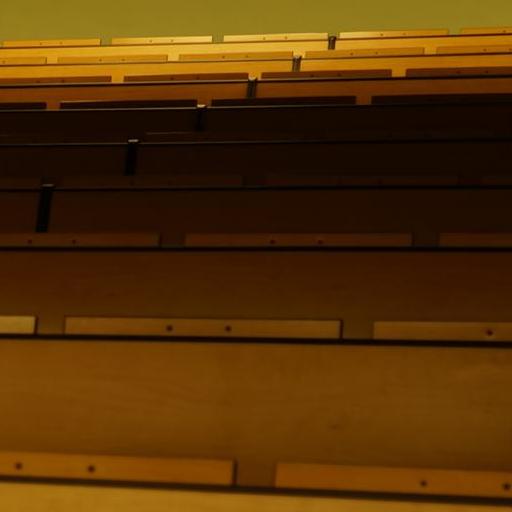}&
		\includegraphics[width=0.2\textwidth]{figure/BM_pen.jpg}\\
		\includegraphics[width=0.2\textwidth]{figure/BM_plant3.jpg}&
		\includegraphics[width=0.2\textwidth]{figure/BM_toy.jpg}&
		\includegraphics[width=0.2\textwidth]{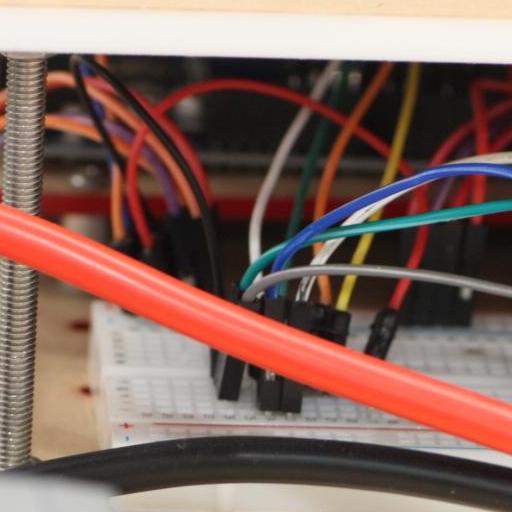}&
		\includegraphics[width=0.2\textwidth]{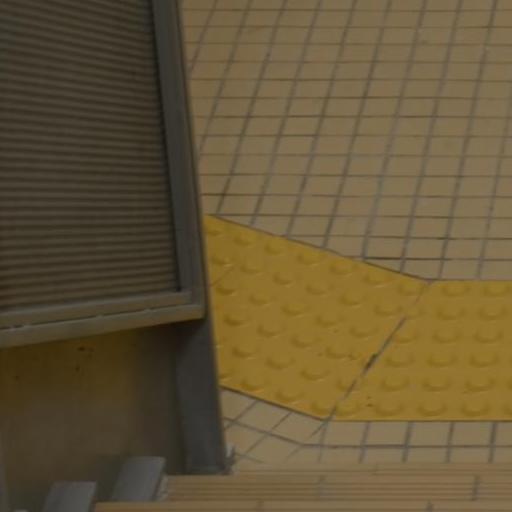}
	\end{tabular}
	\caption{Benchmark images from \cite{xu2018real}. Images in the first row: 'Wall', 'Flower', 'Classroom' and 'Pens'. Images in the second row: 'Plant', 'Toy', 'Wires' and 'Stairs'.}
	\label{fig.benchmark}
\end{figure}

\begin{table}[t!]
	\centering
	(a)\\
	\begin{tabular}{c|c|c|c|c|c|c|c}
		\hline\hline
		& Noisy & Model (\ref{eq.model1}) & Model (\ref{eq.model2}) & CE & PA & CTV & VTV\\
		\hline
		Wall& 24.91 & {\bf 33.91} & 33.84 & 33.02 & 31.63 & 32.01 & 33.57\\
		\hline
		Flower& 24.50 & {\bf 35.14} & 35.08 & 34.73 & 33.39 & 33.03 & 34.02\\
		\hline
		Classroom & 25.35 & 34.41 & {\bf 34.49} & 33.64 & 32.39 & 32.59 & 34.05\\
		\hline
		Pens & 24.74 & 33.38 & {\bf 33.50} & 32.41 & 29.78 & 31.05 & 31.96\\
		\hline
		Plant & 24.60 & 33.30 & {\bf 33.46} & 32.24 & 30.20 & 31.13 & 31.91\\
		\hline
		Toy & 24.53 & {\bf 35.56} & 35.40 & 34.92 & 33.64 & 33.15 & 35.11\\
		\hline
		Wires & 24.74 & 34.06 & {\bf 34.15} & 33.56 & 31.38 & 32.19 & 32.78\\
		\hline
		Stairs & 24.47 & {\bf 36.65} & 36.59 & 36.17 & 36.22 & 33.86 & 36.22\\
		\hline\hline
		Average & 24.73 & 34.55 & {\bf 34.56} & 33.84 & 32.33 & 32.38 &33.70\\
		\hline\hline
	\end{tabular}
	\vspace{0.3cm}
	
	(b)\\
	\begin{tabular}{c|c|c|c|c|c|c|c}
		\hline\hline
		& Noisy & Model (\ref{eq.model1}) & Model (\ref{eq.model2}) & CE & PA & CTV & VTV\\
		\hline
		Wall& 0.8724 & {\bf 0.9836} & {\bf 0.9836} & 0.9816 & 0.9766 & 0.9744 & 0.9817\\
		\hline
		Flower& 0.6866 & {\bf 0.9600} & 0.9583 & 0.9564 & 0.9469 & 0.9348 & 0.9506\\
		\hline
		Classroom & 0.8350 & 0.9651 & {\bf 0.9655} & 0.9624 & 0.9587 & 0.9540 & 0.9637\\
		\hline
		Pens & 0.7070 & {\bf 0.9589} &0.9581 & 0.9502 & 0.9230 & 0.9235 & 0.9585\\
		\hline
		Plant & 0.6326 & 0.9503 & 0.9491 & 0.9388 & 0.9118 & 0.9072 & {\bf 0.9506}\\
		\hline
		Toy & 0.7872 & {\bf 0.9767} & 0.9757 & 0.9740 & 0.9711 & 0.9627 & 0.9743\\
		\hline
		Wires & 0.6424 & 0.9511 & 0.9485 & 0.9434 & 0.9228 & 0.9069 & {\bf 0.9519}\\
		\hline
		Stairs & 0.6995 & {\bf 0.9731} & 0.9723 & 0.9702 & 0.9711 & 0.9461 & 0.9730\\
		\hline\hline
		Average & 0.7328 & {\bf 0.9648} & 0.9639 & 0.9596 & 0.9477 & 0.9387 & 0.9630\\
		\hline\hline
	\end{tabular}
	\caption{ Comparison of the proposed models with existing models on benchmark images in Figure \ref{fig.benchmark}. The noisy images are generated by adding Gaussian noise with SD=0.06. (a) PSNR values of the noisy image and denoised images by different methods. (b) SSIM values of the noisy image and denoised images by different methods. The largest value for each experiment (row) is marked in bold. \label{tab.benchmark}}
\end{table}

\begin{table}[t!]
	\centering
	(a)\\
	\begin{tabular}{c|c|c|c|c|c|c|c}
		\hline\hline
		& Noisy & Model (\ref{eq.model1}) & Model (\ref{eq.model2}) & CE & PA & CTV & VTV\\
		\hline
		Flower& 14.42 &  28.64 & {\bf 28.67} & 28.22 & 27.41 & 26.36 & 28.28\\
		\hline
		Pens & 15.06 & {\bf 25.37} &  25.28 & 24.32 & 23.83 & 24.15 & 24.67\\
		\hline
		Plant & 15.01 & {\bf 25.95} & 25.85 & 25.02 & 24.74 & 24.76 & 25.33\\
		\hline
		Wires & 15.20 & 26.01 & {\bf 26.03} & 25.54 & 24.96 & 24.96 & 25.47\\
		\hline\hline
		Average & 14.92 & {\bf 26.49} &  26.45 & 25.76 & 25.24 & 25.06 &25.94\\
		\hline\hline
	\end{tabular}
	\vspace{0.3cm}
	
	(b)\\
	\begin{tabular}{c|c|c|c|c|c|c|c}
		\hline\hline
		& Noisy & Model (\ref{eq.model1}) & Model (\ref{eq.model2}) & CE & PA & CTV & VTV\\
		\hline
		Flower& 0.2018 &  0.8746 & {\bf 0.8767} & 0.8643 & 0.8241 & 0.7797 & 0.8688\\
		\hline
		Pens & 0.3131 &  0.8521 & {\bf 0.8553} & 0.8246 & 0.7774 & 0.7610 & 0.8521\\
		\hline
		Plant & 0.2165 & 0.8292 & 0.8330 & 0.7972 & 0.7379 & 0.7085 & {\bf 0.8415}\\
		\hline
		Wires & 0.2900 & 0.8542 & {\bf 0.8619} & 0.8368 & 0.7832 & 0.7441 &  0.8605\\
		\hline\hline
		Average & 0.2554 & 0.8525 & {\bf 0.8567} & 0.8307 & 0.7807 & 0.7483 & 0.8557\\
		\hline\hline
	\end{tabular}
	\caption{ Comparison of the proposed models with existing models on benchmark images in Figure \ref{fig.benchmark} with large noise. The noisy images are generated by adding Gaussian noise with SD=0.2. (a) PSNR values of the noisy image and denoised images by different methods. (b) SSIM values of the noisy image and denoised images by different methods. The largest value for each experiment (row) is marked in bold. \label{tab.benchmark02}}
\end{table}

\begin{figure}[t!]
	\centering
	\begin{tabular}{ccc}
		(a) & (b) & \\
		\includegraphics[width=0.3\textwidth]{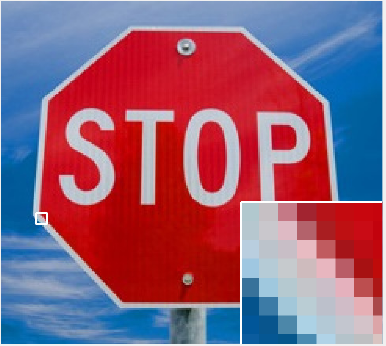}&
		\includegraphics[width=0.3\textwidth]{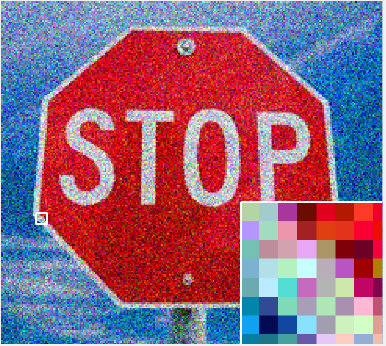}&\\
		(c) & (d) & (e) \\
		\includegraphics[width=0.3\textwidth]{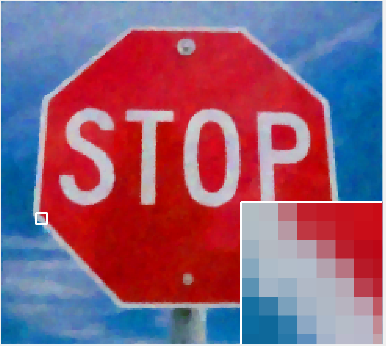}&
		\includegraphics[width=0.3\textwidth]{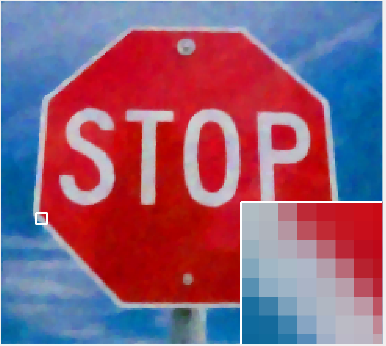}&
		\includegraphics[width=0.3\textwidth]{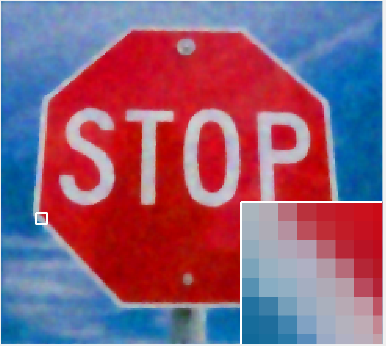}\\
		(f) & (g) & (h)\\
		\includegraphics[width=0.3\textwidth]{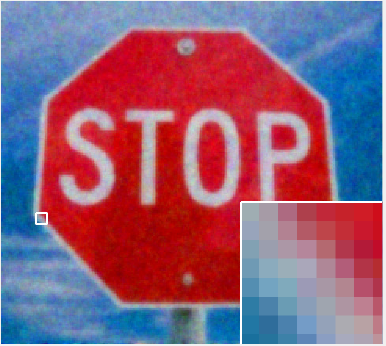}&
		\includegraphics[width=0.3\textwidth]{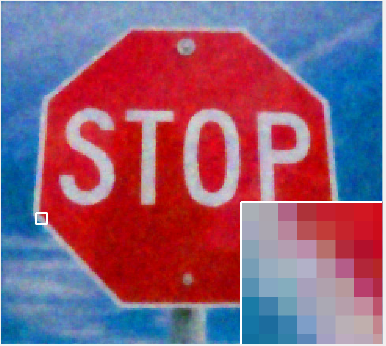}&
		\includegraphics[width=0.3\textwidth]{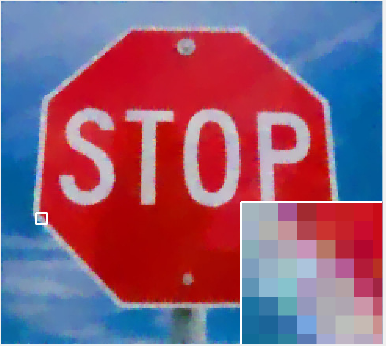}\\
		
	\end{tabular}
	\caption{ Comparison of the proposed models and existing models on an image with large noise. (a) Clean image. (b) Noisy image with SD=0.15. (c) Denoised image by model (\ref{eq.model1}). (d) Denoised image by model (\ref{eq.model2}). (e) Denoised image by CE. (f) Denoised image by PA. (g) Denoised image by CTV. (h) Denoised image by VTV.}
	\label{fig.stop.015}
\end{figure}

%

\begin{figure}[t!]
	\centering
	\begin{tabular}{ccc}
		(a) & (b) & \\
		\includegraphics[width=0.3\textwidth]{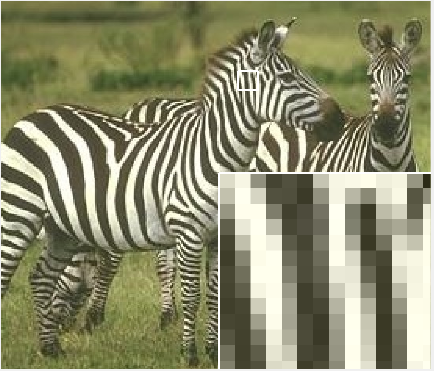}&
		\includegraphics[width=0.3\textwidth]{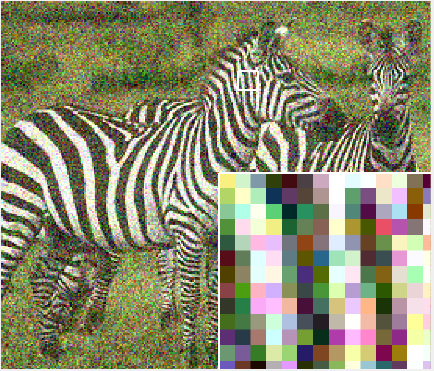}&\\
		(c) & (d) &(e) \\
		\includegraphics[width=0.3\textwidth]{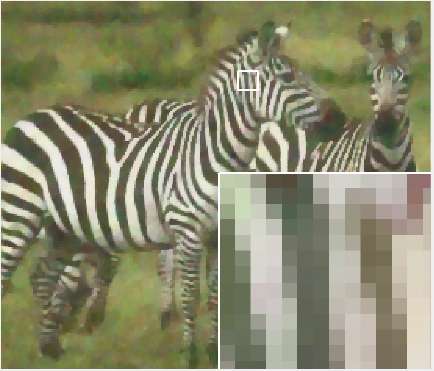}&
		\includegraphics[width=0.3\textwidth]{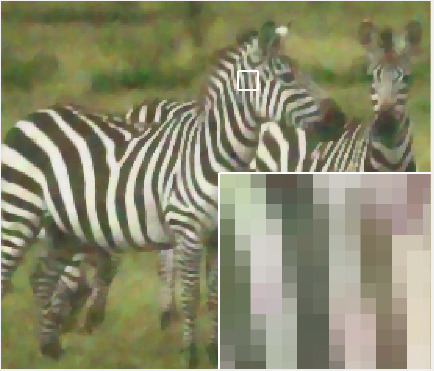}&
		\includegraphics[width=0.3\textwidth]{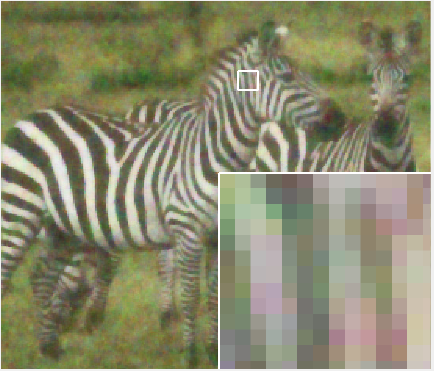}\\
		(f) & (g) & (h)\\
		\includegraphics[width=0.3\textwidth]{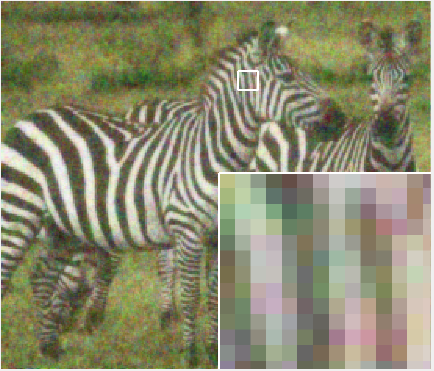}&
		\includegraphics[width=0.3\textwidth]{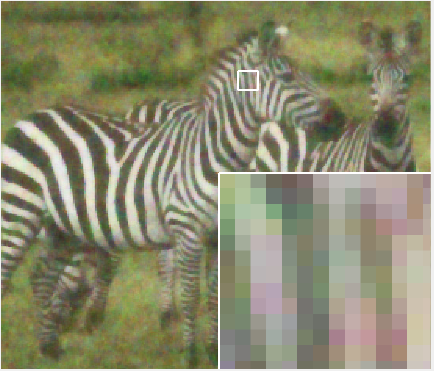}&
		\includegraphics[width=0.3\textwidth]{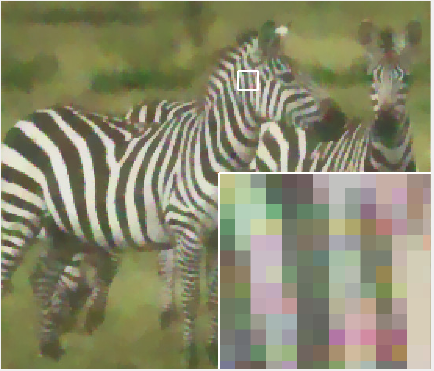}\\
		
	\end{tabular}
	\caption{Comparison of the proposed models and existing models on an image with large noise. (a) Clean image. (b) Noisy image with SD=0.15. (c) Denoised image by model (\ref{eq.model1}). (d) Denoised image by model (\ref{eq.model2}). (e) Denoised image by CE. (f) Denoised image by PA. (g) Denoised image by CTV. (h) Denoised image by VTV.}
	\label{fig.zebra.015}
\end{figure}

%

\begin{table}[t!]
	\centering
	(a)\\
	\begin{tabular}{c|c|c|c|c|c|c|c}
		\hline\hline
		& Noisy & Model (\ref{eq.model1}) & Model (\ref{eq.model2}) & CE & PA & CTV & VTV\\
		\hline 
		Stop Sign & 16.44 & {\bf 28.93} & 28.91 & 27.71 & 25.01 & 26.27 & 27.88\\
		\hline
		Zebra & 16.83 & {\bf 22.08} & 21.42 & 20.35 & 20.54 & 19.97 & 20.73\\
		\hline\hline
	\end{tabular}
	\vspace{0.3cm}
	
	(b)\\
	\begin{tabular}{c|c|c|c|c|c|c|c}
		\hline\hline
		& Noisy & Model (\ref{eq.model1}) & Model (\ref{eq.model2}) & CE & PA & CTV & VTV\\
		\hline 
		Stop Sign & 0.7785 & {\bf 0.9788} & 0.9787 & 0.9735 & 0.9547 & 0.9648 & 0.9738\\
		\hline
		Zebra & 0.6053 & {\bf 0.8486} & 0.8264 & 0.7870 & 0.7969 & 0.7844 & 0.8067\\
		\hline\hline
	\end{tabular}
	\caption{\label{tab.compare.015} Comparison of the proposed models with existing models on images with Gaussian noise and SD=0.15. (a) PSNR values of the noisy image and denoised images by different methods. (b) SSIM values of the noisy image and denoised images by different methods. The largest value for each experiment (row) is marked in bold.}
\end{table}

\begin{figure}[t!]
	\centering
	\begin{tabular}{ccc}
		(a) & (b) & \\
		\includegraphics[width=0.3\textwidth]{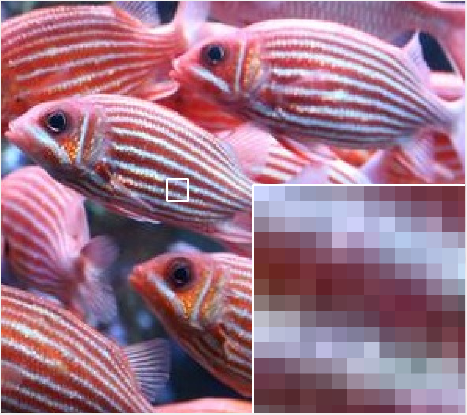}&
		\includegraphics[width=0.3\textwidth]{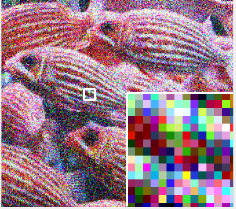}&\\
		(c) & (d) &(e) \\
		\includegraphics[width=0.3\textwidth]{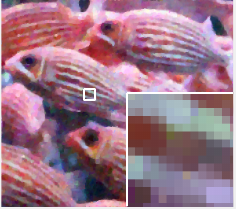}&
		\includegraphics[width=0.3\textwidth]{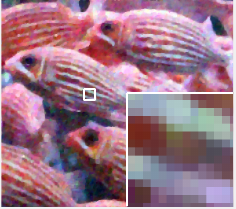}&
		\includegraphics[width=0.3\textwidth]{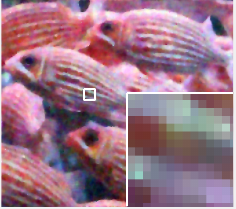}\\
		(f) & (g) & (h)\\
		\includegraphics[width=0.3\textwidth]{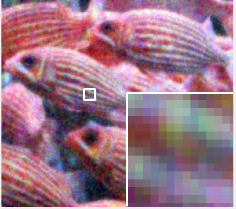}&
		\includegraphics[width=0.3\textwidth]{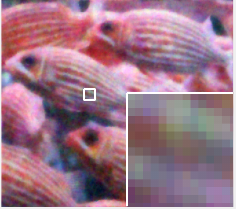}&
		\includegraphics[width=0.3\textwidth]{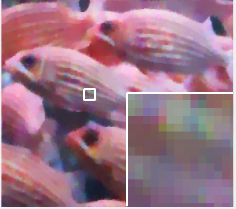}\\
		
	\end{tabular}
	\caption{Comparison of the proposed models and existing models on an image with large noise. (a) Clean image. (b) Noisy image with SD=0.3. (c) Denoised image by model (\ref{eq.model1}). (d) Denoised image by model (\ref{eq.model2}). (e) Denoised image by CE. (f) Denoised image by PA. (g) Denoised image by CTV. (h) Denoised image by VTV.}
	\label{fig.fish.03}
\end{figure}

%
\begin{center}
	\begin{figure}[t!]
		\centering
		\begin{tabular}{ccc}
			(a) & (b) & \\
			\includegraphics[width=0.3\textwidth]{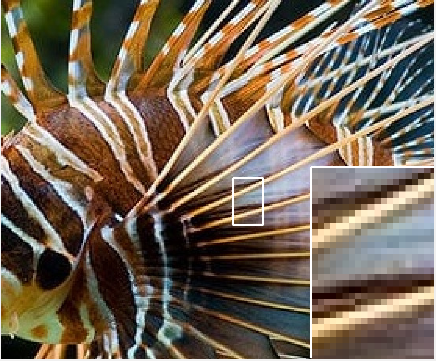}&
			\includegraphics[width=0.3\textwidth]{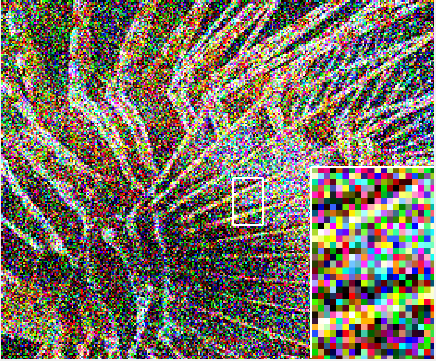}&\\
			(c) & (d) &(e) \\
			\includegraphics[width=0.3\textwidth]{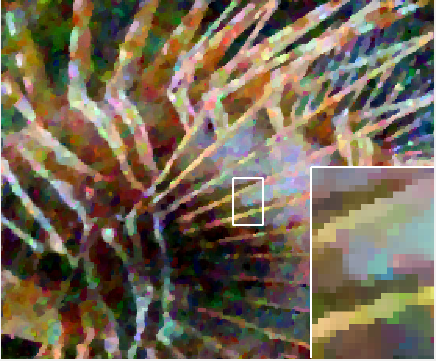}&
			\includegraphics[width=0.3\textwidth]{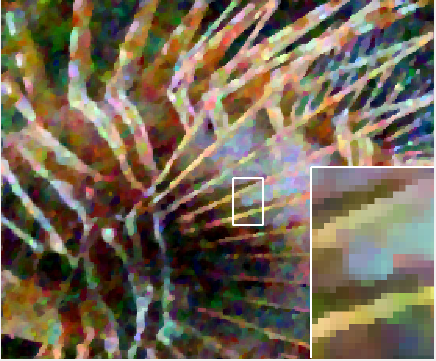}&
			\includegraphics[width=0.3\textwidth]{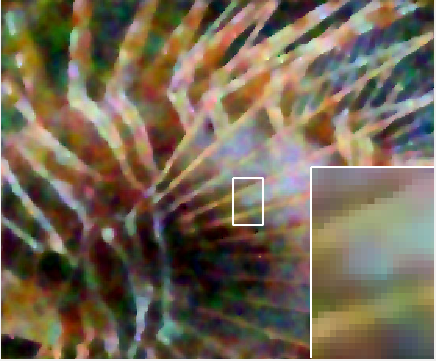}\\
			(f) & (g) & (h)\\
			\includegraphics[width=0.3\textwidth]{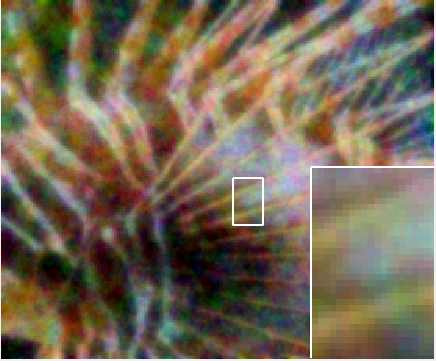}&
			\includegraphics[width=0.3\textwidth]{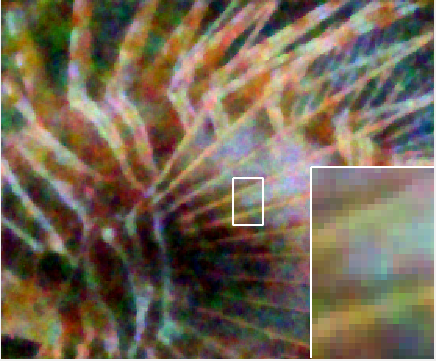}&
			\includegraphics[width=0.3\textwidth]{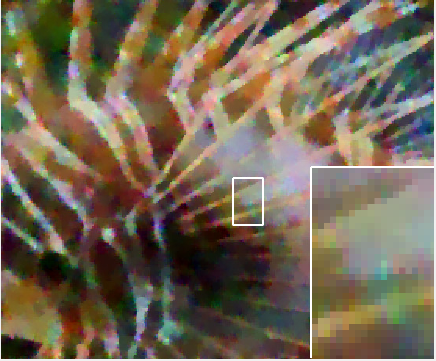}\\
			
		\end{tabular}
		\caption{Comparison of the proposed models and existing models on an image with large noise. (a) Clean image. (b) Noisy image with SD=0.5. (c) Denoised image by model (\ref{eq.model1}). (d) Denoised image by model (\ref{eq.model2}). (e) Denoised image by CE. (f) Denoised image by PA. (g) Denoised image by CTV. (h) Denoised image by VTV.}
		\label{fig.stripes.05}
	\end{figure}
\end{center}

\subsection{Comparison with other models}
We next demonstrate the advantages of the proposed models by comparing them with CE, PA, CTV and VTV.
In this experiment with noisy images and SD=0.06, we set $\alpha=5\times10^{-4}, \beta=50,\eta=3$ for model (\ref{eq.model1}), $\alpha=3\times10^{-2},\beta=30, \eta=0.2$ for model (\ref{eq.model2}). For other models, we use  $\alpha=1\times10^{-2},\beta=5\times10^{-3}, \eta=1$ for CE, $\lambda=6$ for CTV and $\lambda=0.1$ for VTV. The results on the natural images, 'Cat', is shown in Figure \ref{fig.cat}, in which (a) and (b) show the clean and noisy images, (c)--(h) show denoised images by model (\ref{eq.model1}), model (\ref{eq.model2}), CE, PA, CTV and VTV, respectively. To better demonstrate the advantage of the propose models, for each image, the zoomed image of some selected region is presented under it. 
The zoomed images in (c)--(d) have a uniform theme: the pixel colors of the same object change smoothly. In (f)--(h), many pixels have abrupt and artificial colors. In particular, the zoomed image in (h) looks like a color palette. In this comparison, the proposed models give more natural recoveries of images with less abrupt and artificial colors.

To quantify the improvements of the proposed models over others, we present the PSNR and SSIM \cite{wang2004image} values of the denoised images of all experiments and models in Table \ref{tab.compare}. For each experiment, the largest value is marked in bold. The proposed models provide the largest PSNR and SSIM values in almost all experiments. 

We present in Table \ref{tab.time} the number of iterations and CPU time (in seconds) used to satisfy the stopping criteria of each experiment in Table \ref{tab.compare}. Due to the model complexity, such as nonlinearities and high order derivatives, algorithms for the three color elastica related models (model (\ref{eq.model1}), model (\ref{eq.model2}) and CE) need more time than those for the other models. While among these three models, our algorithm for  model (\ref{eq.model2}) is the most efficient one. It only uses about half of the CPU time of that used by the other two algorithms to satisfy the stopping criterion.

We then conduct a comprehensive comparison of the proposed models with CE, PA, CTV and VTV. We consider eight benchmark images \cite{xu2018real} shown in Figure \ref{fig.benchmark}. The noisy images are generated by adding Gaussian noise with SD=0.06. The parameters of all models are the same as those mentioned at the beginning of the this subsection. The PSNR and SSIM values of all results are summarized in Table \ref{tab.benchmark}. On average, the results by model (\ref{eq.model1}) have an increment of 0.71 in PSNR and 0.0018 in SSIM over the best existing models. Compared to that, Model (\ref{eq.model2}) provides results with a larger PSNR but slightly smaller SSIM. 
The comparison of these models on images with larger noise, SD=0.2, is summarized in Table \ref{tab.benchmark02}. In this experiment, we use $\alpha=5\times10^{-4}, \beta=50,\eta=10$ for model (\ref{eq.model1}), $\alpha=5\times10^{-3},\beta=30, \eta=3.5$ for model (\ref{eq.model2}), $\alpha=1\times10^{-2},\beta=5\times10^{-3}, \eta=4$ for CE, $\lambda=2.5$ for CTV and $\lambda=0.3$ for VTV. In this comparison, both model (\ref{eq.model1}) and (\ref{eq.model2}) give results with larger PSNR than other models. The increment is about 0.5 on average. For the SSIM value, on average, model (\ref{eq.model2}) gives the best results.

We further compare all models on three images with large noise in Figure \ref{fig.stop.015}--\ref{fig.zebra.015}. The clean images and noisy images containing Gaussian noise with SD=0.15 are shown in (a) and (b), respectively. Denoised images by model (\ref{eq.model1}), model (\ref{eq.model2}), CE, PA, CTV and VTV are shown in (c)--(h), respectively. In this set of experiments, we set $\alpha=5\times10^{-4}, \beta=50,\eta=7$ for model (\ref{eq.model1}), $\alpha=5\times10^{-3},\beta=30, \eta=2.4$ for model (\ref{eq.model2}). For other models, we use  $\alpha=1\times10^{-2},\beta=5\times10^{-3}, \eta=2.5$ for CE, $\lambda=2.5$ for CTV and $\lambda=0.25$ for VTV. The proposed models provide the best results which recover the features with uniform color themes. For results by other models, either they have strong smoothing effects or contain pixels with abrupt colors. To quantify the differences, the PSNR and SSIM values of the denoised images are shown in Table \ref{tab.compare.015}. Again, results by the proposed models have the largest values.

In the last example, we compared all algorithms on images with very large noise, SD=0.3 in Figure \ref{fig.fish.03} and SD=0.5 in Figure \ref{fig.stripes.05}. For the experiments in Figure \ref{fig.fish.03}, we set $\alpha=5\times10^{-4}, \beta=50,\eta=12$ for model (\ref{eq.model1}), $\alpha=5\times10^{-3},\beta=30, \eta=3$ for model (\ref{eq.model2}). For other models, we use  $\alpha=1\times10^{-2},\beta=5\times10^{-3}, \eta=3.5$ for CE, $\lambda=1$ for CTV and $\lambda=0.6$ for VTV. Our results are presented in Figure \ref{fig.fish.03}. For the experiments in Figure \ref{fig.stripes.05}, we set $\alpha=5\times10^{-4}, \beta=50,\eta=18$ for model (\ref{eq.model1}), $\alpha=5\times10^{-3},\beta=30, \eta=5$ for model (\ref{eq.model2}). For other models, we use  $\alpha=1\times10^{-2},\beta=5\times10^{-3}, \eta=7$ for CE, $\lambda=0.8$ for CTV and $\lambda=0.8$ for VTV. Again, the proposed models (\ref{eq.model1}) and (\ref{eq.model2}) give the best results which recover the features best.

\section{Conclusion}
\label{sec.conclusion}
We propose in this article two modified color elastica models for vector-valued image regularization. 
Compared to the original color elastica model, model (\ref{eq.model1}) multiplies the Laplace-Beltrami term by the image metric $g$, and model (\ref{eq.model2}) utilizes the relation between the surface area regularizer and total variation regularizer. 
Both models reduces to Euler's elastica model for gray-scale images. For each proposed model, we introduced an operator-splitting method to find the minimizer. 
The nonlinearity is decoupled by introducing matrix- and vector-valued variables. 
Then, finding the minimizer is converted to solving an associated initial value problem, which is time-discretized by an operator-splitting method. 
Each subproblem after splitting either has a closed-form solution or can be solved efficiently. 
The advantages of the proposed models are demonstrated by systematic numerical experiments. 
Compared to existing models, the proposed models give more natural recoveries of images with less abrupt or artificial colors, and better PSNR and SSIM values.

\section*{Acknowledgment}

The authors would like to thank the anonymous reviewers of this article for most helpful comments and suggestions. 

\appendix
\section*{Appendix}
\section{Relation between the color elastica model (\ref{eq.model.old}) and Euler's elastica model} \label{sec.colorElastica.Euler}
We discuss the color elastica model (\ref{eq.model.old}) for one-channel images and its relation with Euler's elastica model. This discussion follows from \cite[Remark 3.2]{liu2021color}. 
For a one-channel image $v$, model (\ref{eq.model.old}) becomes 
\begin{align}
	\min_{v\in \cH^2(\Omega)} &\sqrt{\alpha}\int_{\Omega}\left(1+\beta \frac{1}{\alpha+ |\nabla v|^2} \left( \nabla \cdot \frac{\nabla v}{\sqrt{\alpha+|\nabla v|^2}}\right)^2\right) \sqrt{\alpha+|\nabla v|^2}d\bx \nonumber\\
	& + \frac{1}{2\eta}\int_{\Omega} |v-f|^2d\bx.
	\label{eq.color.euler.1}
\end{align}
Divide the first integral of (\ref{eq.color.euler.1}) by $\sqrt{\alpha}$ and let $\alpha\rightarrow0$, one gets
\begin{align}
	\min_{v\in \cH^2(\Omega)} \int_{\Omega} \left(1+\beta\frac{1}{ |\nabla v|^2} \left( \nabla \cdot \frac{\nabla v}{|\nabla v|}\right)^2\right)|\nabla v|d\bx + \frac{1}{2\eta}\int_{\Omega} |v-f|^2d\bx.
	\label{eq.color.euler.2}
\end{align}
The first term in (\ref{eq.color.euler.2}) is a variant of Euler's elastica model, except the term $\left( \nabla \cdot \frac{\nabla v}{|\nabla v|}\right)^2$ is weighted by $1/|\nabla v|^2$.


\bibliographystyle{abbrv}
\bibliography{ref}
\end{document}